\documentclass[11pt]{amsart}
\usepackage[T1]{fontenc}
\usepackage{amssymb}
\usepackage{fullpage}
\usepackage[sort&compress]{natbib}
\usepackage{xcolor}
\definecolor{HalfGray}{gray}{0.55}
\definecolor{OliveGreen}{rgb}{0,.35,0}
\definecolor{webbrown}{rgb}{.6,0,0}
\definecolor{BrightViolet}{rgb}{0.5,0.2,0.8}
\definecolor{Maroon}{cmyk}{0, 0.87, 0.68, 0.32}
\definecolor{RoyalBlue}{cmyk}{1, 0.50, 0, 0.25}
\definecolor{Black}{cmyk}{0, 0, 0, 0}

\definecolor{ccccccc}{RGB}{204,204,204}
\definecolor{c808080}{RGB}{128,128,128}
\definecolor{c999999}{RGB}{153,153,153}
\definecolor{ce6e6e6}{RGB}{230,230,230}

\usepackage[unicode]{hyperref}
\hypersetup{
colorlinks=true,
linktocpage=true,
%pdfstartpage=1,
pdfstartview=FitH,
breaklinks=true,
pdfpagemode=UseNone,
pageanchor=true,
pdfpagemode=UseOutlines,
plainpages=false,
bookmarksnumbered,
bookmarksopen=false,
bookmarksopenlevel=1,
hypertexnames=true,
pdfhighlight=/O,
%hyperfootnotes=true,
%nesting=true,
%frenchlinks,
urlcolor=Maroon, linkcolor=RoyalBlue, citecolor=OliveGreen, % <--- for screen
%urlcolor=black, linkcolor=black, citecolor=black, %pagecolor=black, % <--- for printing
%pagecolor=RoyalBlue,
pdftitle={},
pdfauthor={},
pdfsubject={},
pdfkeywords={},
pdfcreator={pdfLaTeX},}
\tolerance=1000
\usepackage{pgfplots}
\usepackage{amsthm,amsmath,enumerate,bbm,tikz,tabularx,multicol,breqn}
\usetikzlibrary{matrix,arrows,calc,patterns}
\usepgfplotslibrary{groupplots,dateplot}
\usetikzlibrary{patterns,shapes.arrows}
\pgfplotsset{compat=newest}
\setlength{\multicolsep}{\parsep}
\newtheorem{theorem}{Theorem}[section]
\newtheorem{proposition}[theorem]{Proposition}

\newtheorem{lemma}[theorem]{Lemma}
\theoremstyle{definition}
\newtheorem{definition}[theorem]{Definition}

\theoremstyle{remark}
\newtheorem{remark}[theorem]{Remark}
\DeclareMathOperator*{\argmax}{arg\,max}
\DeclareMathOperator*{\argmin}{arg\,min}
\DeclareMathOperator*{\proj}{proj}
\renewcommand{\bar}{\overline}

\author{Joon Kwon}
\address{INRAE \& AgroParisTech}
\email{joon.kwon@inrae.fr}
\date{\today}
\title[Refined approachability algorithms]{Refined approachability algorithms and application to regret minimization with global costs}
\begin{document}

\begin{abstract}
  Blackwell's approachability is a framework where two players, the
Decision Maker and the Environment, play a repeated game with
vector-valued payoffs. The goal of the Decision Maker is to make the
average payoff converge to a given set called the target.  When this
is indeed possible, simple algorithms which guarantee the convergence
are known. This abstract tool was successfully used for the
construction of optimal strategies in various repeated games, but also
found several applications in online learning.  By extending an
approach proposed by \cite{abernethy2011blackwell}, we construct and
analyze a class of Follow the Regularized Leader algorithms (FTRL) for
Blackwell's approachability which are able to minimize not only the
Euclidean distance to the target set (as it is often the case in the
context of Blackwell's approachability) but a wide range of
distance-like quantities. This flexibility enables us to apply these
algorithms to closely minimize the quantity of interest in various
online learning problems. In particular, for regret minimization with
$\ell_p$ global costs, we obtain the first bounds with explicit
dependence in $p$ and the dimension $d$.

%%% Local Variables:
%%% mode: latex
%%% TeX-master: "refined-approachability"
%%% End:
\end{abstract}
\maketitle

% \setcounter{tocdepth}{1}
% \tableofcontents

\section{Introduction}
\label{sec:introduction}

One of the foundational results of game theory is von Neumann's
minimax theorem which characterizes the highest payoff that each
player of a finite zero-sum game can guarantee regardless of the
opponent's strategy.  In the seminal works of
\cite{blackwell1956analog,blackwell1954controlled}, a surprising
extension of this result was proposed in the context of repeated games
with vector-valued payoffs. The so-called Blackwell's condition
characterizes the convex sets that the player can guarantee to
asymptotically reach, regardless of the opponent's actions. In the
case of non-convex sets, this condition remains sufficient.  When the
above condition is satisfied for a given set called the \emph{target},
the original algorithm proposed by Blackwell guarantees that the
average vector-valued payoff converges to (\emph{approaches}) the
target set at rate \(O(1/\sqrt{T})\), where $T$ is the number of
rounds of the repeated play.  This topic is now called Blackwell's
approachability.

This framework was used for the construction of optimal strategies in
repeated games as in \cite{kohlberg1975optimal}, see also the survey
work by \cite{perchet2014approachability} and references therein.  Beyond
the field of game theory, this tool has been noticed by the machine
learning community and used for constructing and analyzing algorithms
for various online decision problems such as regret
minimization \citep{cesa2006prediction}, asymptotic calibration \citep{dawid1982well,foster1998asymptotic}, regret minimization with
variable stage duration \citep{mannor2008regret} or with global cost
functions \citep{even2009online}. However, one drawback of
using Blackwell's approachability is that algorithms then usually
minimize the \emph{Euclidean distance} of the average payoffs to the target
set, which is seldom the exact quantity of interest in online learning
applications. One of the main objectives of the present work is to
provide a flexible class of algorithms which are able to minimize various distance-like
quantities, and not only the Euclidean distance.

Several alternative approachability algorithms were also proposed, including 
potential-based algorithm \citep{hart2001general} which generalize the
Euclidean projection involved in Blackwell's algorithm, and response-based
algorithms \citep{bernstein2015response} which avoid the projection
altogether. Besides, an important scheme
used in several works is the conversion of regret minimization
algorithms into approachability algorithms
\citep{abernethy2011blackwell, shimkin2016online, mannor2014approachability}.

Regret minimization was introduced by \cite{hannan1957approximation}
and is a sequential decision problem where the Decision Maker aims at
minimizing the difference between its payoff and the highest payoff in
hindsight given by a constant strategy.  The link between
approachability and regret minimization was already noticed by
\cite{blackwell1954controlled} who reduced regret minimization to an
approachability problem.  \cite{hart2001general} proposed an
alternative reduction and constructed a whole family of regret
minimization algorithm using potential-based approachability
algorithms. \cite{gordon2007no} extended the potential-based approach
to a wider range of regret minimization problems, seen as
approachability problems. Conversely, regret minimizing algorithms
have been converted into approachability algorithms
\citep{gordon2007no,abernethy2011blackwell,perchet2015exponential,shimkin2016online}.

It is worth noting that modern variants of the Regret Matching
algorithm, which is a special case of potential-based approachability
algorithms \citep{hart2000simple,hart2001general}, are today the state-of-the-art online learning
algorithms for Nash equilibrium computation in large zero-sum games \citep{zinkevich2007regret,tammelin2015solving}.
\subsection{Related work}
\label{sec:related-work}
In \cite{perchet2015exponential}, the Exponential Weights Algorithm, which
is a central regret minimization algorithm, is adapted to
approachability, and the resulting algorithm minimizes the
\(\ell_\infty\) distance to the target set.

The conversion scheme presented in \cite{abernethy2011blackwell} deals
with online linear optimization algorithms which are transposed into
the approachability of convex cone target sets, and the
associated guarantee is an upper bound on the Euclidean distance to
the target set. An extension to all convex target sets is also given,
which involves the adding of a dimension. 

A closely related work is \cite{shimkin2016online} where a conversion
from online \emph{convex} optimization algorithm to approachability of
\emph{bounded} convex sets is presented, which guarantees an upper bound on the
distance to the target set measured with the Euclidean norm or possibly any
other norm.

One of the applications of approachability is the problem of regret
minimization with global costs, introduced in \cite{even2009online} and
already analyzed as an approachability problem. This problem was
further studied in \cite{rakhlin2011online,bernstein2015response}, and
in a recent paper \citep{liu2020improved}, the authors used the conversion scheme
from \cite{shimkin2016online} to construct and analyze algorithms for
this problem.

A recent paper \citep{farina2020faster} proposes an extension of
\cite{abernethy2011blackwell} by introducing predictive approchability
algorithms. The proposed construction shares similarities with the
present work but focuses on variants of Regret Matching, and only
derives upper bounds on Euclidean distances to target sets---see
\cite[Proposition 2]{farina2020faster}, whereas we consider a more
general class of quantities in Definition~\ref{def:support-function} and
Theorem~\ref{thm:approachability-mirror-descent} below.

\subsection{Contributions}
\label{sec:contributions}
\begin{itemize}
\item We consider a class of Follow the Regularized Leader algorithms
  (FTRL) which we convert from regret minimization to approachability.
  The conversion scheme we use is a refinement of
  \cite{abernethy2011blackwell}, which itself is an extension of
  \cite{gordon2007no}, and the algorithms that we obtain are capable
  of minimizing not only the Euclidean distance to the target set as
  in \cite{abernethy2011blackwell}, but the distance measured by an
  arbitrary norm, or even more general distance-like quantities. This
  flexibility will prove itself useful in the construction of tailored
  algorithms with tight bounds for various problems.
\item For the problem of regret minimization with global cost, we
construct algorithms for arbitrary norm cost functions and obtain
novel guarantees. In particular, for \(\ell_p\) norm cost functions
(\(p>1\)), we obtain the first explicit
regret bounds that depend on \(p\) and the dimension $d$, and which
recovers, in the special case $p=\infty$ the best known
$O(\sqrt{(\log d)/T})$ bound.
% \item For the problems of internal regret/swap regret minimization and online
% combinatorial optimization, we demonstrate that our approachability
% algorithms are capable of minimizing the exact quantities of
% interest and recovering the optimal guaranties, thus illustrating the
% flexibility and generality of our class of algorithms. The reduction
% of online combinatorial optimization to approachability that we
% present is novel.
% \item We prove that Blackwell's original algorithm is a special case of
% our class of algorithms.
\end{itemize}
\subsection{Summary}
In Section~\ref{sec:appr-conv-cones}, we present a model of
approachability with target sets which are closed convex cones.  In
Section~\ref{sec:class-foll-regul}, we define a class of FTRL
algorithms and derive a general guarantee.  In
Section~\ref{sec:regr-minim-with}, we recall the problem of regret
minimization with global cost functions and relate it to our
approachability framework and FTRL algorithms. In the special case of
\(\ell_p\) norm cost functions, we derive regret bounds with explicit
dependence in $d$ and $p$.  In Appendix~\ref{sec:repr-blackw-algor},
we recall Blackwell's algorithm and prove that it belongs to the class
of algorithms defined in Section~\ref{sec:class-foll-regul}.  In
Appendix~\ref{sec:model-with-mixed}, we present a variant of the model
from Section~\ref{sec:appr-conv-cones}, where the Decision Maker may
choose its actions at random from a finite set. We then define
corresponding FTRL algorithms and provide guarantees in expectation,
with high probability and almost-surely.  In
Appendices~\ref{sec:online-comb-optim}
and~\ref{sec:internal-swap-regret}, we recall the problems of online
combinatorial optimization and internal/swap regret respectively,
their reductions to approachability problems, and demonstrate that a
carefully chosen FTRL algorithm recover the known optimal bounds.
\subsection{Notation}
$\mathbb{R}_+^*$ denotes the set of positive real numbers.
\(d\geqslant 2\) will always denote an integer.  All vector spaces will
be of finite dimension.  For \(p\in [1,+\infty]\), we denote
\(\left\|\,\cdot\,\right\|_p\) the \(\ell_p\) norm, meaning for
\(x\in \mathbb{R}^d\), \(\left\| x \right\|_p=\left( \sum_{i=1}^d\left|
x_i \right|^p \right)^{1/p}\) for \(p<+\infty\) and \(\left\| x
\right\|_{\infty}=\max_{1\leqslant i\leqslant d}\left| x_i \right|\).  For a given norm
\(\left\|\,\cdot\,\right\|_{}\) in a vector space, the dual norm
\(\left\|\,\cdot\,\right\|_{*}\) is defined by
\(\left\| y \right\|_{*}=\sup_{\left\| x \right\|_{}\leqslant 1}\left|
\left< y , x \right> \right|\). 
Denote \(\Delta_d\) the unit simplex of \(\mathbb{R}^d\): \(\Delta_d=\left\{ x\in \mathbb{R}_+^d,\ \sum_{i=1}^dx_i=1 \right\}\).
For a sequence
\((r_t)_{t\geqslant 1}\) of vectors, we denote
\(\bar{r}_T=\frac{1}{T}\sum_{t=1}^Tr_t\) the average of the \(T\) first
terms (\(T\geqslant 1\)).  If \(\mathcal{X}\) a subset of a vector space,
\(I_{\mathcal{X}}\) denotes the convex indicator of \(\mathcal{X}\), in other
words: \(I_{\mathcal{X}}(x)=0\) if \(x\in \mathcal{X}\) and
\(I_{\mathcal{X}}(x)=+\infty\) otherwise.  If a vector \(x_t\in
\mathbb{R}^d\) is denoted with an index (\(t\) in this example), its
components are denoted with an additional index as follows:
\(x_t=(x_{ti})_{1\leqslant i\leqslant d}\).
%%% Local Variables:
%%% mode: latex
%%% TeX-master: "refined-approachability"
%%% End:
\section{Approachability of convex cones}
\label{sec:appr-conv-cones}
We introduce a simple repeated game with vector-valued payoffs between
two players (the Decision Maker and the Environment) with a closed
convex cone target set for the Decision Maker. We then state a few
properties about closed convex cones and support functions.
\subsection{Model}
\label{sec:model}
Let \(\mathcal{V}\) be a finite-dimensional vector space and denote \(\mathcal{V}^*\)
its dual. The latter will be the \emph{payoff space}.
Let \(\mathcal{A}\) and \(\mathcal{B}\) be the \emph{action sets} for the
Decision Maker and the Environment respectively, about
which we assume no special structure.
Let \(r\colon \mathcal{A}\times \mathcal{B}\to \mathcal{V}^*\) be a
vector-valued \emph{payoff function}.
The game is played as follows. At time \(t\geqslant 1\), 
\begin{itemize}
\item the Decision Maker chooses action \(a_t\in \mathcal{A}\); \item
the Environment chooses action \(b_t\in \mathcal{B}\); \item the
Decision Maker observes \emph{vector payoff} \(r_t:=r(a_t,b_t)\in
\mathcal{V}^*\).  \end{itemize} % Formally, an \emph{algorithm} (or strategy)
% \(\sigma\) for the Decision Maker is a sequence of maps
% \(\sigma=(\sigma_t)_{t\geqslant 1}\) where \(\sigma_t:(\mathcal{A}\times
% \mathcal{V}^*)^{t-1}\to \mathcal{A}\) so that for a given algorithm
% \(\sigma\) and a given sequence of actions \((b_t)_{t\geqslant 1}\) chosen
% by the Environment, action $a_t$ is given by \[
% a_t=\sigma_t(a_1,r_1,\dots,a_{t-1},r_{t-1}),\quad t\geqslant 1. \]

We allow the Environment to be \emph{adversarial}\footnote{In other words,
action \(b_t\) chosen by the
Environment may depend on anything that has happened before it is
chosen, including \(a_t\).}. 
% , and we may even assume that the Environment
% is aware of the algorithm used by the Decision Maker.

The problem involves a \emph{target set} \(\mathcal{C}\subset \mathcal{V}^*\)
which we assume to be a closed convex cone\footnote{For the case where target set is a closed convex set but
not a cone, we refer to~\cite[Section 4 \& Lemma 14]{abernethy2011blackwell} where a
conversion scheme into an auxiliary problem where the target is a cone is presented.}.
 The goal is to construct algorithms
which guarantee that the average payoff
\(\bar{r}_T:=\frac{1}{T}\sum_{t=1}^Tr_t\) is \emph{close} to the target
\(\mathcal{C}\) in a sense that will be made precise.

The above model does not allow the Decision Maker to choose
actions at random. Such a model is presented in
Appendix~\ref{sec:model-with-mixed}.
\subsection{Generator of a closed convex cone}
\label{sec:generator}
We now introduce a key notion of this work which will be used in
Section~\ref{sec:support-functions} to define the class of quantities that will be
minimized by the algorithms defined in Section~\ref{sec:defin-analys-algor}.
Definitions and properties about closed convex cones are
gathered in Appendix~\ref{sec:prop-clos-conv}.

\begin{definition}
\label{def:generators}
Let $\mathcal{C}$ be a closed convex cone. A set $\mathcal{X}$ is a \emph{generator} of $\mathcal{C}$
if it is convex, compact and if $\mathbb{R}_+\mathcal{X}=\mathcal{C}$.
\end{definition}
% \begin{figure}
%   \centering
% \begin{tikzpicture}[y=0.80pt, x=0.80pt, yscale=-1.000000, xscale=1.000000, inner sep=0pt, outer sep=0pt,scale=2]
%   \path[draw=black,fill=ccccccc,line join=miter,line cap=butt,even odd rule,line
%     width=0.800pt] (0.0000,987.3622) -- (0.0000,1052.3622) -- (50.0000,1002.3622);
%   \path[draw=black,fill=c999999,opacity=0.800,line width=0.800pt]
%     (0.0000,1012.3622)arc(270.000:315.000:40.000) -- (0.0000,1052.3622) -- cycle;

%     \draw (10,1030) node {$\mathcal{C}\cap \mathcal{B}$};
%     \draw (30.0000,1002.3622) node {$\mathcal{C}$};
%     \draw(0.0000,1052.3622) node {$\bullet$} node[left=5pt] {$0$};
% \end{tikzpicture}
%   \caption{$\mathcal{C}\cap\mathcal{B}$ is a generator of
%     $\mathcal{C}$}
%   \label{fig:generator}
% \end{figure}

The following proposition gives three examples of generators. The
second example demonstrates that a generator always exists. The proof
is given in Appendix~\ref{sec:proof-generator-examples}.
\begin{proposition}
\label{prop:generator-examples}
Let \(\mathcal{W}\) be the ambient finite-dimensional vector space.
\begin{enumerate}[(i)]
\item\label{item:orthant} If $\mathcal{W}=\mathcal{W}^*=\mathbb{R}^d$, the negative orthant $\mathbb{R}_-^d$ is a closed convex cone and $(\mathbb{R}_-^d)^\circ =\mathbb{R}_+^d$.
Moreover, $\Delta_d$ is a generator of $\mathbb{R}_+^d$.
\item\label{item:generator-always} Let $\mathcal{C}\subset \mathcal{W}$ be a closed convex cone, $\left\|\,\cdot\,\right\|$ a norm on $\mathcal{W}$, and $\mathcal{B}$ the closed unit ball with respect to $\left\|\,\cdot\,\right\|$. Then, $\mathcal{B}\cap \mathcal{C}$ is a generator of $\mathcal{C}$.
\item\label{item:Z-generator-of-what} If $\mathcal{X}$ is a nonempty convex compact subset of $\mathcal{W}$, then $\mathcal{X}$ is a generator of $\mathcal{X}^{\circ \circ}=\mathbb{R}_+\mathcal{X}$.
\end{enumerate}
\end{proposition}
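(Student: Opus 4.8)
The three items are essentially independent; (i) and (ii) are short computations and the real content is (iii), which I would deduce from the bipolar characterization for cones recalled in Appendix~\ref{sec:prop-clos-conv}.

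For (i): $\mathbb{R}_-^d$ is visibly a closed convex cone, and its polar is obtained directly from the definition — $y\in(\mathbb{R}_-^d)^\circ$ means $\langle y,x\rangle\le 0$ for every $x$ with all coordinates $\le 0$; testing $x=-e_i$ forces $y_i\ge 0$, and conversely any $y\ge 0$ satisfies $\langle y,x\rangle=\sum_i y_ix_i\le 0$ when $x\le 0$, so $(\mathbb{R}_-^d)^\circ=\mathbb{R}_+^d$. Convexity and compactness of $\Delta_d$ are standard, and $\mathbb{R}_+\Delta_d=\mathbb{R}_+^d$ follows by normalization: $0=0\cdot v$ for any $v\in\Delta_d$, any nonzero $x\in\mathbb{R}_+^d$ equals $\lambda(x/\lambda)$ with $\lambda:=\sum_ix_i>0$ and $x/\lambda\in\Delta_d$, and the reverse inclusion holds since $\Delta_d\subset\mathbb{R}_+^d$ and $\mathbb{R}_+^d$ is a cone.

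For (ii): $\mathcal{B}\cap\mathcal{C}$ is convex and closed as an intersection of such sets, bounded because it lies in $\mathcal{B}$ (hence compact), and nonempty since $0\in\mathcal{C}$. The identity $\mathbb{R}_+(\mathcal{B}\cap\mathcal{C})=\mathcal{C}$ is again normalization: $\subseteq$ because $\mathcal{B}\cap\mathcal{C}\subset\mathcal{C}$ and $\mathcal{C}$ is a cone; and for $\supseteq$, a nonzero $x\in\mathcal{C}$ equals $\|x\|\cdot(x/\|x\|)$ with $x/\|x\|\in\mathcal{B}$ and, since $\mathcal{C}$ is a cone, $x/\|x\|\in\mathcal{C}$, while $x=0$ lies in $\mathcal{B}\cap\mathcal{C}$.

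For (iii): recall that $\mathcal{X}^{\circ\circ}$ is the smallest closed convex cone containing $\mathcal{X}$. I would first check that $\mathbb{R}_+\mathcal{X}$ is a convex cone containing $\mathcal{X}$: it is a cone by construction, contains $\mathcal{X}=1\cdot\mathcal{X}$, and is convex because for $\alpha,\beta\ge 0$ with $\alpha+\beta>0$ one has $\alpha x+\beta y=(\alpha+\beta)\left(\tfrac{\alpha}{\alpha+\beta}x+\tfrac{\beta}{\alpha+\beta}y\right)\in\mathbb{R}_+\mathcal{X}$ by convexity of $\mathcal{X}$ (the degenerate case $\alpha=\beta=0$ giving $0\in\mathbb{R}_+\mathcal{X}$). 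Hence $\mathbb{R}_+\mathcal{X}$ lies inside every closed convex cone containing $\mathcal{X}$, so $\mathbb{R}_+\mathcal{X}\subseteq\mathcal{X}^{\circ\circ}$, while minimality gives $\mathcal{X}^{\circ\circ}\subseteq\mathbb{R}_+\mathcal{X}$ \emph{provided $\mathbb{R}_+\mathcal{X}$ is closed}; granting that, $\mathcal{X}^{\circ\circ}=\mathbb{R}_+\mathcal{X}$ and $\mathcal{X}$ is a generator of it by Definition~\ref{def:generators}. The closedness of $\mathbb{R}_+\mathcal{X}$ is the one nontrivial step, and the place where compactness of $\mathcal{X}$ (together with its separation from the origin) is used: if $\lambda_nx_n\to z$ with $\lambda_n\ge 0$ and $x_n\in\mathcal{X}$, the case $z=0$ is immediate, and for $z\neq 0$ one uses $\lambda_n\|x_n\|=\|\lambda_nx_n\|\to\|z\|\in(0,\infty)$ together with $0<\inf_{\mathcal{X}}\|\cdot\|\le\sup_{\mathcal{X}}\|\cdot\|<\infty$ to confine $(\lambda_n)$ to a compact interval, then extracts $\lambda_{n_k}\to\lambda$ and $x_{n_k}\to x\in\mathcal{X}$ and concludes $z=\lambda x\in\mathbb{R}_+\mathcal{X}$. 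I expect this closedness argument to be the main obstacle; everything else is bookkeeping.
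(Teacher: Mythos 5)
Your treatment of (i) and (ii) matches the paper's: item (i) is dismissed there as easy, and for (ii) the paper gives exactly your normalization argument ($x/\left\|x\right\|\in\mathcal{B}\cap\mathcal{C}$ for $x\in\mathcal{C}\setminus\{0\}$, plus nonemptiness, convexity and compactness of the intersection). The divergence is in (iii). The paper simply invokes Proposition~\ref{prop:polar-smallest}, whose item (ii) asserts $\mathcal{A}^{\circ\circ}=\mathbb{R}_+\mathcal{A}$ for closed convex $\mathcal{A}$ as a consequence of the bipolar theorem, whereas you re-derive this from the ``smallest closed convex cone'' characterization together with a direct proof that $\mathbb{R}_+\mathcal{X}$ is a closed convex cone. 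Your route is more self-contained and has the merit of making explicit where compactness of $\mathcal{X}$ is actually needed.

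There is, however, a gap in your closedness step as written: you use $0<\inf_{\mathcal{X}}\left\|\,\cdot\,\right\|$, i.e.\ you silently assume that $\mathcal{X}$ is separated from the origin, which is not among the hypotheses (the proposition only asks that $\mathcal{X}$ be nonempty, convex and compact). If $0\in\operatorname{int}\mathcal{X}$ the conclusion still holds trivially since $\mathbb{R}_+\mathcal{X}=\mathcal{W}$, but if $0$ lies on the boundary of $\mathcal{X}$ your confinement of $(\lambda_n)$ to a compact interval fails --- and in fact so can the statement itself: for the disk $\mathcal{X}=\{(x,y)\in\mathbb{R}^2:(x-1)^2+y^2\leqslant 1\}$ one has $\mathbb{R}_+\mathcal{X}=\{(x,y):x>0\}\cup\{0\}$, which is not closed, while $\mathcal{X}^{\circ\circ}$ is the closed half-plane $\{x\geqslant 0\}$. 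So the identity $\mathcal{X}^{\circ\circ}=\mathbb{R}_+\mathcal{X}$ genuinely requires a caveat such as $0\notin\mathcal{X}$ (or $0\in\operatorname{int}\mathcal{X}$); the paper's one-line proof inherits the same issue through its citation of Proposition~\ref{prop:polar-smallest}. In every place the paper actually uses (iii) (e.g.\ $\mathcal{X}=A(\Delta(P))$ or $\mathcal{X}=\Delta(\Phi)$) the set is bounded away from the origin, so your argument covers all the cases that matter --- but the extra hypothesis should be stated rather than used tacitly.
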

\subsection{Support functions}
\label{sec:support-functions}
We now present support functions which will be used in Section~\ref{sec:defin-analys-algor}
to express the quantities that will be minimized by our algorithms.
\begin{definition}
\label{def:support-function}
For a nonempty subset $\mathcal{X}\subset \mathcal{V}$, 
the application $I_{\mathcal{X}}^*:\mathcal{V}^*\to \mathbb{R}\cup\{+\infty\}$ defined by
\[ I_{\mathcal{X}}^*(y)=\sup_{x\in \mathcal{X}}\left< y , x \right> ,\quad y\in \mathcal{V}^*, \]
is called the \emph{support function} of $\mathcal{X}$.
\end{definition}
The support function can be written as the Legendre--Fenchel transform
of the indicator function of the set \(\mathcal{X}\). It is therefore convex. Moreover,
in the case where \(\mathcal{X}\) is a generator of the polar cone
\(\mathcal{C}^\circ\) of some closed convex cone \(\mathcal{C}\subset \mathcal{V}^*\), the properties of
\(I_{\mathcal{X}}^*\) make it suitable for measuring how far a point of \(\mathcal{V}^*\) is
from \(\mathcal{C}\). Indeed, it is easy to check that
\(I_{\mathcal{X}}^*\) is then real-valued, continuous, and that for all points \(y\in \mathcal{V}^*\),
\[ I_{\mathcal{X}}^*(y)\leqslant 0\quad \Longleftrightarrow \quad y\in \mathcal{C}. \]
The following proposition demonstrates that the distance to a closed convex cone
\(\mathcal{C}\) with respect to an arbitrary norm can be written as a
support function. It is an is an extension of Lemma 13
in~\cite{abernethy2011blackwell} to an arbitrary norm. The proof is
given in Appendix~\ref{sec:proof-support}.
\begin{proposition}
\label{prop:distance-support}
Let $\mathcal{C}$ be a closed convex cone in $\mathcal{V}^*$, $\left\|\,\cdot\,\right\|_{}$ a norm on $\mathcal{V}$
and $\left\|\,\cdot\,\right\|_{*}$ its dual norm on $\mathcal{V}^*$. Then,
\[ \inf_{y'\in \mathcal{C}}\left\| y'-y \right\|_{*}=I_{\mathcal{B}\cap \mathcal{C}^\circ }^*(y),\quad y\in \mathcal{V}^*, \]
where $\mathcal{B}$ is the closed unit ball for $\left\|\,\cdot\,\right\|_{}$.
\end{proposition}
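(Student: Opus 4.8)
We need to show that for a closed convex cone $\mathcal{C} \subset \mathcal{V}^*$, a norm $\|\cdot\|$ on $\mathcal{V}$ with dual norm $\|\cdot\|_*$, and $\mathcal{B}$ the closed unit ball for $\|\cdot\|$, we have
\[\inf_{y' \in \mathcal{C}} \|y' - y\|_* = I^*_{\mathcal{B} \cap \mathcal{C}^\circ}(y) = \sup_{x \in \mathcal{B} \cap \mathcal{C}^\circ} \langle y, x\rangle.\]

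The key idea is to recognize the left-hand side as a Fenchel-type conjugate and compute it. Here's the approach.

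First, write $\inf_{y' \in \mathcal{C}} \|y' - y\|_* = \inf_{z \in \mathcal{V}^*}\left(\|z - y\|_* + I_{\mathcal{C}}(-(z-y)+z)\right)$ — more cleanly, set $f(y) = \inf_{y'\in\mathcal{C}}\|y-y'\|_*$, which is the infimal convolution $(\|\cdot\|_* \,\square\, I_{\mathcal{C}})(y)$ (using that $\mathcal{C} = -\mathcal{C}$ need not hold, but infimal convolution with $I_\mathcal{C}$ at $y$ gives $\inf_{y'}\|y-y'\|_* + I_\mathcal{C}(y')$, exactly our quantity). The conjugate of an infimal convolution is the sum of conjugates. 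The conjugate of $\|\cdot\|_*$ (as a function on $\mathcal{V}^*$) is $I_{\mathcal{B}}$, the indicator of the closed unit ball of $\|\cdot\|$, viewed in $\mathcal{V}$; and the conjugate of $I_{\mathcal{C}}$ is $I_{\mathcal{C}^\circ}$ since $\mathcal{C}$ is a closed convex cone (its polar cone, with the sign convention $\mathcal{C}^\circ = \{x : \langle y, x\rangle \le 0 \ \forall y \in \mathcal{C}\}$, is exactly the effective domain-type conjugate). Hence $f^* = I_{\mathcal{B}} + I_{\mathcal{C}^\circ} = I_{\mathcal{B} \cap \mathcal{C}^\circ}$. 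Then, since $f$ is convex, finite-valued (by properness of $\mathcal{C}$) and continuous, it equals its biconjugate, so $f(y) = f^{**}(y) = I^*_{\mathcal{B}\cap\mathcal{C}^\circ}(y) = \sup_{x \in \mathcal{B}\cap\mathcal{C}^\circ}\langle y, x\rangle$, which is precisely the claim.

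Alternatively — and perhaps this is cleaner to present — one can argue directly by a minimax/duality computation: write $\|y - y'\|_* = \sup_{x \in \mathcal{B}} \langle y - y', x\rangle$, so that $\inf_{y'\in\mathcal{C}}\|y-y'\|_* = \inf_{y'\in\mathcal{C}}\sup_{x\in\mathcal{B}}\langle y-y',x\rangle$. Exchange $\inf$ and $\sup$ (justified by Sion's minimax theorem, using that $\mathcal{B}$ is convex compact, $\mathcal{C}$ convex, and the payoff is bilinear) to get $\sup_{x\in\mathcal{B}}\inf_{y'\in\mathcal{C}}\langle y-y',x\rangle = \sup_{x\in\mathcal{B}}\left(\langle y,x\rangle - \sup_{y'\in\mathcal{C}}\langle y',x\rangle\right)$. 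Now $\sup_{y'\in\mathcal{C}}\langle y',x\rangle$ is $0$ if $x \in \mathcal{C}^\circ$ and $+\infty$ otherwise (since $\mathcal{C}$ is a cone), so the outer supremum effectively restricts to $x \in \mathcal{B}\cap\mathcal{C}^\circ$, leaving $\sup_{x\in\mathcal{B}\cap\mathcal{C}^\circ}\langle y,x\rangle = I^*_{\mathcal{B}\cap\mathcal{C}^\circ}(y)$.

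The main obstacle is the justification of the minimax exchange (or, in the conjugate-calculus version, the hypotheses under which the conjugate of the infimal convolution is the sum of conjugates, and that $f = f^{**}$): one must ensure $\mathcal{C}$ is nonempty (it contains $0$), that $\mathcal{B}$ is compact (finite dimension), and handle the fact that $\mathcal{C}$ is unbounded so Sion's theorem is applied with the compactness on the $\mathcal{B}$ side only — which is permitted. A minor additional point is verifying the sign convention for $\mathcal{C}^\circ$ matches the one used elsewhere in the paper (the polar cone as defined in Appendix~\ref{sec:prop-clos-conv}), so that $\mathcal{B} \cap \mathcal{C}^\circ$ here is the generator of $\mathcal{C}^\circ$ referred to in Proposition~\ref{prop:generator-examples}(\ref{item:generator-always}). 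I would present the direct minimax argument, as it is self-contained and makes the role of each hypothesis transparent.
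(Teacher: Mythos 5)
Your proposal is correct, and the ``direct minimax'' argument you say you would present is exactly the paper's proof: write the dual norm as a supremum over $\mathcal{B}$, swap $\inf$ and $\sup$ via Sion's theorem (compactness on the $\mathcal{B}$ side), and observe that $\sup_{y'\in\mathcal{C}}\langle y',x\rangle$ equals $0$ on $\mathcal{C}^\circ$ and $+\infty$ elsewhere, which restricts the supremum to $\mathcal{B}\cap\mathcal{C}^\circ$. The infimal-convolution/biconjugate route you sketch first is a valid alternative, but since you opt for the minimax presentation, your proof coincides with the paper's.
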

\subsection{Blackwell's condition}
\label{sec:blackwells-condition}
In the case of convex sets, Blackwell's condition~\citep{blackwell1956analog} is a characterization of the target
sets to which the Decision Maker can guarantee a convergence. We here
present the special case of convex cones, which will be used in the
construction and the analysis of the algorithms in Section~\ref{sec:defin-analys-algor}.
\begin{definition}[Blackwell's condition for convex cones]
\label{def:B-set}
A closed convex cone $\mathcal{C}$ of the payoff space $\mathcal{V}^*$ is a \emph{B-set} for the game $(\mathcal{A},\mathcal{B},r)$ if
\[ \forall x\in \mathcal{C}^\circ ,\ \exists\,  a(x)\in \mathcal{A},\ \forall b\in \mathcal{B},\quad \left< r(a(x),b) , x \right> \leqslant 0. \]
Such an application $a:\mathcal{C}^\circ \to \mathcal{A}$ is called a \emph{$(\mathcal{A},\mathcal{B},r,\mathcal{C})$-oracle}.
\end{definition}
The geometric interpretation of this condition is that for any given
hyperplane containing the target, the Decision Maker has an action
which forces the payoff vector to belong the same side of the
hyperplane as the target, regardless of the Environment's action.

In some situations, it is easier to establish the following equivalent
dual condition. The proof is given in Appendix~\ref{sec:proof-dual-condition} for completeness.
\begin{proposition}[Blackwell's dual condition]
  \label{prop:dual-condition}
  We assume that $\mathcal{A}$, $\mathcal{B}$ are convex sets of
  finite dimensional vectors spaces, such that $\mathcal{A}$ is compact,
  and that the payoff function
  $r\colon \mathcal{A}\times \mathcal{B}\to \mathcal{V}^*$ is
  bi-affine.  Then, a closed convex cone $\mathcal{C}$ of the payoff space
  $\mathcal{V}^*$ is a B-set for the game
  $(\mathcal{A},\mathcal{B},r)$ if, and only if
\[ \forall b\in \mathcal{B},\ \exists a\in \mathcal{A},\quad r(a,b)\in \mathcal{C}.  \]
\end{proposition}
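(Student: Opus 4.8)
The plan is to prove the equivalence in both directions, using a minimax argument for the non-trivial implication. The direction ``B-set $\Rightarrow$ dual condition'' is the easy one, and I would handle it first. Suppose $\mathcal{C}$ is a B-set with oracle $a\colon\mathcal{C}^\circ\to\mathcal{A}$. Fix $b\in\mathcal{B}$. I want to produce $a\in\mathcal{A}$ with $r(a,b)\in\mathcal{C}$. Since $\mathcal{C}$ is a closed convex cone, by the bipolar theorem (Appendix on closed convex cones) we have $\mathcal{C}=\mathcal{C}^{\circ\circ}=\{y\in\mathcal{V}^*:\langle y,x\rangle\leq 0\ \text{for all }x\in\mathcal{C}^\circ\}$. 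So it suffices to find a single $a\in\mathcal{A}$ such that $\langle r(a,b),x\rangle\leq 0$ for every $x\in\mathcal{C}^\circ$ simultaneously. The oracle gives, for each $x$, an action $a(x)$ with $\langle r(a(x),b),x\rangle\leq 0$, but a priori depending on $x$; the uniformization over $x$ is exactly what the minimax theorem will supply, so I fold this into the main argument below rather than treating it separately.

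For the substantive direction, I would set up the quantity
\[
  M \;=\; \sup_{b\in\mathcal{B}}\ \inf_{a\in\mathcal{A}}\ \sup_{x\in\mathcal{C}^\circ,\ \|x\|\leq 1}\ \bigl\langle r(a,b),x\bigr\rangle ,
\]
where $\|\cdot\|$ is any fixed norm on the relevant space, so that $\mathcal{C}^\circ\cap\{\|x\|\leq 1\}$ is convex and compact (and is a generator of $\mathcal{C}^\circ$ by Proposition~\ref{prop:generator-examples}(ii)). Writing $f(a,b,x)=\langle r(a,b),x\rangle$, the dual condition ``$\forall b\,\exists a:\ r(a,b)\in\mathcal{C}$'' says, again by the bipolar characterization of $\mathcal{C}$, precisely that $M\leq 0$; while the B-set condition says $\inf_{a}\sup_{x}f(a,b,x)$ can be made $\leq 0$ for the ``worst'' mixed choice of $x$, which I will also show is equivalent to $M\le 0$ after swapping an inf and a sup. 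The key step is therefore a minimax interchange: $f$ is affine in $a$ (since $r$ is bi-affine and the pairing is linear), affine — hence concave — in $x$, $\mathcal{A}$ is convex and compact, and $\mathcal{C}^\circ\cap\{\|x\|\le 1\}$ is convex and compact, so Sion's minimax theorem gives
\[
  \inf_{a\in\mathcal{A}}\ \sup_{x}\ f(a,b,x)
  \;=\;
  \sup_{x}\ \inf_{a\in\mathcal{A}}\ f(a,b,x)
\]
for each fixed $b$.

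From this interchange the argument closes quickly. Assume the dual condition. Fix $x\in\mathcal{C}^\circ$; I must exhibit $a(x)\in\mathcal{A}$ with $\langle r(a(x),b),x\rangle\le 0$ for all $b$. For each $b$, the dual condition yields some $a$ with $r(a,b)\in\mathcal{C}$, hence $\langle r(a,b),x\rangle\le 0$ because $x\in\mathcal{C}^\circ$; thus $\inf_{a}f(a,b,x)\le 0$ for every $b$, i.e. $\sup_b\inf_a f(a,b,x)\le 0$. Applying Sion to interchange inf and sup over the compact convex sets $\mathcal{A}$ and $\mathcal{B}$ (restricting $b$ to a compact convex subset if needed — see the remark below), we get $\inf_a\sup_b f(a,b,x)\le 0$, and since $\mathcal{A}$ is compact and $a\mapsto\sup_b f(a,b,x)$ is lower semicontinuous (a sup of continuous affine functions), the infimum is attained at some $a(x)\in\mathcal{A}$ with $\sup_b\langle r(a(x),b),x\rangle\le 0$. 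This $a(x)$ is the desired value of the oracle, so $\mathcal{C}$ is a B-set. The converse implication runs the same chain in reverse: a B-set gives $\inf_a\sup_b f(a,b,x)\le 0$ for each $x$, one interchanges to $\sup_b\inf_a f(a,b,x)\le 0$, and since for fixed $b$ the set $\{a:\langle r(a,b),x\rangle\le 0\ \forall x\in\mathcal{C}^\circ\}=\{a:r(a,b)\in\mathcal{C}\}$ is exactly where the inner infimum sits, nonemptiness of this set follows (using $\mathcal{A}$ compact and the map affine), yielding the dual condition.

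The main obstacle — and the point that needs care — is the minimax interchange over $b$, since $\mathcal{B}$ is only assumed convex, not compact: Sion's theorem requires compactness on at least one side. The clean fix is that $x\in\mathcal{C}^\circ$ is fixed when we do this interchange, and we only ever need the value of $f(a,b,x)$ at a single point witnessing the dual condition, so it is enough to work with the convex hull of finitely many relevant $b$'s, which is compact; alternatively, one observes that $b\mapsto\inf_a f(a,b,x)$ is concave and one invokes a version of the minimax theorem valid for one non-compact set together with a coercivity/recession-cone argument. I would present the finite-subset reduction, as it keeps the statement self-contained and avoids extra hypotheses on $\mathcal{B}$. All remaining verifications (bi-affinity of $r$ making $f$ affine in each slot, continuity in $x$, the bipolar identity $\mathcal{C}^{\circ\circ}=\mathcal{C}$) are routine and cited from the appendix.
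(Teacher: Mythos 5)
Your proposal is correct and follows essentially the same route as the paper: both conditions are rewritten as a sup--inf--sup of the bi-affine pairing $\left< r(a,b) , x \right>$ being nonpositive (using the bipolar identity $\mathcal{C}^{\circ\circ}=\mathcal{C}$), and Sion's minimax theorem, resting on the compactness of $\mathcal{A}$, is used to interchange the quantifiers. Your worry about the non-compactness of $\mathcal{B}$ is unnecessary---Sion's theorem requires compactness of only one of the two sets, and here that role is played by $\mathcal{A}$---so the finite-subset reduction, while valid, can be dropped.
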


%%% Local Variables:
%%% mode: latex
%%% TeX-master: "refined-approachability"
%%% End:
\section{A class of FTRL algorithms}
\label{sec:class-foll-regul}
We define a class of Follow the Regularized Leader algorithms (FTRL)
which are transposed from regret minimization, and which 
 guarantee, when the target is a B-set, that the
average payoff converges to the target set, the convergence being
measured in a sense that will be made precise.
\subsection{Regularizers}
\label{sec:regularizers}
We first introduce regularizers functions and the notion of strong convexity needed for the
definition and the analysis of FTRL algorithms~\citep{shalev2007online,shalev2011online,bubeck2011introduction}, which
are also known as \emph{dual averaging}~\citep{nesterov2009primal} in the context of optimization. These
are classic: basic properties, proofs and important examples are recalled in Appendix~\ref{sec:proofs-prop-regul}. Again,
\(\mathcal{V}\) and \(\mathcal{V}^*\) are finite-dimensional vectors spaces and
\(\mathcal{X}\) is a nonempty convex compact subset of \(\mathcal{V}\).
We recall that the \emph{domain} \(\operatorname{dom}h\) of a function
\(h:\mathcal{V}\to \mathbb{R}\cup\{+\infty\}\) is the set of points where it has finite values.
\begin{definition}
A convex function $h:\mathcal{V}\to \mathbb{R}\cup\{+\infty\}$ is a \emph{regularizer} on $\mathcal{X}$
if it is strictly convex, lower semicontinuous, and has $\mathcal{X}$ as domain.
\end{definition}

\begin{definition}
\label{def:strong-convexity}
Let $h:\mathcal{V}\to \mathbb{R}\cup\{+\infty\}$ be a function, $\left\|\,\cdot\,\right\|$ a norm on $\mathcal{V}$, and $K>0$. $h$ is $K$-strongly convex
with respect to $\left\|\,\cdot\,\right\|_{}$ if for all $x,x'\in \mathcal{V}$ and $\lambda\in [0,1]$,
\begin{equation}
\label{eq:definition-strong-convexity}
h(\lambda x+(1-\lambda)x')\leqslant \lambda h(x)+(1-\lambda)h(x')-\frac{K\lambda(1-\lambda)}{2}\left\| x'-x \right\|_{}^2.
\end{equation}
\end{definition}
\subsection{Definition and analysis of the algorithm}
\label{sec:defin-analys-algor}
We now construct the FTRL algorithms for the model
introduced in Section~\ref{sec:model} and establish guarantees.

Let \(\mathcal{C}\) be a B-set for the game \((\mathcal{A},\mathcal{B},r)\) and \(a:\mathcal{C}^\circ \to \mathcal{A}\) a \((\mathcal{A},\mathcal{B},r,\mathcal{C})\)-oracle.
Let \(\mathcal{X}\subset \mathcal{V}\) be a generator of \(\mathcal{C}^\circ\),
\(h\) a regularizer on \(\mathcal{X}\), and \((\eta_t)_{t\geqslant 1}\) a positive
sequence of parameters.
The associated algorithm is then defined for \(t\geqslant 1\) as:
\begin{align*}
\text{compute}\quad x_t&=\argmax_{x\in \mathcal{X}}\left\{ \left< \eta_{t-1}\sum_{s=1}^{t-1}r_s ,x \right> -h(x)  \right\}\\
\text{compute}\quad a_t&=a\left( x_t \right)\\
\text{observe}\quad r_t&=r(a_t,b_t),
\end{align*}
where the first line is well-defined thanks to the basic properties of regularizers
gathered in Proposition~\ref{prop:lengendre}. We prove in
Appendix~\ref{sec:repr-blackw-algor} that Blackwell's original
algorithm belongs to this class.

The above definition of \(x_t\) can be interpreted as the action
played by a FTRL algorithm in an online linear
optimization problem with action set \(\mathcal{X}\) and payoff vectors
\((r_t)_{t\geqslant 1}\). We state in the following lemma the classical
\emph{regret bound} guaranteed by such an algorithm~\citep{shalev2007online,shalev2011online,bubeck2011introduction}. The proof is
given in Appendix~\ref{sec:proof-regret-bound} for completeness. This regret bound will then be
\emph{converted} in Theorem~\ref{thm:approachability-mirror-descent} into an upper bound on
\(I_{\mathcal{X}}^*(\bar{r}_T)\), thus providing a guarantee for the
approachability game. This conversion is an extension of the scheme
introduced in~\cite{abernethy2011blackwell},
which gives
approachability algorithms which minimize the Euclidean distance of
the average payoff to the target set. Our approach is more general
as it allows, by the choice of the generator \(\mathcal{X}\),
to minimize a whole class of distance-like quantities.

The conversion is here applied to FTRL algorithms,
but could have been applied to any online linear optimization
algorithm.

In a recent paper \citep{farina2020faster}, the authors also propose a similar
extension of \cite{abernethy2011blackwell} which is however less general, as
they only consider generators which contain
$\mathcal{C}^\circ \cap \mathcal{B}_2$, where $\mathcal{B}_2$ is the
Euclidean ball.

\begin{lemma}[Regret bound]
\label{lm:regret-bound}
Let $\Delta,K,M>0$, $\left\|\,\cdot\,\right\|$ a norm on
$\mathcal{V}$, and $\left\|\,\cdot\,\right\|_{*}$ its dual norm on $\mathcal{V}^*$. We assume:
\begin{enumerate}[(i)]
\item $\max_{x\in \mathcal{X}}h(x)-\min_{x\in \mathcal{X}}h(x)\leqslant \Delta$,
\item $h$ is $K$-strongly convex with respect to  $\left\|\,\cdot\,\right\|$,
\item $\left\| r_t \right\|_* \leqslant M$ for all $t\geqslant 1$.
\end{enumerate}
Then, the choice $\eta_t=\sqrt{\Delta K/M^2t}$ (for $t\geqslant 1$) guarantees
\[ \forall T\geqslant 1,\quad \max_{x\in
    \mathcal{X}}\sum_{t=1}^T\left< r_t , x \right>
  -\sum_{t=1}^T\left< r_t , x_t \right> \leqslant
  2M\sqrt{\frac{\Delta T}{K}}. \]
\end{lemma}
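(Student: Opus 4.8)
The plan is to read the definition of $x_t$ as a Follow-the-Regularized-Leader (dual averaging) iterate and run the classical telescoping argument on the associated value functions. First I would normalize: replacing $h$ by $h-\min_{\mathcal{X}}h$ changes neither the iterates $x_t$ nor the quantity $\Delta$ of hypothesis~(i), so I may assume $0\le h\le\Delta$ on $\mathcal{X}$. Writing $R_t:=\sum_{s=1}^{t}r_s$ with $R_0:=0$, and $\eta_0:=\eta_1$, the iterate becomes $x_t=\argmax_{x\in\mathcal{X}}\{\langle R_{t-1},x\rangle-\eta_{t-1}^{-1}h(x)\}$ for every $t\ge1$. I then introduce, for $t\ge0$, the \emph{value function} $V_t(y):=\max_{x\in\mathcal{X}}\{\langle y,x\rangle-\eta_t^{-1}h(x)\}$ on $\mathcal{V}^*$, i.e. the convex conjugate of $\eta_t^{-1}h$ (which has domain $\mathcal{X}$). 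By the basic properties of regularizers (Proposition~\ref{prop:lengendre}), each $V_t$ is finite and differentiable on all of $\mathcal{V}^*$ and $\nabla V_t(y)$ is the unique maximizer defining $V_t(y)$; in particular $\nabla V_{t-1}(R_{t-1})=x_t$.

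The one substantial ingredient is the strong-convexity/smoothness duality: since $h$ is $K$-strongly convex with respect to $\|\cdot\|$, the function $\eta_t^{-1}h$ is $(K/\eta_t)$-strongly convex, hence its conjugate $V_t$ is $(\eta_t/K)$-smooth with respect to $\|\cdot\|_*$, i.e. $V_t(y')\le V_t(y)+\langle\nabla V_t(y),\,y'-y\rangle+\frac{\eta_t}{2K}\|y'-y\|_*^2$ for all $y,y'$. I would then telescope $V_T(R_T)-V_0(R_0)=\sum_{t=1}^T\big(V_t(R_t)-V_{t-1}(R_{t-1})\big)$, splitting each summand as $\big(V_t(R_t)-V_{t-1}(R_t)\big)+\big(V_{t-1}(R_t)-V_{t-1}(R_{t-1})\big)$. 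The first bracket is $\le0$ because $\eta_t\le\eta_{t-1}$ and $h\ge0$ force $V_t\le V_{t-1}$ pointwise; the second is at most $\langle r_t,x_t\rangle+\frac{\eta_{t-1}M^2}{2K}$ by the smoothness inequality applied at $y=R_{t-1}$, $y'=R_t$, using $R_t-R_{t-1}=r_t$, $\nabla V_{t-1}(R_{t-1})=x_t$ and hypothesis~(iii). Summing, noting $V_0(R_0)=-\eta_0^{-1}\min_{\mathcal{X}}h=0$ and $V_T(R_T)\ge\langle R_T,x^\star\rangle-\eta_T^{-1}h(x^\star)\ge\max_{x\in\mathcal{X}}\langle R_T,x\rangle-\Delta/\eta_T$ for $x^\star$ a maximizer of $x\mapsto\langle R_T,x\rangle$ (hypothesis~(i)), and rearranging, I get the generic bound $\max_{x\in\mathcal{X}}\sum_{t=1}^T\langle r_t,x\rangle-\sum_{t=1}^T\langle r_t,x_t\rangle\le\frac{\Delta}{\eta_T}+\frac{M^2}{2K}\sum_{t=1}^T\eta_{t-1}$.

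Finally I would substitute $\eta_t=\sqrt{\Delta K/(M^2t)}$: the first term equals exactly $M\sqrt{\Delta T/K}$, and $\sum_{t=1}^T\eta_{t-1}=\sqrt{\Delta K/M^2}\,\big(1+\sum_{s=1}^{T-1}s^{-1/2}\big)\le 2\sqrt{T}\,\sqrt{\Delta K/M^2}$ using the elementary estimate $\sum_{s=1}^{m}s^{-1/2}\le 2\sqrt{m}-1$ to absorb the leading term coming from the harmless choice $\eta_0=\eta_1$; hence the second term is also at most $M\sqrt{\Delta T/K}$, and adding the two yields $2M\sqrt{\Delta T/K}$. I expect the only real content to be the conjugate-duality step — differentiability of $V_t$ with gradient the $\argmax$, and $(\eta_t/K)$-smoothness — which is exactly where hypothesis~(ii) and the structure of regularizers enter; everything else (the normalization of $h$, the choice of comparator, checking that decreasing step sizes make the \emph{forward} term $V_t(R_t)-V_{t-1}(R_t)$ nonpositive, and the step-size arithmetic) is routine bookkeeping.
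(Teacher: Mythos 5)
Your proof is correct and follows essentially the same route as the paper's: your value function $V_t(y)$ is exactly the paper's $h^*(\eta_t y)/\eta_t$, the split of each telescoping increment into a step-size-change term (handled via monotonicity of $\eta_t$ and nonnegativity of the normalized $h$, where the paper instead carries $\min_{\mathcal X}h$ terms explicitly) and a smoothness term (the paper's equivalent Bregman-divergence bound from strong-convexity duality) matches the paper's decomposition, and both arrive at the same intermediate bound $\Delta/\eta_T+\frac{M^2}{2K}\sum_{t=1}^T\eta_{t-1}$ before the identical step-size arithmetic.
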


The following theorem provides upper bounds on
\(I_{\mathcal{X}}^*(\bar{r}_T)\) (where \(\bar{r}_T=\frac{1}{T}\sum_{t=1}^Tr_t\) is
the average payoff) and not only the Euclidean distance from \(\bar{r}_T\)
to \(\mathcal{C}\), which is a special case---see
Proposition~\ref{prop:distance-support}. Therefore, the choice of the generator
\(\mathcal{X}\) determines the quantity that is minimized by the
algorithm. We present in
Sections~\ref{sec:regr-minim-with} and Appendices \ref{sec:online-comb-optim} and \ref{sec:internal-swap-regret} examples of problems where a judicious choice of generator
\(\mathcal{X}\) allows \(I^*_{\mathcal{X}}(\bar{r}_T)\) to be 
equal (or close) to the quantity the Decision Maker actually aims at minimizing and
therefore yields tailored algorithms.
\begin{theorem}
\label{thm:approachability-mirror-descent}
Let $\Delta,K,M>0$, $\left\|\,\cdot\,\right\|$ a norm on
$\mathcal{V}$, and $\left\|\,\cdot\,\right\|_{*}$ its dual norm on $\mathcal{V}^*$. We assume:
\begin{enumerate}[(i)]
\item $\max_{x\in \mathcal{X}}h(x)-\min_{x\in \mathcal{X}}h(x)\leqslant \Delta$,
\item $h$ is $K$-strongly convex with respect to $\left\|\,\cdot\,\right\|$,
\item $\left\| r(a,b) \right\|_* \leqslant M$ for all $a\in \mathcal{A}$ and $b\in \mathcal{B}$.
\end{enumerate}
Then the above algorithm guarantees, with the choice $\eta_t=\sqrt{\Delta
  K/M^2t}$ (for $t\geqslant 1$), against any sequence
of actions $(b_t)_{t\geqslant 1}$ chosen by the Environment,
\[ \forall T\geqslant 1,\quad  I_{\mathcal{X}}^*\left( \bar{r}_T \right)\leqslant 2M\sqrt{\frac{\Delta}{KT}}.  \]
\end{theorem}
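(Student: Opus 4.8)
The plan is to convert the regret bound of Lemma~\ref{lm:regret-bound} into the stated bound on $I_{\mathcal{X}}^*(\bar r_T)$ by exploiting the defining property of the oracle. First I would observe that assumption (iii) of the theorem (the uniform bound $\|r(a,b)\|_*\leqslant M$) implies assumption (iii) of Lemma~\ref{lm:regret-bound} for the realized payoffs $r_t=r(a_t,b_t)$, and assumptions (i), (ii) coincide; hence the lemma applies verbatim with the stated step sizes $\eta_t=\sqrt{\Delta K/M^2 t}$, yielding
\[
\max_{x\in\mathcal{X}}\sum_{t=1}^T\left<r_t,x\right>-\sum_{t=1}^T\left<r_t,x_t\right>\leqslant 2M\sqrt{\frac{\Delta T}{K}}.
\]

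The second ingredient is the oracle property. Since $\mathcal{X}$ is a generator of $\mathcal{C}^\circ$ we have $\mathcal{X}\subset\mathcal{C}^\circ$, so for each $t$ the point $x_t\in\mathcal{X}\subset\mathcal{C}^\circ$ is a legitimate input to the oracle, and $a_t=a(x_t)$ satisfies, by Definition~\ref{def:B-set}, $\left<r(a_t,b),x_t\right>\leqslant 0$ for every $b\in\mathcal{B}$; in particular $\left<r_t,x_t\right>=\left<r(a_t,b_t),x_t\right>\leqslant 0$. Summing over $t=1,\dots,T$ gives $\sum_{t=1}^T\left<r_t,x_t\right>\leqslant 0$, so the regret bound collapses to
\[
\max_{x\in\mathcal{X}}\sum_{t=1}^T\left<r_t,x\right>\leqslant 2M\sqrt{\frac{\Delta T}{K}}.
\]
Now I divide by $T$ and use linearity together with the definition of the support function: $\frac1T\max_{x\in\mathcal{X}}\sum_{t=1}^T\left<r_t,x\right>=\max_{x\in\mathcal{X}}\left<\bar r_T,x\right>=I_{\mathcal{X}}^*(\bar r_T)$. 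This produces exactly $I_{\mathcal{X}}^*(\bar r_T)\leqslant 2M\sqrt{\Delta/(KT)}$, which is the claim.

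The argument is essentially a two-line chaining once the pieces are in place, so there is no real obstacle; the only point that needs a moment's care is checking that the oracle is well-defined along the run of the algorithm, i.e.\ that each $x_t$ indeed lies in $\mathcal{C}^\circ$ (this is where the hypothesis that $\mathcal{X}$ is a \emph{generator} of $\mathcal{C}^\circ$, and not merely some convex compact set, is used), and that the adversarial nature of the Environment causes no circularity — $x_t$ is computed from $r_1,\dots,r_{t-1}$ only, so the oracle inequality $\left<r(a_t,b_t),x_t\right>\leqslant 0$ holds for whatever $b_t$ the Environment subsequently picks. With that settled, the substitution $\sum_t\left<r_t,x_t\right>\leqslant 0$ into Lemma~\ref{lm:regret-bound} and the identification of the averaged maximum with $I_{\mathcal{X}}^*(\bar r_T)$ complete the proof.
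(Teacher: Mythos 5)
Your proof is correct and follows essentially the same route as the paper: apply Lemma~\ref{lm:regret-bound}, kill the $\sum_t\left<r_t,x_t\right>$ term via the oracle property from Definition~\ref{def:B-set}, and identify $\frac{1}{T}\max_{x\in\mathcal{X}}\sum_t\left<r_t,x\right>$ with $I_{\mathcal{X}}^*(\bar{r}_T)$. The extra remarks on the oracle being well-defined on $\mathcal{X}\subset\mathcal{C}^\circ$ and on the absence of circularity with an adversarial Environment are accurate but not a departure from the paper's argument.
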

\begin{proof}
The regret from Lemma~\ref{lm:regret-bound} is the following quantity: 
\[ \operatorname{Reg}_T=\max_{x\in \mathcal{X}}\sum_{t=1}^T\left< r_t , x \right> -\sum_{t=1}^T\left< r_t , x_t \right>. \]
The first term above can be written
\[ \max_{x\in \mathcal{X}}\sum_{t=1}^T\left< r_t , x \right> =T\cdot \max_{x\in \mathcal{X}}\left< \frac{1}{T}\sum_{t=1}^T r_t, x \right> =T\cdot I_{\mathcal{X}}^*\left( \bar{r}_T \right),  \]
whereas the second sum is nonpositive because each term is. Indeed, by definition of the algorithm, and because $a$ is a $(\mathcal{A},\mathcal{B},r,\mathcal{C})$-oracle,
\[ \left< r_t , x_t \right> =\left< r(a_t,b_t) , x_t \right> =\left< r(a(x_t),b_t) , x_t \right> \leqslant 0. \]
Therefore $I_{\mathcal{X}}^*(\bar{r}_T)\leqslant \frac{1}{T}\operatorname{Reg}_T$ and the regret bound from Lemma~\ref{lm:regret-bound} gives the result.
\end{proof}

In Appendix~\ref{sec:model-with-mixed}, we present a variant of the present model
where the Decision Maker can choose its actions at random. The above
guarantee is transposed into guarantees in expectation, in
high-probability (using the Azuma--H\oe ffding inequality), and into
almost-sure convergence (using a Borel--Cantelli argument).
%%% Local Variables:
%%% mode: latex
%%% TeX-master: "refined-approachability"
%%% End:
\section{Regret minimization with global costs}
\label{sec:regr-minim-with}
The problem of regret minimization with global costs was introduced in~\cite{even2009online}. It is an adversarial online learning problem
motivated by load balancing and job scheduling, where at each step,
the Decision Maker first chooses a distribution (task allocation) over
\(d\) machines, and then observes the cost of using each machine, which
may be different for each machine and each step. The goal of the
Decision Maker is to minimize, not the sum of the cumulative costs of
using each machine, but a given function of the vector of cumulative
costs. A typical example of such \emph{global cost} function is the
\(\ell_p\) norm, which includes as special cases the sum of the costs
(for \(p=1\)), as well as the makespan i.e.\ the highest cumulative cost
(for \(p=\infty\)).  A very common approach for this type of problem is
to focus on competitive ratio~\citep{borodin1998online,azar1993online,molinaro2017online}. We instead
follow~\cite{even2009online} and aim at minimizing the regret.

In the seminal paper by~\cite{even2009online}, the authors introduce a
reduction of the problem to an approachability game and obtain a
regret bound of order \(O((\log d)/\sqrt{T})\) for the
\(\ell_{\infty}\) cost function. For general convex cost functions,
the authors present a regret bound that reads $\sqrt{d/T}$; however,
this expression does not reflect the true dependency of the bound in
the number \(d\) of machines, as this bound also involves several 
Lipschitz constants that depend on the cost function, and which may also depend on
\(d\), as it is the case for \(\ell_p\) cost functions. In a
theoretical work, \cite{rakhlin2011online} proved that the regret
bound can be improved to \(O(\sqrt{(\log d)/T})\) in the \(\ell_{\infty}\)
case, but no algorithm achieving this bound was
provided. \cite{bernstein2015response} also studied alternative
algorithms for minimizing regret with global cost but no explicit
bound was given. In a recent paper by \cite{liu2020improved}, new
algorithms are proposed, based on a technique for adapting online
convex optimization algorithms to approachability
games~\citep{shimkin2016online}, and regret bounds for monotone norms
cost functions (which include \(\ell_p\) norms) are derived. The
bounds are abstract, except for the \(\ell_{\infty}\) case where the
algorithm achieve the best known \(O(\sqrt{(\log d)/T})\) bound in
addition of being the first such algorithm to run in polynomial
time. Besides, more general problems than the one we consider below
are studied in~\cite{azar2014sequential,mannor2014approachability} and
both provide algorithms with convergence rate $T^{-1/4}$.

In this section, we apply the tools introduced in
Sections~\ref{sec:appr-conv-cones} and \ref{sec:class-foll-regul} to
construct and analyze new algorithms for this problem. Although our
approach applies to general norm cost functions (unlike
\cite{liu2020improved} which assumes the norm to be monotone), we
focus in Section~\ref{sec:an-algorithm-ell_p} on $\ell_p$ norms ($p>1$) to
obtain explicit regret bounds in
Theorem~\ref{thm:global_costs_lp_norms}, which, in the special case
\(p=\infty\), recovers the best known \(O(\sqrt{(\log d)/T})\)
bound. To the best of our knowledge, these are the first regret bounds
for \(\ell_p\) norm cost functions with explicit dependence in $d$ and $p$.

We use the reduction of the problem to an approachability game
from~\cite{even2009online}. We then choose a generator of the polar of
the target set based on a specially crafted norm on the payoff space,
which then enables us to bound the regret with cost functions by a
support function. Then, in the case of \(\ell_p\) cost functions, the
explicit regret bounds are derived with the help of a carefully chosen
regularizer.

\subsection{Problem statement}
\label{sec:problem-statement}
Let \(d\geqslant 2\) be an integer and \(\left\|\,\cdot\,\right\|\) a norm on
\(\mathbb{R}^d\). Recall that $\Delta_d$ denotes the unit simplex of
$\mathbb{R}^d$ and is identified with the set of probability
distributions over $\mathcal{I}$. For \(t\geqslant 1\),
\begin{itemize}
\item the Decision Maker chooses distribution \(a_t\in \Delta_d\);
\item the Environment chooses loss vector \(\ell_t\in [0,1]^d\).
\end{itemize}
The Decision Maker aims at minimizing the following average regret:
\[ \bar{\operatorname{Reg}}_T=\left\| \frac{1}{T}\sum_{t=1}^Ta_t\odot \ell_t\right\|-\min_{a\in \Delta_d}\left\| \frac{1}{T}\sum_{t=1}^Ta\odot \ell_t \right\|, \]
where \(\odot\) denotes the component-wise multiplication. At each
stage \(t\geqslant1\), the \(i\)-th component of vector
\(a_t\circ \ell\) is equal to \(a_{ti}\ell_i\) and corresponds to the
cost of using machine \(i\) for a fraction \(a_{ti}\) of the job.  The
regret is the difference between the actual global cost incurred by
the Decision Maker and the best possible global cost in hindsight for
a static distribution \(a\in \Delta_d\).  Important special cases
include the makespan which corresponds to
$\left\|\,\cdot\,\right\| =\left\|\,\cdot\,\right\|_{\infty}$: the
global cost is then the highest average cost over the machines; and
for $\left\|\,\cdot\,\right\| =\left\|\,\cdot\,\right\|_1$ the global
cost simply corresponds to the sum of the costs of all the machines,
and the problem then reduces to basic regret minimization.

\subsection{Reduction to an approachability game}
\label{sec:reduct-an-appr}
We recall the reduction given in~\cite[Section~4]{even2009online} of
the above problem to an approachability game which fits the model from Section~\ref{sec:appr-conv-cones}.

Consider the following action sets for the Decision Maker and the
Environment respectively: \(\mathcal{A}=\Delta_d\) and
\(\mathcal{B}=[0,1]^d\). Define the payoff function \(r:\Delta_d\times
[0,1]^d\to (\mathbb{R}^{d})^2\) as
\[ r(a,\ell)=(a\odot \ell,\ell),\quad a\in \Delta_d,\ \ell\in [0,1]^d, \]
and consider the following target set:
\[ \mathcal{C}=\left\{ (y,y')\in (\mathbb{R}^{d}_+)^2,\  \left\| y \right\| \leqslant \min_{a\in \Delta_d}\left\| a\odot y' \right\|  \right\}. \]
The payoff space is therefore \(\mathcal{V}^*=(\mathbb{R}^{d})^2\).

\begin{proposition}[\cite{even2009online}]
$\mathcal{C}$ is a closed convex cone. Moreover, it is a
B-set for the game $(\Delta_d,[0,1]^d,r)$.
\end{proposition}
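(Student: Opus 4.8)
The plan is to prove the two assertions in turn: first that $\mathcal{C}$ is a closed convex cone, and then, using this together with Proposition~\ref{prop:dual-condition}, that it is a B-set. Throughout, write $g(y')=\min_{a\in\Delta_d}\|a\odot y'\|$ for $y'\in\mathbb{R}_+^d$, so that $\mathcal{C}=\{(y,y')\in(\mathbb{R}_+^d)^2 : \|y\|\leqslant g(y')\}$. The minimum is attained (since $\Delta_d$ is compact and $a\mapsto\|a\odot y'\|$ continuous), and $g$ is finite, nonnegative, positively homogeneous of degree $1$, and continuous in $y'$. Granting these, the cone property of $\mathcal{C}$ is immediate from positive homogeneity of $\|\cdot\|$ and $g$, and closedness holds because $\mathcal{C}=(\mathbb{R}_+^d)^2\cap\{(y,y') : \|y\|-g(y')\leqslant0\}$ is the intersection of a closed set with a sublevel set of a continuous function. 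For convexity: if $(y_1,y_1'),(y_2,y_2')\in\mathcal{C}$ and $\lambda\in[0,1]$, then $\|\lambda y_1+(1-\lambda)y_2\|\leqslant\lambda\|y_1\|+(1-\lambda)\|y_2\|\leqslant\lambda g(y_1')+(1-\lambda)g(y_2')$, so it suffices to prove that $g$ is \emph{concave} on $\mathbb{R}_+^d$.

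This concavity is the heart of the matter, and it is not obvious because the norm is not assumed monotone. The idea is to rewrite $g$ using the support-function formalism of Section~\ref{sec:support-functions}. Since $\{a\odot y' : a\in\Delta_d\}=\operatorname{conv}\{y_i'e_i : 1\leqslant i\leqslant d\}$, the value $g(y')$ is the distance (measured by $\|\cdot\|$) from the origin to that polytope; for $y'$ in the open orthant $(\mathbb{R}_+^*)^d$ the polytope is exactly $\{z\in\mathbb{R}_+^d : \langle 1/y',z\rangle=1\}$, where $1/y':=(1/y_1',\dots,1/y_d')$, so $g(y')=\min\{\|z\| : z\in\mathbb{R}_+^d,\ \langle 1/y',z\rangle=1\}$. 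Scaling a minimizing $z$ to unit norm on one side, and a maximizer in the support function $I_{\mathcal{U}\cap\mathbb{R}_+^d}^*(1/y')$ to inner product $1$ on the other (here $\mathcal{U}=\{w : \|w\|\leqslant1\}$ is the closed unit ball of $\|\cdot\|$), one gets, on $(\mathbb{R}_+^*)^d$,
\[ g(y')=\frac{1}{I_{\mathcal{U}\cap\mathbb{R}_+^d}^*(1/y')}=\Bigl(\,\sup_{z\in\mathcal{U}\cap\mathbb{R}_+^d}\ \sum\nolimits_{i=1}^d z_i/y_i'\Bigr)^{-1}=\inf_{\substack{z\in\mathcal{U}\cap\mathbb{R}_+^d\\ z\neq0}}\ \Bigl(\,\sum\nolimits_{i=1}^d z_i/y_i'\Bigr)^{-1}, \]
the denominator being finite and strictly positive there. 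For each fixed $z\in\mathbb{R}_+^d\setminus\{0\}$, the map $y'\mapsto(\sum_i z_i/y_i')^{-1}$ is (a positive multiple of) a weighted harmonic mean of the coordinates of $y'$ and is concave on $(\mathbb{R}_+^*)^d$: it is positively homogeneous of degree $1$, and its superlevel set $\{y' : \sum_i z_i/y_i'\leqslant1\}$ is convex, being a sublevel set of the convex function $y'\mapsto\sum_i z_i/y_i'$. An infimum of concave functions is concave, so $g$ is concave on $(\mathbb{R}_+^*)^d$, hence on $\mathbb{R}_+^d$ by continuity. This settles that $\mathcal{C}$ is a closed convex cone, and this concavity step is the only real obstacle --- the rest is routine bookkeeping.

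It remains to check the B-set property, for which I would invoke Proposition~\ref{prop:dual-condition}: $\mathcal{A}=\Delta_d$ is convex and compact, $\mathcal{B}=[0,1]^d$ is convex, and $r(a,\ell)=(a\odot\ell,\ell)$ is bi-affine (affine in $a$ for each fixed $\ell$, linear in $\ell$ for each fixed $a$). Hence, $\mathcal{C}$ being a closed convex cone, it is a B-set if and only if for every $\ell\in[0,1]^d$ there exists $a\in\Delta_d$ with $r(a,\ell)\in\mathcal{C}$. Given $\ell\in[0,1]^d$, take $a\in\argmin_{a'\in\Delta_d}\|a'\odot\ell\|$, which exists by compactness of $\Delta_d$. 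Then $a\odot\ell\in\mathbb{R}_+^d$, $\ell\in\mathbb{R}_+^d$, and $\|a\odot\ell\|=\min_{a'\in\Delta_d}\|a'\odot\ell\|=g(\ell)$, so $(a\odot\ell,\ell)\in\mathcal{C}$, which is exactly what is needed.
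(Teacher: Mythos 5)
Your proof is correct, and its overall architecture coincides with the paper's: closedness and convexity of $\mathcal{C}$ are reduced to the concavity of $g(y')=\min_{a\in \Delta_d}\left\| a\odot y' \right\|$ on $\mathbb{R}_+^d$, and the B-set property is obtained from Proposition~\ref{prop:dual-condition} by taking $a_0\in \argmin_{a\in \Delta_d}\left\| a\odot \ell \right\|$. The one genuine difference is that the paper does not prove the concavity of $g$: it cites Lemma~22 of \cite{rakhlin2011online} (and asserts continuity of $g$ as a minimum of continuous functions). You instead give a self-contained argument: on the open orthant, $\left\{ a\odot y' : a\in \Delta_d \right\}$ is the slice $\left\{ z\in \mathbb{R}_+^d : \left< 1/y' , z \right> =1 \right\}$, a scaling duality identifies $g(y')$ with $1/I^*_{\mathcal{U}\cap \mathbb{R}_+^d}(1/y')$, and $g$ is then exhibited as an infimum of weighted harmonic means, each of which is concave (degree-one positive homogeneity plus convexity of the superlevel set $\left\{ \sum_i z_i/y_i'\leqslant 1 \right\}$); concavity extends to the boundary by continuity. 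All of these steps check out, including the passage from $(\sup)^{-1}$ to $\inf$ of reciprocals (the supremum is finite and positive there) and the extension of concavity from the open orthant to its closure. The trade-off is clear: the paper's route is shorter but outsources the only nontrivial point to an external lemma, while yours is longer but makes the proposition self-contained and, as a byproduct, explains \emph{why} $g$ is concave even for non-monotone norms. One very minor point common to both write-ups: continuity of $g$ is not automatic for a pointwise minimum of continuous functions; one should note that the family $y'\mapsto \left\| a\odot y' \right\|$, $a\in \Delta_d$, is uniformly Lipschitz (or invoke the compactness of $\Delta_d$ via a maximum-theorem argument), which makes the minimum continuous.
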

\begin{proof}
We give the proof for the sake of completeness and essentially follow~\cite[Lemma~5 \& Theorem~6]{even2009online}.  $\mathcal{C}$ can be
written as
\[ \mathcal{C}=\left\{ (y,y')\in (\mathbb{R}^d_+)^2\mid \left\|
y \right\| -\min_{a\in \Delta_d}\left\| a\odot y' \right\|\leqslant 0
\right\}, \]
which then appears as a closed level set of a convex
function because $y\mapsto \left\| y \right\|$ is continuous and
convex for all norms, and because $y'\mapsto \min_{a\in
\Delta_d}\left\| a\odot y' \right\|$ is concave on $\mathbb{R}_+^d$ according
to~\cite[Lemma~22]{rakhlin2011online} and continuous as the minimum of
a family of continuous
functions. $\mathcal{C}$ is thus closed and convex, and because it is
is clearly closed by multiplication by a nonnegative scalar, it is a
closed convex cone.

We can now establish that $\mathcal{C}$ is a B-set for
the game $(\Delta_d,[0,1]^d,r)$ using Blackwell's dual condition
from Proposition~\ref{prop:dual-condition}, because the payoff
function $r$ is indeed bi-affine. Let $\ell\in [0,1]^d$ and consider $a_0=\argmin_{a\in \Delta_d}\left\| a\odot \ell \right\|$. Then, we clearly have $r(a_0,\ell)\in \mathcal{C}$, which concludes the proof.
\end{proof}

\begin{remark}[Computation of the oracle]
\label{rk:oracle}
As noted in \cite[Section~4]{even2009online} and \cite[Section~4.1]{liu2020improved}, a $(\Delta_d,[0,1]^d,r,\mathcal{C})$-oracle is given by
\[ a(z,z')=\argmin_{a\in \Delta_d}\sum_{i=1}^{d}\max_{}(0, z_ia_i+z'_i),\quad (z,z')\in \mathcal{C}^{\circ}, \]
which is a linear program with $O(d)$ variables and $O(d)$ constraints, which can thus be computed in polynomial time.
\end{remark}

\subsection{A special norm on the payoff space}
\label{sec:special-norm-payoff}
We now define a special norm on the payoff space \(\mathcal{V}^*\)
which will allow us to bound the regret from above with the help of a
support function, and will therefore provide the generator
of \(\mathcal{C}^\circ\) for defining the regularizer and constructing our
algorithm.

We introduce the following norm \(\left\|\,\cdot\,\right\|_{\mathcal{V}^*}\) whose
definition is based on the norm \(\left\|\,\cdot\,\right\|\) given in Section~\ref{sec:problem-statement}:
\[ \left\| (y,y') \right\|_{\mathcal{V}^*}=\left\| y \right\| +\max_{a\in \Delta_d}\left\| a\odot  y' \right\| ,\quad (y,y')\in \mathcal{V}^*=(\mathbb{R}^d)^2.  \]
It is easy to check that $\left\|\,\cdot\,\right\|_{\mathcal{V}^*}$ is
indeed a norm and we consider the associated the dual norm, defined on
\(\mathcal{V}\), which we denote \(\left\|\,\cdot\,\right\|_{\mathcal{V}}\).
We can now consider the following generator of \(\mathcal{C}^\circ\):
\(\mathcal{X}=\mathcal{B}\cap \mathcal{C}^\circ\), where \(\mathcal{B}\)
denotes the closed unit ball with respect to
\(\left\|\,\cdot\,\right\|_{\mathcal{V}}\). The following proposition
shows that this choice of \(\mathcal{X}\) makes the average regret \(\bar{\operatorname{Reg}}_T\) bounded
from above by \(I_{\mathcal{X}}^*(\bar{r}_T)\).
\begin{proposition}
  \label{prop:global-cost-support-function}
Let $(a_t)_{t\geqslant 1}$ and $(\ell_t)_{t\geqslant 1}$ be sequences of actions chosen
by the Decision Maker and the Environment respectively.
Denote for all $t\geqslant 1$, $r_t=r(a_t,\ell_t)$ the corresponding payoffs.
Then for all $T\geqslant 1$, the regret is bounded as
\[ \bar{\operatorname{Reg}}_T=\left\| \frac{1}{T}\sum_{t=1}^Ta_t\odot \ell_t \right\| -\min_{a\in \Delta_d}\left\| \frac{1}{T}\sum_{t=1}^Ta\odot \ell_t \right\| \leqslant I_{\mathcal{B}\cap \mathcal{C}^{\circ}}^*\left( \bar{r}_T \right), \]
where $\mathcal{B}$ denotes the closed unit ball associated with $\left\|\,\cdot\,\right\|_{\mathcal{V}}$.
\end{proposition}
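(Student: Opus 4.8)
The plan is to reduce the claimed bound to two applications of the triangle inequality, using the explicit description of $\mathcal{C}$ and the decomposition of $\|\cdot\|_{\mathcal{V}^*}$. First I would write $\bar r_T=(\bar y_T,\bar\ell_T)$ with $\bar y_T=\frac1T\sum_{t=1}^T a_t\odot\ell_t$ and $\bar\ell_T=\frac1T\sum_{t=1}^T\ell_t$, and observe that $\frac1T\sum_{t=1}^T a\odot\ell_t=a\odot\bar\ell_T$ for every $a\in\Delta_d$, so that $\bar{\operatorname{Reg}}_T=\|\bar y_T\|-\min_{a\in\Delta_d}\|a\odot\bar\ell_T\|$. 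Both $\bar y_T$ and $\bar\ell_T$ lie in $\mathbb{R}_+^d$.

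Next I would identify the right-hand side. Since $\|\cdot\|_{\mathcal{V}}$ is by definition the dual norm of $\|\cdot\|_{\mathcal{V}^*}$ and everything is finite-dimensional, biduality gives that $\|\cdot\|_{\mathcal{V}^*}$ is in turn the dual norm of $\|\cdot\|_{\mathcal{V}}$. Proposition~\ref{prop:distance-support}, applied with the norm $\|\cdot\|_{\mathcal{V}}$ on $\mathcal{V}$ and its dual $\|\cdot\|_{\mathcal{V}^*}$ on $\mathcal{V}^*$, then yields $I_{\mathcal{B}\cap\mathcal{C}^\circ}^*(\bar r_T)=\inf_{(z,z')\in\mathcal{C}}\|\bar r_T-(z,z')\|_{\mathcal{V}^*}$, and by definition of $\|\cdot\|_{\mathcal{V}^*}$ this equals $\inf_{(z,z')\in\mathcal{C}}\bigl(\|\bar y_T-z\|+\max_{a\in\Delta_d}\|a\odot(\bar\ell_T-z')\|\bigr)$.

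The heart of the proof is then the pointwise inequality, valid for every $(z,z')\in\mathcal{C}$:
\[ \|\bar y_T\|-\min_{a\in\Delta_d}\|a\odot\bar\ell_T\| \leqslant \|\bar y_T-z\|+\max_{a\in\Delta_d}\|a\odot(\bar\ell_T-z')\|, \]
from which the claim follows by taking the infimum over $(z,z')\in\mathcal{C}$. To prove it, fix $(z,z')\in\mathcal{C}$, so $z,z'\in\mathbb{R}_+^d$ and $\|z\|\leqslant\min_{a\in\Delta_d}\|a\odot z'\|$. The triangle inequality gives $\|\bar y_T\|\leqslant\|z\|+\|\bar y_T-z\|\leqslant\min_{a\in\Delta_d}\|a\odot z'\|+\|\bar y_T-z\|$. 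Letting $a^\ast\in\Delta_d$ attain $\min_{a\in\Delta_d}\|a\odot\bar\ell_T\|$ — which exists since $\Delta_d$ is compact and the map is continuous — the triangle inequality again gives $\min_{a\in\Delta_d}\|a\odot z'\|\leqslant\|a^\ast\odot z'\|\leqslant\|a^\ast\odot\bar\ell_T\|+\|a^\ast\odot(z'-\bar\ell_T)\|=\min_{a\in\Delta_d}\|a\odot\bar\ell_T\|+\|a^\ast\odot(\bar\ell_T-z')\|$, where the last equality uses that a norm is unchanged under a global sign flip; and $\|a^\ast\odot(\bar\ell_T-z')\|\leqslant\max_{a\in\Delta_d}\|a\odot(\bar\ell_T-z')\|$. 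Chaining the two displays yields the pointwise inequality.

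I do not expect a genuine obstacle here. The only points needing a little care are the biduality observation that makes Proposition~\ref{prop:distance-support} applicable to the pair $(\|\cdot\|_{\mathcal{V}},\|\cdot\|_{\mathcal{V}^*})$, and keeping track of which norm lives on which space so the two triangle inequalities are chained in the correct direction. Notably, unlike in the proof that $\mathcal{C}$ is a B-set, the concavity of $y'\mapsto\min_{a\in\Delta_d}\|a\odot y'\|$ plays no role here: only the triangle inequality and the compactness of $\Delta_d$ are used.
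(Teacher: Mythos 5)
Your proof is correct and follows essentially the same route as the paper: decompose $\bar r_T=(\bar y_T,\bar\ell_T)$, bound $\bar{\operatorname{Reg}}_T$ by $\left\| \bar y_T-z \right\|+\max_{a\in\Delta_d}\left\| a\odot(\bar\ell_T-z') \right\|=\left\| \bar r_T-(z,z') \right\|_{\mathcal{V}^*}$ for each $(z,z')\in\mathcal{C}$ via two triangle inequalities and the defining inequality of $\mathcal{C}$, then take the infimum and invoke Proposition~\ref{prop:distance-support}. Your explicit mention of biduality (so that $\left\|\,\cdot\,\right\|_{\mathcal{V}^*}$ is the dual of $\left\|\,\cdot\,\right\|_{\mathcal{V}}$, making the proposition applicable) is a point the paper leaves implicit, and your use of a minimizer $a^{\ast}$ is just a rephrasing of the paper's max--min manipulation.
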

\begin{proof}
Let $T\geqslant 1$ and denote $y=\frac{1}{T}\sum_{t=1}^Ta_t\odot \ell_t$ and $y'=\frac{1}{T}\sum_{t=1}^T\ell_t$. Let $(\tilde{y},\tilde{y}')\in \mathcal{C}$ be any vector from the target set.
Then, we can write
\begin{align*}
\bar{\operatorname{Reg}}_T&=\left\| y \right\| -\min_{a\in \Delta_d}\left\| a\circ y' \right\| =\left\| y \right\| -\left\| \tilde{y} \right\| +\left\| \tilde{y} \right\| -\min_{a\in \Delta_d}\left\| a\odot y' \right\| \\&\qquad \qquad +\min_{a\in \Delta_d}\left\| a\odot \tilde{y}' \right\| -\min_{a\in \Delta_d}\left\| a\odot \tilde{y}' \right\| \\
&\leqslant \left\| y-y' \right\| +\min_{a\in \Delta_d}\left\| a\odot \tilde{y}' \right\| -\min_{a\in \Delta_d}\left\| a\odot y' \right\|_{}\\&=\left\| y'-y \right\| +\max_{a\in \Delta_d}\min_{a'\in \Delta_d}\left\{ \left\| a'\odot \tilde{y}' \right\| -\left\| a\odot y' \right\|  \right\}\\
&\leqslant \left\| y-y' \right\| +\max_{a\in \Delta_d} \left\| a\odot (\tilde{y}'-y') \right\|   = \left\| (y,y')-(\tilde{y},\tilde{y}') \right\|_{\mathcal{V}^*},
\end{align*}
where the first inequality follows from the reverse triangle inequality and the definition of $\mathcal{C}$ and the third inequality from removing the minimum over $a'\in \Delta_d$ and using the reverse triangle inequality again.
Then, taking the minimum over $(\tilde{y},\tilde{y}')\in \mathcal{C}$ and applying Proposition~\ref{prop:distance-support} gives the result:
\[ \bar{\operatorname{Reg}}_T\leqslant \min_{(\tilde{y},\tilde{y}')\in \mathcal{C}}\left\| (y,y')-(\tilde{y},\tilde{y}') \right\|_{\mathcal{V}^*}=I^*_{\mathcal{B}\cap \mathcal{C}^\circ }(y,y')=I_{\mathcal{B}\cap \mathcal{C}^\circ }^*\left( \frac{1}{T}\sum_{t=1}^Tr(a_t,\ell_t) \right). \]
\end{proof}

\subsection{An algorithm for $\ell_p$ global cost functions}
\label{sec:an-algorithm-ell_p}
We define and analyze an algorithm based on a carefully chosen regularizer
which takes advantage of the properties of $\ell_p$ norms. The
construction for general norms in given in Appendix~\ref{sec:algo-global-cost}.
We consider on
$\mathcal{X}=\mathcal{B}\cap \mathcal{C}^\circ$ the following regularizer:
\[ h(z,z')=
  \begin{cases}
    \displaystyle \frac{A}{2}\left\| z \right\|_2^2+\frac{1}{2}\left\| z' \right\|_{q'}^2&\text{if
      $(z,z')\in \mathcal{B}\cap \mathcal{C}^\circ$},\\
    +\infty&\text{otherwise}.
  \end{cases}
\]
where $q'\in (1,2]$ and $A>0$ are to be chosen later. The
algorithm associated with a positive sequence
$(\eta_t)_{t\geqslant 1}$ and an oracle $a$ from
Remark~\ref{rk:oracle} writes, for $t\geqslant 1$,
\begin{align*}
\text{compute}\quad x_t&=\argmax_{x\in \mathcal{X}}\left\{ \left< \eta_{t-1}\sum_{s=1}^{t-1}r_s, x \right> -h(x) \right\}\\
\text{compute}\quad a_t&=\argmin_{a\in
                         \Delta_d}\sum_{i=1}^d\max_{}(0,z_{ti}a_i+z'_{ti}),\quad
                         \text{where $(z_t,z_t')=x_t$,}\\
\text{observe}\quad r_t&:=r(a_t,b_t).
\end{align*}
\begin{theorem}
\label{thm:global_costs_lp_norms}
Let $p\in (1,+\infty]$ and assume $\left\|\,\cdot\,\right\| =\left\|\,\cdot\,\right\|_p$. 
Then, the above algorithm with $A=\min_{}\left\{ d^{1-2/p},1 \right\}$,
$q'=1+(2\log d-1)^{-1}$ and coefficients
\[ \eta_t= \frac{1}{2\sqrt{t\max_{}\left\{ d^{2/p-1},\ e(2\log d-1) \right\}}},\quad t\geqslant 1, \]
guarantees, against any sequence $(\ell_t)_{t\geqslant 1}$ in $[0,1]^d$ chosen by the Environment,
\begin{equation}
\label{eq:lp-upper-bound}
\forall T\geqslant 1,\quad \bar{\operatorname{Reg}}_T \leqslant \frac{4}{\sqrt{T}}\max_{}\left\{ d^{1/p-1/2},\ \sqrt{2e\log d} \right\}.
\end{equation}
\end{theorem}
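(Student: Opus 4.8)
The plan is to apply Theorem~\ref{thm:approachability-mirror-descent} to the algorithm at hand (i.e.\ to the regularizer $h$ on the generator $\mathcal{X}=\mathcal{B}\cap\mathcal{C}^\circ$, with the oracle of Remark~\ref{rk:oracle}) and then invoke Proposition~\ref{prop:global-cost-support-function}, which already gives $\bar{\operatorname{Reg}}_T\leq I_{\mathcal{B}\cap\mathcal{C}^\circ}^*(\bar{r}_T)=I_{\mathcal{X}}^*(\bar{r}_T)$. The only genuine decision is which norm on $\mathcal{V}=(\mathbb{R}^d)^2$ to use when verifying the strong-convexity hypothesis. It need \emph{not} be the dual $\left\|\,\cdot\,\right\|_{\mathcal{V}}$ of the norm defining $\mathcal{X}$; using that one would make the strong-convexity constant of $\tfrac{A}{2}\left\| z \right\|_2^2$ collapse to order $1/d$ and produce a bound polynomial in $d$. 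Instead I would use the rescaled mixed norm
\[ N(z,z')=\max\bigl\{\sqrt{A}\,\left\| z \right\|_2,\ \sqrt{q'-1}\,\left\| z' \right\|_{q'}\bigr\},\qquad N^*(y,y')=\tfrac{1}{\sqrt{A}}\left\| y \right\|_2+\tfrac{1}{\sqrt{q'-1}}\left\| y' \right\|_{q^*}, \]
where $q^*=q'/(q'-1)$. The parameters are calibrated exactly so that $q^*=2\log d$, hence $d^{1/q^*}=d^{1/(2\log d)}=\sqrt{e}$, and so that $1/\sqrt{A}=\max\{1,d^{1/p-1/2}\}$.

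With this norm I would check the three hypotheses of Theorem~\ref{thm:approachability-mirror-descent} with $K=1$ and $\Delta=1$. For strong convexity: $\tfrac12\left\|\,\cdot\,\right\|_2^2$ is $1$-strongly convex for $\left\|\,\cdot\,\right\|_2$ and $\tfrac12\left\|\,\cdot\,\right\|_{q'}^2$ is $(q'-1)$-strongly convex for $\left\|\,\cdot\,\right\|_{q'}$ (classical, recalled in Appendix~\ref{sec:proofs-prop-regul}; $q'$ is taken close to $1$), so $h$ satisfies the strong-convexity inequality with kernel $A\left\| z \right\|_2^2+(q'-1)\left\| z' \right\|_{q'}^2\geq N(z,z')^2$, i.e.\ $h$ is $1$-strongly convex for $N$. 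For the range: $0\in\mathcal{X}$ and $h\geq0$ give $\min_{\mathcal{X}}h=0$; and since $\left\|\,\cdot\,\right\|_{\mathcal{V}^*}$ simplifies to $(y,y')\mapsto\left\| y \right\|_p+\left\| y' \right\|_\infty$ (because $\max_{a\in\Delta_d}\left\| a\odot y' \right\|_p=\left\| y' \right\|_\infty$), every $(z,z')\in\mathcal{X}$ satisfies $\left\| z \right\|_{p^*}\leq1$ and $\left\| z' \right\|_1\leq1$, whence $\tfrac{A}{2}\left\| z \right\|_2^2\leq\tfrac{A}{2}\max\{1,d^{2/p-1}\}\left\| z \right\|_{p^*}^2=\tfrac12\left\| z \right\|_{p^*}^2\leq\tfrac12$ (using $A\max\{1,d^{2/p-1}\}=1$, which is the whole point of the definition of $A$) and $\tfrac12\left\| z' \right\|_{q'}^2\leq\tfrac12\left\| z' \right\|_1^2\leq\tfrac12$, so $\max_{\mathcal{X}}h\leq1$.

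For the payoff bound, let $a\in\Delta_d$ and $\ell\in[0,1]^d$: then $\left\| a\odot\ell \right\|_2\leq\left\| a \right\|_2\leq1$ and $\left\| \ell \right\|_{q^*}^{q^*}=\sum_i\ell_i^{q^*}\leq\sum_i\ell_i\leq d$, so $\left\| \ell \right\|_{q^*}\leq d^{1/q^*}=\sqrt{e}$, giving
\[ N^*\bigl(r(a,\ell)\bigr)\leq\max\{1,d^{1/p-1/2}\}+\sqrt{e(2\log d-1)}\leq M:=2\max\{d^{1/p-1/2},\ \sqrt{e(2\log d-1)}\}=2\sqrt{\max\{d^{2/p-1},\,e(2\log d-1)\}}, \]
the last inequality holding because $\sqrt{e(2\log d-1)}\geq1$ for $d\geq2$, so one of the two terms always dominates the sum up to a factor $2$. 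Theorem~\ref{thm:approachability-mirror-descent} then applies with this $(\Delta,K,M)$: the step size it prescribes is $\eta_t=\sqrt{\Delta K/(M^2t)}=1/(M\sqrt{t})$, which is precisely the one in the statement, and it yields $I_{\mathcal{X}}^*(\bar{r}_T)\leq2M\sqrt{\Delta/(KT)}=2M/\sqrt{T}$. Combining with Proposition~\ref{prop:global-cost-support-function} and $\sqrt{e(2\log d-1)}\leq\sqrt{2e\log d}$ yields exactly~\eqref{eq:lp-upper-bound}. I expect the main obstacle to be identifying the norm $N$ and its two rescalings: it must be coarse enough that the given regularizer $h$ is $1$-strongly convex for it, yet fine enough that the $\ell_2$-block of $N^*(r(a,\ell))$ is dimension-free while the $\ell_{q^*}$-block is only logarithmic in $d$ — which is exactly what pins $q'$ down to the value $1+(2\log d-1)^{-1}$.
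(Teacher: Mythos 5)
Your proof is correct and follows the same skeleton as the paper's: reduce via Proposition~\ref{prop:global-cost-support-function}, then apply Theorem~\ref{thm:approachability-mirror-descent} to the regularizer $h$ on $\mathcal{X}=\mathcal{B}\cap\mathcal{C}^\circ$, computing $\Delta=1$ from $A\max\{1,d^{2/p-1}\}=1$ and the same product $M\sqrt{\Delta/K}$. The one genuine difference is the auxiliary norm used to certify the hypotheses. The paper takes the fixed, parameter-free norm $\left\|(y,y')\right\|_{(\mathcal{V}^*)}=\left\| y\right\|_1+\left\| y'\right\|_\infty$, so that $M=2$ and all the dimension dependence is pushed into the strong-convexity constant $K=\min\{A,(q'-1)d^{2(1/q'-1)}\}$ (the factor $d^{2(1/q'-1)}=e^{-1}$ coming from comparing $\left\|\,\cdot\,\right\|_{q'}$ with $\left\|\,\cdot\,\right\|_1$); you instead rescale the norm block-by-block so that $K=1$ and the dimension dependence migrates into $M$ via $\left\| a\odot\ell\right\|_2\leq 1$ and $\left\|\ell\right\|_{q^*}\leq d^{1/q^*}=\sqrt e$. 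Since only $M^2\Delta/K$ enters the final bound and the prescribed $\eta_t$, the two bookkeepings yield identical constants, as your computation confirms; your version has the small pedagogical merit of making explicit why the calibration $q^*=2\log d$ (hence $q'=1+(2\log d-1)^{-1}$) and $A=\min\{d^{1-2/p},1\}$ are the right choices, while the paper's has the merit of keeping the payoff bound trivially equal to $2$. All the individual steps you use (the $(q'-1)$-strong convexity of $\tfrac12\left\|\,\cdot\,\right\|_{q'}^2$, the restriction lemma for the indicator, the identification $\max_{a\in\Delta_d}\left\| a\odot y'\right\|_p=\left\| y'\right\|_\infty$ and the resulting description of $\mathcal{B}$) are available in the paper and correctly invoked.
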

\begin{remark}
  In the special case $p=\infty$, the above bound recovers the best
  known bound of order
  $O(\sqrt{(\log d)/T})$ from \cite{rakhlin2011online}.
  For $1<p<+\infty$, we obtain, to the best of our knowledge, the
  first bounds with explicit dependence in $d$ and $p$.  Surprisingly,
  the same $O(\sqrt{(\log d)/T})$ bound with logarithmic dependence in
  the dimension $d$ also holds for all $p\geqslant 2$. We were unable
  to find in the literature any lower bound for a given cost
  function\footnote{\cite{even2009online} gives a lower bound, but is
    of a different kind, as the cost
    function depends on the time horizon.}, and standard techniques
  from regret minimization, which involves a randomized Environment
 which cancels the influence of the Decision Maker on its own reward, do not seem to work at all, because of the particular form of the quantity to be minimized. Developing lower bound techniques for this kind of online learning problems appears to be an interesting and challenging research direction.
\end{remark}
\begin{proof}
  We aim at applying Theorem~\ref{thm:approachability-mirror-descent}.
Let us first establish an upper bound on the difference between the
highest and lowest values of $h$.
Note that
$\max_{a\in \Delta_d}\left\|a\odot y'\right\|_{p}=\left\| y' \right\|_{\infty}$
for all $y'\in \mathbb{R}^d$. Indeed, by denoting $e_1,\dots,e_d$ the
canonical basis of $\mathbb{R}^d$, and using the fact that $\left\|\,\cdot\,\right\|_p\leqslant \left\|\,\cdot\,\right\|_1$,
\begin{align*}
  \left\| y' \right\|_{\infty}&=\max_{1\leqslant i\leqslant d}\left| y'_i \right| =\max_{a\in \left\{ e_1,\dots,e_d \right\}}\left\| a\odot y' \right\|_p\leqslant \max_{a\in \Delta_d}\left\| a\odot y' \right\|_p \\
  &\leqslant \max_{a\in \Delta_d}\left\| a\odot y' \right\|_1=\max_{a\in \Delta_d}\sum_{i=1}^da_i\left| y_i' \right| =\left\| y' \right\|_{\infty}.
\end{align*}
Therefore,
$\left\|(y,y')\right\|_{\mathcal{V}^*}=\left\| y \right\|_p +\left\| y' \right\|_{\infty}$
for all $(y,y')\in \mathcal{V}^*$. Using a standard argument, we can
prove that its dual norm writes
\[ \left\| (z,z') \right\|_{\mathcal{V}}=\max_{}\left\{ \left\| z \right\|_q,\ \left\| z' \right\|_{1} \right\},\qquad (z,z')\in \mathcal{V}, \]
where $q=(1-1/p)^{-1}$.
Therefore,
$\mathcal{B}=\left\{ (z,z')\in \mathcal{V},\ \left\| z \right\|_q\leqslant 1\text{
  and }\left\| z' \right\|_1\leqslant 1 \right\}$.
Besides, because $0\in \mathcal{X}$, it
holds that $\min_{\mathcal{X}}h=0$. Therefore, using the standard
inequality between $\ell_p$ norms that can be written
$\left\|\,\cdot\,\right\|_{q'}\leqslant d^{\max_{}(1/q'-1/q,0)}\left\|\,\cdot\,\right\|_q$, 
\begin{align}
  \label{eq:delta}
  \begin{split}
  \max_{\mathcal{X}}h-\min_{\mathcal{X}}h&\leqslant \max_{(z,z')\in \mathcal{B}}h(z,z')=\max_{\substack{\left\| z \right\|_q\leqslant 1\\\left\| z' \right\|_{1}\leqslant 1}}\left\{ \frac{A}{2}\left\| z \right\|_2^2+\frac{1}{2}\left\| z' \right\|_{q'}^2 \right\}\\
                                         &\leqslant \max_{\substack{\left\| z \right\|_q\leqslant 1\\\left\| z' \right\|_{1}\leqslant 1}}\left\{ \frac{A}{2}d^{\max (1-2/q,0)}\left\| z \right\|_q^2+\frac{1}{2}\left\| z' \right\|_1^2 \right\}\\
  &=\frac{1}{2}(A\, d^{\max (1-2/q,0)}+1)=\frac{1}{2}\left( A\, d^{\max (2/p-1,0)}+1 \right).
  \end{split}
\end{align}

  Let us introduce the following norm on the payoff space
  $\mathcal{V}^*$, which is different from the norm
  $\left\|\,\cdot\,\right\|_{\mathcal{V}^*}$ involved in Proposition~\ref{prop:global-cost-support-function}:
  \[ \left\| (y,y') \right\|_{(\mathcal{V}^*)}=\left\| y \right\| _1+\left\| y' \right\|_{\infty}. \]
We can see that the vector-valued payoffs are bounded by $2$ with respect to
this norm. Indeed, for all $a\in \Delta_d$ and $\ell\in [0,1]^d$,
\[ \left\| r(a,\ell) \right\|_{(\mathcal{V}^*)}=\left\| a\odot \ell \right\|_1+\left\| \ell \right\|_{\infty}\leqslant 2.  \]
Denote $\left\|\,\cdot\,\right\|_{(\mathcal{V})}$ the dual norm of
$\left\|\,\cdot\,\right\|_{(\mathcal{V}^*)}$, which has the following
expression:
$\left\| (z,z') \right\|_{(\mathcal{V})}=\max_{}(\left\| z \right\|_{\infty},\left\| z' \right\|_{1})$
for all $(z,z')\in \mathcal{V}$.

Let us now prove for regularizer $h$ a strong
convexity property with respect to
$\left\|\,\cdot\,\right\|_{\mathcal{V}}$.
It can be practical to write $h$ as
\[ h(z,z')=h_1(z)+h_2(z')+I_{\mathcal{X}}(z,z'),\qquad (z,z')\in \mathcal{V}, \]
where $h_1(z)=\frac{A}{2}\left\| z \right\|_2^2$ and $h_2(z)=\frac{1}{2}\left\| z' \right\|_{q'}^2$.
We note that according to Proposition~\ref{prop:lp-regularizer}, $h_1$ is $A$-strongly convex with respect to
$\left\|\,\cdot\,\right\|_2$ and $h_2$ is $(q'-1)d^{2(1/q'-1)}$-strongly convex with
respect to $\left\|\,\cdot\,\right\|_1$.
For all
$(z,z'),(\tilde{z},\tilde{z}')\in \mathcal{V}$, and
$\lambda\in [0,1]$, denote
$z_{\lambda}= \lambda z +(1-\lambda)\tilde{z}$ and
$z_{\lambda}'= \lambda z' +(1-\lambda)\tilde{z}'$. Then, using the
strong convexity properties of $h_1$ and $h_2$, and the fact that $\left\|\,\cdot\,\right\|_2\geqslant \left\|\,\cdot\,\right\|_{\infty}$,
\begin{align*}
  \lambda h(z,z')+(1-\lambda)h(\tilde{z},\tilde{z}')&\geqslant \lambda(h_1(z)+h_2(z'))+(1-\lambda)(h_1(\tilde{z})+h_2(\tilde{z}'))\\
                                                    &=\lambda h_1(z)+(1-\lambda)h_1(\tilde{z})+ \lambda h_2(\tilde{z})+(1-\lambda)h_2(\tilde{z}')\\
                                                    &\geqslant h_1(z_{\lambda})+\frac{A \lambda(1-\lambda)}{2}\left\| \tilde{z}-z \right\|_2^2+h_2(z_{\lambda}')\\&\qquad\qquad  +\frac{(q'-1)d^{2(1/q'-1)}\lambda(1-\lambda)}{2}\left\| \tilde{z}'-z' \right\|_1^2\\
  &\geqslant h(z_{\lambda},z_{\lambda}')\\&\qquad +\min_{}\left\{  A,\ (q'-1)d^{2(1/q'-1)} \right\} \frac{\lambda(1-\lambda)}{2}\left\| (\tilde{z},\tilde{z}')-(z,z') \right\|_{(\mathcal{V})}^2.
\end{align*}
Therefore, $h$ is
$\min_{}\left\{  A,\ (q'-1)d^{2(1/q'-1)} \right\}$-strongly convex with
respect to $\left\|\,\cdot\,\right\|_{(\mathcal{V})}$.

Applying Theorem~\ref{thm:approachability-mirror-descent} with
\[ M=2,\quad \Delta=\frac{1}{2}(A\, d^{\max (2/p-1,0)}+1),\quad \text{and}\quad K=\min_{}\left\{ A,\ (q'-1)d^{2/q'-2} \right\}, \]
together with Proposition~\ref{prop:global-cost-support-function} gives
\[ \bar{\operatorname{Reg}}_T\leqslant \frac{4}{\sqrt{2}}\sqrt{\frac{A\, d^{\max (2/p-1,0)}+1}{T\,\min_{}\left\{ A,\ (q'-1)d^{2/q'-2} \right\}}}=\frac{4}{\sqrt{T}}\max_{}\left\{ d^{1/p-1/2},\ \sqrt{e(2\log d-1)} \right\}, \]
where the equality follows from the choice $A=\min_{}\left\{ d^{1-2/p},1 \right\}$ and
$q'=1+(2\log d-1)^{-1}$. Hence the result.
\end{proof}

\section*{Acknowledgements}
The author is grateful to Rida Laraki, Vianney Perchet, Sylvain Sorin
and Gilles Stoltz for inspiring discussions, insightful advice and
careful proofreading. The paper also benefited from the numerous
suggestions of improvements from the two anonymous referees.  This
work was supported by a public grant as part of the
\emph{Investissement d'avenir} project, reference
\texttt{ANR-11-LABX-0056-LMH}, LabEx LMHA.
%%% Local Variables:
%%% mode: latex
%%% TeX-master: "refined-approachability"
%%% End:

\bibliographystyle{abbrvnat}
\bibliography{/home/joon/math/bibliography/references}
\newpage
\appendix
\addtocontents{toc}{\protect\setcounter{tocdepth}{0}}

\section{Definitions and properties about closed convex cones}
We recall the definitions of a closed convex cone, of the polar cone, and
gather a few properties. \(\mathcal{W}\) will be a finite-dimensional vector
space and \(\mathcal{W}^*\) its dual.

\begin{definition}
\label{def:closed-convex-cone}
A nonempty subset $\mathcal{C}$ of $\mathcal{W}$ is a \emph{closed convex cone} if it is closed and if
for all $y,y'\in \mathcal{C}$ and $\lambda\in \mathbb{R}_+$, we have $y+y'\in \mathcal{C}$ and $\lambda y\in \mathcal{C}$.
\end{definition}

\begin{definition}
\label{def:polar-cone}
Let $\mathcal{A}$ be a subset of $\mathcal{W}$. The \emph{polar cone} of $\mathcal{A}$ is a subset of the dual space $\mathcal{W}^*$ defined by
\[ \mathcal{A}^\circ =\left\{ x\in \mathcal{W}^*\,,\, \forall y\in \mathcal{A},\ \left< y , x \right> \leqslant 0 \right\}.  \]
\end{definition}
\begin{figure}
  \centering
\begin{tikzpicture}[y=0.80pt, x=0.80pt, yscale=-1.000000, xscale=1.000000, inner sep=0pt, outer sep=0pt,scale=2]
  \path[draw=black,fill=ccccccc,line join=miter,line cap=butt,even odd rule,line
    width=0.800pt] (0.0000,987.3622) -- (0.0000,1052.3622) -- (50.0000,1002.3622);
  \path[draw=black,fill=ce6e6e6,line join=miter,line cap=butt,even odd rule,line
    width=0.800pt] (-55.0000,1052.3622) -- (0.0000,1052.3622) --
    (35.0000,1087.3622);
  \path[draw=black,fill=c999999,line join=miter,line cap=butt,even odd rule,line
    width=0.800pt] (0.0000,1017.3622) .. controls (0.0000,1024.8622) and
    (6.4246,1021.0350) .. (11.5089,1024.0855) .. controls (15.8839,1026.7106) and
    (18.7500,1033.6122) .. (25.0000,1027.3622) .. controls (31.2500,1021.1122) and
    (23.5102,1014.0792) .. (16.6352,1010.9542) .. controls (9.7602,1007.8292) and
    (0.0000,1009.8622) .. (0.0000,1017.3622) -- cycle;
    \draw (0.0000,1052.3622) node {$\bullet$} node[above left=5pt]
    {$0$} node[below=15pt] {$\mathcal{A}^\circ$};
    \draw (30.0000,1002.3622) node {$\mathcal{A}^{\circ \circ}$};
    \draw (15.0000,1018.3622) node {$\mathcal{A}$};
\end{tikzpicture}
  \caption{The polar cone of a set $\mathcal{A}$ and the bipolar}
  \label{fig:polar}
\end{figure}
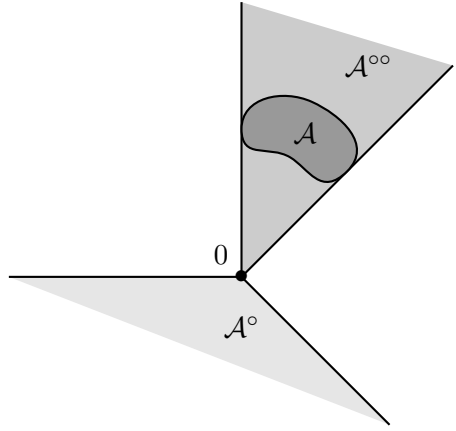

\label{sec:prop-clos-conv}
The following proposition is an immediate consequence of the bipolar theorem---see e.g.Theorem 3.3.14 in~\cite{borwein2010convex}.
\begin{proposition}
\label{prop:polar-smallest}
Let $\mathcal{A}$ be a subset of $\mathcal{W}$.
\begin{enumerate}[(i)]
\item $\mathcal{A}^{\circ \circ }$ is the smallest closed convex cone containing $\mathcal{A}$.
\item If $\mathcal{A}$ is closed and convex, then $\mathcal{A}^{\circ \circ }=\mathbb{R}_+\mathcal{A}$.
\item If $\mathcal{A}$ is a closed convex cone, then $\mathcal{A}^{\circ \circ }=\mathcal{A}$.
\end{enumerate}
\end{proposition}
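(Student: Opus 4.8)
The plan is to make part (i) the load-bearing statement and then read off (ii) and (iii) from it. Two facts are free and worth isolating first. For an \emph{arbitrary} subset $\mathcal{S}$ of a finite-dimensional space, $\mathcal{S}^\circ=\bigcap_{s\in\mathcal{S}}\{x:\langle s,x\rangle\leqslant 0\}$ is an intersection of closed half-spaces through the origin, hence a closed convex cone; applying this twice shows $\mathcal{A}^{\circ\circ}$ is a closed convex cone. And $\mathcal{A}\subseteq\mathcal{A}^{\circ\circ}$, since every $a\in\mathcal{A}$ satisfies $\langle a,x\rangle\leqslant 0$ for all $x\in\mathcal{A}^\circ$, which is exactly the condition defining $(\mathcal{A}^\circ)^\circ$.

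For (i) it then remains to prove minimality: if $\mathcal{C}$ is a closed convex cone with $\mathcal{A}\subseteq\mathcal{C}$, then $\mathcal{A}^{\circ\circ}\subseteq\mathcal{C}$. I would argue this by contraposition via strict separation. If $y\notin\mathcal{C}$, separate $y$ from the closed convex set $\mathcal{C}$ to get $x$ with $\langle y,x\rangle>\sup_{c\in\mathcal{C}}\langle c,x\rangle$; since $\mathcal{C}$ is a cone (so $0\in\mathcal{C}$ and $\mathbb{R}_+\mathcal{C}=\mathcal{C}$), that supremum is $0$ when $\langle\,\cdot\,,x\rangle\leqslant 0$ on $\mathcal{C}$ and $+\infty$ otherwise, so finiteness forces $x\in\mathcal{C}^\circ$ and $\langle y,x\rangle>0$. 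But $\mathcal{A}\subseteq\mathcal{C}$ gives $\mathcal{C}^\circ\subseteq\mathcal{A}^\circ$, hence $x\in\mathcal{A}^\circ$, and then $\langle y,x\rangle>0$ shows $y\notin\mathcal{A}^{\circ\circ}$. Together with the two free facts this is (i); alternatively, (i) is precisely the bipolar theorem (e.g.\ Theorem~3.3.14 in \cite{borwein2010convex}) and may simply be quoted. Part (iii) is then immediate: a closed convex cone $\mathcal{A}$ is the smallest closed convex cone containing $\mathcal{A}$, so $\mathcal{A}^{\circ\circ}=\mathcal{A}$ by (i).

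For (ii), with $\mathcal{A}$ closed and convex, the set $\mathbb{R}_+\mathcal{A}$ is clearly a cone; it is convex, since a convex combination $t\lambda_1 a_1+(1-t)\lambda_2 a_2$ equals $\mu\big(\tfrac{t\lambda_1}{\mu}a_1+\tfrac{(1-t)\lambda_2}{\mu}a_2\big)\in\mu\mathcal{A}$ with $\mu=t\lambda_1+(1-t)\lambda_2$ (the cases $\mu=0$ or one $\lambda_i=0$ being trivial); it contains $\mathcal{A}$; and it is contained in $\mathcal{A}^{\circ\circ}$ because $\mathcal{A}^{\circ\circ}$ is a cone containing $\mathcal{A}$. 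Hence by (i) its closure is exactly $\mathcal{A}^{\circ\circ}$. The real obstacle, and the step I would treat most carefully, is the last one: upgrading $\overline{\mathbb{R}_+\mathcal{A}}=\mathcal{A}^{\circ\circ}$ to $\mathbb{R}_+\mathcal{A}=\mathcal{A}^{\circ\circ}$, i.e.\ checking that $\mathbb{R}_+\mathcal{A}$ is already closed — the cone generated by a closed convex set need not be closed in general, so this requires a mild additional hypothesis together with a short compactness/scaling argument: for instance, if $\mathcal{A}$ is compact with $0\notin\mathcal{A}$, then from $z=\lim_n\lambda_n a_n$ a subsequence gives $a_n\to a\in\mathcal{A}$ with $\|a\|>0$, so $\lambda_n=\|\lambda_n a_n\|/\|a_n\|$ is bounded and converges along a further subsequence to some $\lambda$, whence $z=\lambda a\in\mathbb{R}_+\mathcal{A}$.
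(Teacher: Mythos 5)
Your argument for (i) is the standard separation proof of the bipolar theorem and is correct; the paper writes out no proof at all for this proposition, simply invoking Theorem~3.3.14 of \cite{borwein2010convex}, so for (i) and (iii) you are supplying (correctly) exactly what the paper leaves implicit. The substantive point is your remark on (ii), and you are right to be suspicious: as stated, (ii) is false. Take $\mathcal{A}=\{(s,1):s\in\mathbb{R}\}\subset\mathbb{R}^2$, which is closed and convex; then $\mathcal{A}^\circ=\{0\}\times\mathbb{R}_-$ and $\mathcal{A}^{\circ\circ}=\{(u,v):v\geqslant 0\}$, whereas $\mathbb{R}_+\mathcal{A}=\{(u,v):v>0\}\cup\{(0,0)\}$ is not closed and is strictly smaller. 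What is true in general, and what follows from your part (i) exactly as you argue, is $\mathcal{A}^{\circ\circ}=\overline{\mathbb{R}_+\mathcal{A}}$ for closed convex $\mathcal{A}$; removing the closure needs an extra hypothesis, and your compactness-plus-$0\notin\mathcal{A}$ scaling argument is a correct way to obtain it. Note also that the paper applies (ii) in part (iii) of Proposition~\ref{prop:generator-examples} to an arbitrary nonempty convex compact $\mathcal{X}$, where the same pathology can still occur when $0$ sits on the boundary of $\mathcal{X}$ in a degenerate position (a disk tangent to the origin generates an open half-plane plus the origin), so your caveat is not hypothetical. In short: your proof is correct wherever the statement is correct, and the gap you identify is real, but it lies in the statement itself rather than in your argument.
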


The following statement is a simpler version of Moreau's decomposition
theorem~\citep{moreau1962decomposition}.
\begin{proposition}
\label{prop:polar-euclidean}
Assume that $\mathcal{W}$ is an Euclidean space. We identify $\mathcal{W}$ and
its dual space $\mathcal{W}^*$. Let $\mathcal{C}$ be a closed convex cone in $\mathcal{W}$, and $y\in \mathcal{W}$. Then, $y-\operatorname{proj}_{\mathcal{C}}y=\operatorname{proj}_{\mathcal{C}^\circ}y$, where $\operatorname{proj}$ denotes the Euclidean projection. In particular, $y-\operatorname{proj}_{\mathcal{C}}y$ belongs to $\mathcal{C}^\circ$.
\end{proposition}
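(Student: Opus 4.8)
The plan is to rely solely on the variational (``obtuse-angle'') characterization of the Euclidean projection onto a closed convex set: for a closed convex set $\mathcal{C}$ and a point $y$, the point $p=\operatorname{proj}_{\mathcal{C}}y$ is the unique element of $\mathcal{C}$ such that $\left< y-p , z-p \right>\leqslant 0$ for every $z\in \mathcal{C}$. Everything else will follow by exploiting that $\mathcal{C}$ is, in addition, a cone.

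First I would set $p=\operatorname{proj}_{\mathcal{C}}y$ and observe that, since $0\in \mathcal{C}$ and $2p\in \mathcal{C}$ (using that $\mathcal{C}$ is stable under multiplication by the scalars $0$ and $2$), plugging $z=0$ and $z=2p$ into the variational inequality yields $\left< y-p , p \right>\geqslant 0$ and $\left< y-p , p \right>\leqslant 0$ respectively, hence $\left< y-p , p \right>=0$. Combining this orthogonality relation with the variational inequality for an arbitrary $z\in \mathcal{C}$ gives $\left< y-p , z \right>\leqslant \left< y-p , p \right>=0$ for all $z\in \mathcal{C}$, which is precisely the statement that $y-p\in \mathcal{C}^\circ$. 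This already proves the last assertion of the proposition.

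It then remains to identify $q:=y-p$ with $\operatorname{proj}_{\mathcal{C}^\circ}y$. Since we already know $q\in \mathcal{C}^\circ$ and $\mathcal{C}^\circ$ is a closed convex set, it suffices, by the same variational characterization applied now to $\mathcal{C}^\circ$, to check that $\left< y-q , w-q \right>\leqslant 0$ for all $w\in \mathcal{C}^\circ$. But $y-q=p$, and $\left< p , q \right>=\left< p , y-p \right>=0$ by the orthogonality established above, so this inequality reduces to $\left< p , w \right>\leqslant 0$ for all $w\in \mathcal{C}^\circ$, which holds by the very definition of the polar cone since $p\in \mathcal{C}$. Uniqueness of the projection then gives $q=\operatorname{proj}_{\mathcal{C}^\circ}y$, i.e.\ $y-\operatorname{proj}_{\mathcal{C}}y=\operatorname{proj}_{\mathcal{C}^\circ}y$.

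I do not expect any genuine obstacle here: once the obtuse-angle characterization of projections is in hand, the whole argument is a short manipulation of inner products. The only point requiring a little care is to make systematic use of the cone structure — specifically the two test points $0$ and $2p$ — to extract the orthogonality $\left< y-p , p \right>=0$, since this is exactly what upgrades the one-sided variational inequality into the clean decomposition.
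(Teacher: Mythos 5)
Your argument is correct and complete. The paper itself does not prove this proposition: it simply states it as a special case of Moreau's decomposition theorem and cites \cite{moreau1962decomposition}, so there is no proof to compare against step by step. What you give is exactly the standard self-contained argument one would supply: the obtuse-angle characterization of the projection, the two test points $0$ and $2p$ (both legitimately in $\mathcal{C}$ because a closed convex cone contains $0$ and is stable under nonnegative scaling) to extract the orthogonality $\left< y-p , p \right>=0$, the deduction that $y-p\in\mathcal{C}^\circ$, and then the verification of the variational inequality for $\mathcal{C}^\circ$ using $\left< p , w \right>\leqslant 0$ for $w\in\mathcal{C}^\circ$. Every step checks out, including the implicit facts that $\mathcal{C}^\circ$ is closed and convex (so the projection onto it is well defined and unique). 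No gaps.
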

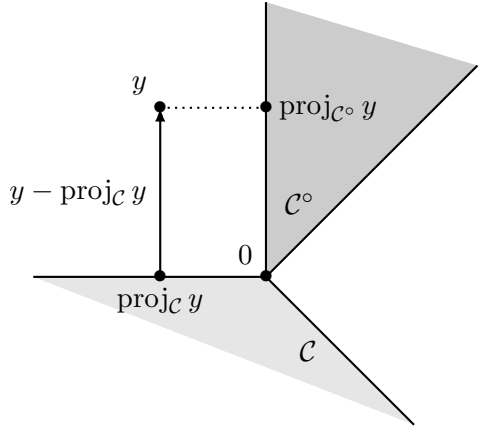
\begin{figure}[h!]
  \centering
\begin{tikzpicture}[y=0.80pt, x=0.80pt, yscale=-1.000000, xscale=1.000000, inner sep=0pt, outer sep=0pt,scale=2]
  \path[draw=black,fill=ccccccc,line join=miter,line cap=butt,even odd rule,line
    width=0.800pt] (0.0000,987.3622) -- (0.0000,1052.3622) -- (50.0000,1002.3622);
  \path[draw=black,fill=ce6e6e6,line join=miter,line cap=butt,even odd rule,line
    width=0.800pt] (-55.0000,1052.3622) -- (0.0000,1052.3622) --
    (35.0000,1087.3622);
  \path[draw=black,line join=miter,line cap=butt,even odd rule,line width=0.800pt,->,>=latex]
    (-25.0000,1052.3622) -- (-25.0000,1012.3622) node[midway,left=5pt]
    {$y-\proj_{\mathcal{C}}y$};
  \path[draw=black,line join=miter,line cap=butt,even odd rule,line width=0.800pt,dotted]
    (0.0000,1012.3622) -- (-25.0000,1012.3622);
    \draw(0.0000,1052.3622) node {$\bullet$} node[above left=5pt] {$0$};
    \draw (-25.0000,1012.3622) node {$\bullet$} node[above left=5pt] {$y$};
    \draw (-25.0000,1052.3622) node {$\bullet$} node[below=4pt] {$\proj_{\mathcal{C}}y$};
    \draw (0.0000,1012.3622) node {$\bullet$} node[right=5pt] {$\proj_{\mathcal{C}^\circ }y$};
    \draw (10,1070) node {$\mathcal{C}$};
    \draw (8,1035) node {$\mathcal{C}^\circ$};
\end{tikzpicture}
  \caption{Illustration of Proposition~\ref{prop:polar-euclidean}}
  \label{fig:polar-euclidean}
\end{figure}

\subsection{Proof of Proposition~\ref{prop:generator-examples}}
\label{sec:proof-generator-examples}

(\ref{item:orthant}) is easy.
(\ref{item:generator-always}) holds
because $\mathcal{B}\cap \mathcal{C}$
is indeed nonempty, convex as the intersection of two convex sets, and
for any point $x\in \mathcal{C}\setminus \left\{ 0 \right\}$, $x/\left\| x \right\|_{}$ belongs to
$\mathcal{B}\cap \mathcal{C}$, so that $\mathbb{R}_+(\mathcal{B}\cap \mathcal{C})=\mathcal{C}$.
(\ref{item:Z-generator-of-what}) is a consequence of Proposition~\ref{prop:polar-smallest}.

\section{Properties of regularizers}
\label{sec:proofs-prop-regul}

\begin{proposition}
\label{prop:lengendre}
Let $h$ be a regularizer on $\mathcal{X}$. Its Legendre--Fenchel transform, defined by
\[ h^*(y)=\sup_{x\in \mathcal{V}}\left\{ \left< y , x \right> -h(x) \right\},\quad y\in \mathcal{V}^*,  \]
satisfies the following properties.
\begin{enumerate}[(i)]
\item\label{item:legendre-1} $\operatorname{dom}h^*=\mathcal{V}^*$;
\item\label{item:legendre-2} $h^*$ is differentiable on $\mathcal{V}^*$;
\item\label{item:legendre-3} For all $y\in \mathcal{V}^*$, $\nabla h^*(y)=\argmax_{x\in \mathcal{X}}\left\{ \left< y , x \right> -h(x) \right\}$. In particular, $\nabla h^*$ takes values in $\mathcal{X}$.
\end{enumerate}
\end{proposition}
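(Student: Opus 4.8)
The plan is to obtain the three claims from standard convex analysis, once we have exploited the compactness of $\mathcal{X}$ together with the strict convexity and lower semicontinuity of $h$. First I would prove (\ref{item:legendre-1}). Since $\operatorname{dom}h=\mathcal{X}$, the supremum defining $h^*(y)$ may be restricted to $x\in \mathcal{X}$, and the map $x\mapsto \left< y , x \right> -h(x)$ is upper semicontinuous on the nonempty compact set $\mathcal{X}$ (because $h$ is lower semicontinuous) and finite at any point of $\mathcal{X}$; it therefore attains a finite maximum. Hence $h^*(y)\in \mathbb{R}$ for every $y\in \mathcal{V}^*$, which is (\ref{item:legendre-1}), and moreover the set $\argmax_{x\in \mathcal{X}}\left\{ \left< y , x \right> -h(x) \right\}$ is nonempty. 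Strict convexity of $h$ forces this argmax to be a singleton: if $x_1\neq x_2$ were both maximizers, then $(x_1+x_2)/2\in \mathcal{X}$ would strictly increase the objective, a contradiction. Write $x(y)$ for this unique maximizer (this also justifies the well-definedness of $x_t$ in the algorithm).

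Next I would identify the subdifferential of $h^*$. Because $\mathcal{X}\neq \emptyset$, $h$ is proper, convex and lower semicontinuous, so the Fenchel--Young inequality $h(x)+h^*(y)\geqslant \left< y , x \right>$ holds for all $x,y$, with equality if and only if $y\in \partial h(x)$, if and only if $x\in \partial h^*(y)$. But equality in Fenchel--Young is precisely the statement that $x$ maximizes $x'\mapsto \left< y , x' \right> -h(x')$. Therefore $\partial h^*(y)=\argmax_{x\in \mathcal{X}}\left\{ \left< y , x \right> -h(x) \right\}=\left\{ x(y) \right\}$ for every $y\in \mathcal{V}^*$.

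Finally, by (\ref{item:legendre-1}) the convex function $h^*$ is finite, hence continuous, on all of the finite-dimensional space $\mathcal{V}^*$; a finite convex function on such a space is differentiable at a point exactly when its subdifferential there reduces to a single element, in which case that element is the gradient (e.g.\ Theorem~25.1 in Rockafellar's \emph{Convex Analysis}). Combined with the previous step, this yields (\ref{item:legendre-2}) and (\ref{item:legendre-3}), with $\nabla h^*(y)=x(y)\in \mathcal{X}$. I do not expect a genuine obstacle here: the argument is entirely standard, and the only points requiring care are checking that the hypotheses (properness, lower semicontinuity) needed for the biconditional $y\in \partial h(x)\Leftrightarrow x\in \partial h^*(y)$ are in place, and invoking the correct form of the ``singleton subdifferential $\Rightarrow$ differentiable'' statement.
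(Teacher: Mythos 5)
Your proof is correct and follows essentially the same route as the paper's: existence of the maximum from upper semicontinuity on the compact set $\mathcal{X}$, uniqueness from strict convexity, the identification $\partial h^*(y)=\argmax_{x\in \mathcal{X}}\left\{ \left< y , x \right> -h(x) \right\}$ via the subdifferential/Fenchel--Young equivalences, and differentiability from the subdifferential being a singleton. You simply spell out the justifications (properness and lower semicontinuity for the biconditional, Rockafellar's Theorem~25.1) that the paper leaves implicit.
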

\begin{proof}
(\ref{item:legendre-1}) Let $w\in \mathcal{V}^*$. The function $x\longmapsto \left< w , x \right> -h(x)$ equals $-\infty$ outside of $\mathcal{X}$, and is upper semicontinuous on
$\mathcal{X}$ which is compact. It thus has a maximum and $h^*(w)<+\infty$.

(\ref{item:legendre-2},\ref{item:legendre-3}) Moreover, this maximum is attained at
a unique point because $h$ is strictly convex. Besides, for $x\in \mathcal{V}$ and $w\in \mathcal{V}^*$
 \[ x\in \partial h^*(w)\quad  \Longleftrightarrow \quad w\in \partial h(x)\quad \Longleftrightarrow \quad x\in \argmax_{x'\in \mathcal{X}}\left\{ \left< w , x' \right> -h(x') \right\},  \]
in other words, $\partial h^*(w)=\argmax_{x'\in \mathcal{X}}\left\{ \left< w , x' \right> -h(x') \right\}$. This argmax is a singleton as we noticed.
It means that $h^*$ is differentiable.
\end{proof}

Recall that \(\Delta_d\) denotes the unit simplex of \(\mathbb{R}^d\): \(\Delta_d=\left\{ x\in \mathbb{R}_+^d\,\middle|\,\sum_{i=1}^dx_i=1 \right\}\).
\begin{definition}[Entropic regularizer]
The \emph{entropic regularizer} $h_{\mathrm{ent}}:\mathbb{R}^d\to
\mathbb{R}\cup\{+\infty\}$ is defined as
\[ h_{\mathrm{ent}}(x)=\begin{cases} \sum_{i=1}^dx_i\log x_i&\text{if $x\in \Delta_d$}\\
+\infty&\mathrm{otherwise}, \end{cases} \]
where $x_i\log x_i=0$ when $x_i=0$.
\end{definition}

\begin{proposition}
\label{prop:entropy}
\begin{enumerate}[(i)]
\item\label{item:entropy-regularizer} $h_{\mathrm{ent}}$ is a regularizer on $\Delta_d$;
\item\label{item:entropy-map} $\displaystyle \nabla h^*_{\mathrm{ent}}(y)=\left( \frac{\exp y_i}{\sum_{j=1}^d\exp y_j} \right)_{1\leqslant j\leqslant d}$, for all $y\in \mathbb{R}^d$;
\item\label{item:entropy-delta} $\max_{x\in \Delta_d}h_{\mathrm{ent}}(x)-\min_{x\in \Delta_d}h_{\mathrm{ent}}(x)=\log d$;
\item\label{item:entropy-strong-convexity} $h_{\mathrm{ent}}$ is $1$-strongly convex with respect to $\left\|\,\cdot\,\right\|_1$. 
\end{enumerate}
\end{proposition}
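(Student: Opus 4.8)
The four assertions are largely independent and I would take them in order, with~(\ref{item:entropy-strong-convexity}) the only substantial one. For~(\ref{item:entropy-regularizer}), the domain of $h_{\mathrm{ent}}$ is $\Delta_d$ by construction, since with the convention $0\log 0=0$ the map $\phi(t):=t\log t$ is real-valued and continuous on $[0,+\infty)$; lower semicontinuity is then immediate because $\Delta_d$ is closed, $h_{\mathrm{ent}}$ is continuous on it, and $h_{\mathrm{ent}}=+\infty$ outside. Strict convexity follows from strict convexity of $\phi$ on $[0,+\infty)$ (one has $\phi''(t)=1/t>0$ on $(0,+\infty)$, and the endpoint is checked directly): for distinct $x,x'\in\Delta_d$, pick a coordinate $i$ with $x_i\neq x_i'$; strict convexity of $\phi$ in that coordinate together with convexity of $\phi$ in the remaining ones gives the strict inequality after summing.

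For~(\ref{item:entropy-map}), Proposition~\ref{prop:lengendre}(\ref{item:legendre-3}) identifies $\nabla h^*_{\mathrm{ent}}(y)$ with $\argmax_{x\in\Delta_d}\{\langle y,x\rangle-h_{\mathrm{ent}}(x)\}$. Writing $p=p(y)$ for the claimed softmax vector, which lies in the relative interior of $\Delta_d$, I would use the identity
\[ \langle y,x\rangle-h_{\mathrm{ent}}(x)=\log\Big(\sum_{j=1}^d e^{y_j}\Big)-\sum_{i=1}^d x_i\log\frac{x_i}{p_i},\qquad x\in\Delta_d, \]
in which the last sum is a relative entropy, nonnegative and equal to zero exactly when $x=p$ (a one-line Jensen argument from strict convexity of $\phi$); hence $p$ is the maximiser. (A Lagrange-multiplier computation gives the same answer but needs an extra word on why the maximum is attained in the relative interior.) For~(\ref{item:entropy-delta}), the maximum is treated by noting each summand satisfies $x_i\log x_i\leq 0$ on $\Delta_d$, so $h_{\mathrm{ent}}\leq 0$, with equality at any vertex; thus $\max_{\Delta_d}h_{\mathrm{ent}}=0$. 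For the minimum, Jensen (concavity of $\log$) gives $-h_{\mathrm{ent}}(x)=\sum_i x_i\log(1/x_i)\leq\log d$, with equality iff $x$ is uniform; thus $\min_{\Delta_d}h_{\mathrm{ent}}=-\log d$, and the difference is $\log d$ (equivalently, both extrema come out of the displayed identity evaluated at $y=0$).

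Part~(\ref{item:entropy-strong-convexity}) is the real content and is, in essence, Pinsker's inequality. On the relative interior of $\Delta_d$ the function $h_{\mathrm{ent}}$ is $C^2$ with Hessian $\mathrm{diag}(1/x_1,\dots,1/x_d)$, and Cauchy--Schwarz yields, for every $u\in\mathbb{R}^d$,
\[ \|u\|_1^2=\Big(\sum_{i=1}^d\frac{|u_i|}{\sqrt{x_i}}\sqrt{x_i}\Big)^2\leq\Big(\sum_{i=1}^d\frac{u_i^2}{x_i}\Big)\Big(\sum_{i=1}^d x_i\Big)=\sum_{i=1}^d\frac{u_i^2}{x_i}. \]
I would then convert this Hessian lower bound into the chord inequality of Definition~\ref{def:strong-convexity} with $K=1$ by a one-variable argument: for $x,x'$ in the relative interior, $\psi(\lambda):=h_{\mathrm{ent}}(\lambda x+(1-\lambda)x')$ is $C^2$ on $[0,1]$ with $\psi''(\lambda)\geq\|x-x'\|_1^2$, so $\lambda\mapsto\psi(\lambda)-\lambda\psi(1)-(1-\lambda)\psi(0)+\tfrac12\lambda(1-\lambda)\|x-x'\|_1^2$ is convex and vanishes at $\lambda\in\{0,1\}$, hence is nonpositive on $[0,1]$. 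Finally I would extend the resulting inequality to all $x,x'\in\Delta_d$ by continuity of both sides in $(x,x')$ (approximating boundary points by interior ones), and note that it is trivial whenever $x$ or $x'$ lies outside $\Delta_d$, since the right-hand side is then $+\infty$. The one delicate point throughout is the boundary of $\Delta_d$, where $h_{\mathrm{ent}}$ fails to be differentiable and the convention $0\log 0=0$ is in force; this is handled by the continuity of $\phi$ on $[0,+\infty)$, used once for lower semicontinuity in~(\ref{item:entropy-regularizer}) and once for the closing limit in~(\ref{item:entropy-strong-convexity}).
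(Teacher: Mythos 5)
Your proof is correct, and for the substantive part~(\ref{item:entropy-strong-convexity}) it takes a genuinely different route from the paper. The paper extends $h_{\mathrm{ent}}$ to the function $F(x)=\sum_i(x_i\log x_i-x_i)+1$ on all of $\mathbb{R}_+^d$, invokes the first-order (gradient-monotonicity) characterization of strong convexity, and reduces to the scalar inequality $(s-1)\log s\geq 2(s-1)^2/(s+1)$ before restricting back to $\Delta_d$ via Lemma~\ref{lm:strong-convexity-restriction}; you instead work directly on the simplex, bounding the second derivative of $h_{\mathrm{ent}}$ along a segment from below by $\|x-x'\|_1^2$ via Cauchy--Schwarz and the constraint $\sum_i x_i=1$, integrating to get the chord inequality of Definition~\ref{def:strong-convexity}, and passing to the boundary by continuity. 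Your route has the advantage of making explicit exactly where the normalization $\sum_i x_i=1$ enters (the final Cauchy--Schwarz step fails without it, which is precisely the point the paper's passage from $\sum_i 2(x_i'-x_i)^2/(x_i'+x_i)$ to $\|x'-x\|_1^2$ glosses over when $F$ is considered on the whole orthant), at the cost of a short density argument at the boundary that the paper's restriction lemma avoids. For the easier parts you also argue differently but equivalently: the Gibbs variational identity with a relative-entropy remainder for~(\ref{item:entropy-map}) where the paper cites a reference, and a sign/Jensen argument for~(\ref{item:entropy-delta}) where the paper uses extreme points and symmetry. All steps check out.
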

\begin{proof}
(\ref{item:entropy-regularizer}) is immediate, and (\ref{item:entropy-map}) is classic---see e.g.~\cite[Example 2.25]{boyd2004convex}.

(\ref{item:entropy-delta}) $h_{\mathrm{ent}}$ being convex, its maximum on $\Delta_d$ is attained at one of the extreme points.
At each extreme point, the value of $h_{\mathrm{ent}}$ is
zero. Therefore, $\max_{\Delta_d}h_{\mathrm{ent}}=0$. As for the
minimum, $h_{\mathrm{ent}}$ being convex and symmetric with respect to
the components $x_i$, its minimum is attained
at the centroid $(1/d,\dots,1/d)$ of the simplex $\Delta_d$, where its value is $-\log d$. Therefore, $\min_{\Delta_d}h_{\mathrm{ent}}=-\log d$ and $\max_{\Delta_d} h_{\mathrm{ent}}-\min_{\Delta_d}h_{\mathrm{ent}}=\log d$.

(\ref{item:entropy-strong-convexity}) Consider $F:\mathbb{R}^d\to \mathbb{R}\cup\{+\infty\}$ defined by
\[ F(x)= \begin{cases}\sum_{i=1}^d(x_i\log x_i-x_i)+1&\text{if $x\in \mathbb{R}_+^d$}\\
+\infty&\text{otherwise}. \end{cases}
 \]
 Let us prove that $F$ is $1$-strongly convex with respect to $\left\|\,\cdot\,\right\|_{1}$.
By definition, the domain of $F$ is $\mathbb{R}_+^d$. It is differentiable on the interior of the domain $(\mathbb{R}_+^*)^d$ and $\nabla F(x)=(\log x_i)_{1\leqslant i\leqslant d}$ for $x\in (\mathbb{R}_+^*)^d$.
Therefore, the norm of $\nabla F(x)$ goes to $+\infty$ when $x$ converges to a boundary point of $\mathbb{R}_+^d$.~\cite[Theorem 26.1]{rockafellar1970convex} then assures that the subdifferential $\partial F(x)$ is empty as soon as $x\not \in (\mathbb{R}_+^*)^d$.
Therefore, the characterization of strong convexity from~\cite[Lemma 14]{shalev2007online}, which we aim at proving, can be written
\begin{equation}
\label{eq:6}
\left< \nabla F(x')-\nabla F(x) , x'-x \right> \geqslant \left\| x'-x \right\|_1^2,\quad x,x'\in (\mathbb{R}_+^*)^d.
\end{equation}
Let $x,x'\in (\mathbb{R}_+^*)^d$.
\[ \left< \nabla F(x')-\nabla F(x) , x'-x \right>=\sum_{i=1}^d\log \frac{x'_i}{x_i}(x'_i-x_i). \]
A simple study of function shows that $(s-1)\log s-2(s-1)^2/(s+1)\geqslant 0$ for $s\geqslant 0$. Applied with $s=x'_i/x_i$, this gives
\[ \sum_{i=1}^d\log \frac{x'_i}{x_i}(x'_i-x_i)\geqslant \left\| x'-x \right\|_1^2, \]
and (\ref{eq:6}) is proved. $F$ is therefore $1$-strongly convex with respect to $\left\|\,\cdot\,\right\|_1$ and so is $h_{\mathrm{ent}}$ thanks to Lemma~\ref{lm:strong-convexity-restriction}.
\end{proof}

\begin{definition}[$\ell_p$ regularizer]
For $p\in (1,2]$ and a nonempty convex compact subset
$\mathcal{X}$ of $\mathbb{R}^d$, the associated \emph{$\ell_p$
  regularizer} is defined as
\[ h_{p}(x)=\begin{cases} \frac{1}{2}\left\| x \right\|_p^2&\text{if $x\in \mathcal{X}$}\\
+\infty&\mathrm{otherwise}. \end{cases} \]
\end{definition}

\begin{proposition}
\label{prop:lp-regularizer}
Let $p\in (1,2]$.
\begin{enumerate}[(i)]
\item\label{item:lp-regularizer} $h_p$ is a regularizer on $\mathcal{X}$;
\item\label{item:lp-strong-convexity} $h_p$ is $(p-1)d^{2(1/p-1)}$-strongly convex with respect to $\left\|\,\cdot\,\right\|_1$;
\item\label{item:euclidean-strongly-convex} $h_2$ is $1$-strongly
  convex with respect to $\left\|\,\cdot\,\right\|_2$;
\item\label{item:euclidean-projection} $\nabla h^*_2(y)=\proj_{\mathcal{X}}(y)$ for all $y\in \mathbb{R}^d$ where
$\proj_{\mathcal{X}}$ denotes the Euclidean projection onto $\mathcal{X}$.
\end{enumerate}
\end{proposition}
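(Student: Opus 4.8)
The plan is to prove the four items in the order (ii), (i), (iii), (iv), since the strict-convexity clause of (i) will fall out of the strong convexity proved in (ii).

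For (ii), the key input is the classical fact that $\psi_p\colon x\mapsto \tfrac12\left\|x\right\|_p^2$ is $(p-1)$-strongly convex with respect to $\left\|\,\cdot\,\right\|_p$ on all of $\mathbb{R}^d$ whenever $p\in(1,2]$. I would either cite this (it goes back to Ball--Carlen--Lieb, and a convenient form appears in Shalev-Shwartz's survey/thesis) or reprove it by bounding the Hessian of $\psi_p$ from below by $(p-1)$ times the squared $\ell_p$ metric on the open set $\{x:\ x_i\neq 0\text{ for all }i\}$ and extending to the whole space by continuity. Granting this, I combine it with the norm comparison $\left\|x\right\|_1\leqslant d^{1-1/p}\left\|x\right\|_p$, equivalently $\left\|x\right\|_p^2\geqslant d^{2(1/p-1)}\left\|x\right\|_1^2$, to transfer the strong-convexity modulus from $\left\|\,\cdot\,\right\|_p$ to $\left\|\,\cdot\,\right\|_1$; this shows $\psi_p$ is $(p-1)d^{2(1/p-1)}$-strongly convex with respect to $\left\|\,\cdot\,\right\|_1$ on $\mathbb{R}^d$. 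Restricting the domain to the convex set $\mathcal{X}$ preserves strong convexity (Lemma~\ref{lm:strong-convexity-restriction}), which is (ii). The strong convexity of $\psi_p$ with respect to $\left\|\,\cdot\,\right\|_p$ is the main obstacle; the rest of this part is bookkeeping.

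For (i), the domain of $h_p$ is $\mathcal{X}$ by construction, and $\mathcal{X}$ is nonempty, convex and compact by hypothesis; $h_p=\tfrac12\left\|\,\cdot\,\right\|_p^2+I_{\mathcal{X}}$ is lower semicontinuous because $\tfrac12\left\|\,\cdot\,\right\|_p^2$ is continuous on $\mathbb{R}^d$ and $I_{\mathcal{X}}$ is lower semicontinuous ($\mathcal{X}$ being closed), and it is convex as the sum of a convex function and a convex indicator. Strict convexity is the only delicate point, and it is immediate from (ii): any function that is $K$-strongly convex with respect to some norm with $K>0$ is strictly convex on its domain.

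For (iii), the parallelogram identity yields the exact equality $\tfrac12\left\|\lambda x+(1-\lambda)x'\right\|_2^2=\lambda\tfrac12\left\|x\right\|_2^2+(1-\lambda)\tfrac12\left\|x'\right\|_2^2-\tfrac{\lambda(1-\lambda)}{2}\left\|x'-x\right\|_2^2$ for all $x,x'\in\mathbb{R}^d$ and $\lambda\in[0,1]$, i.e.\ $\tfrac12\left\|\,\cdot\,\right\|_2^2$ is $1$-strongly convex with respect to $\left\|\,\cdot\,\right\|_2$; restricting to $\mathcal{X}$ (again Lemma~\ref{lm:strong-convexity-restriction}) gives (iii). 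For (iv), Proposition~\ref{prop:lengendre}~(\ref{item:legendre-3}) gives $\nabla h_2^*(y)=\argmax_{x\in\mathcal{X}}\left\{\left<y,x\right>-\tfrac12\left\|x\right\|_2^2\right\}$; completing the square, $\left<y,x\right>-\tfrac12\left\|x\right\|_2^2=\tfrac12\left\|y\right\|_2^2-\tfrac12\left\|x-y\right\|_2^2$, so this argmax equals $\argmin_{x\in\mathcal{X}}\left\|x-y\right\|_2^2=\proj_{\mathcal{X}}(y)$, the Euclidean projection, which is well defined and single-valued since $\mathcal{X}$ is nonempty, closed and convex. This proves (iv).
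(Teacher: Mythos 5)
Your proof is correct and follows essentially the same route as the paper: both rest on the classical $(p-1)$-strong convexity of $\tfrac12\left\|\,\cdot\,\right\|_p^2$ with respect to $\left\|\,\cdot\,\right\|_p$, the norm comparison $\left\|\,\cdot\,\right\|_p\geqslant d^{1/p-1}\left\|\,\cdot\,\right\|_1$, the restriction lemma for strong convexity, and completing the square for the projection formula. Your only departures are cosmetic (deriving the $p=2$ case from the parallelogram identity rather than by specialization, and justifying strict convexity explicitly via strong convexity, which the paper leaves implicit).
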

\begin{proof}
(\ref{item:lp-regularizer}) Since $p\geqslant 1$, $\left\|\,\cdot\,\right\|_p$ is a norm and is therefore convex. $h_p$ then clearly is a regularizer on $\mathcal{X}$.
(\ref{item:lp-strong-convexity},\ref{item:euclidean-strongly-convex})
We consider the function $F(x)=\frac{1}{2}\left\| x \right\|_p^2$
defined on $\mathbb{R}^d$ which is $(p-1)$-strongly convex with
respect to $\left\|\,\cdot\,\right\|_p$---see
e.g.~\cite{bubeck2011introduction} or \cite[Corollary
10]{kakade2012regularization}. Then, so is $h_p$ thanks to
Lemma~\ref{lm:strong-convexity-restriction}. Substituting $p=2$ gives
(\ref{item:euclidean-strongly-convex}). The strong convexity with
respect to $\left\|\,\cdot\,\right\|_1$ follows from the standard
comparison $\left\|\,\cdot\,\right\|_{p}\geqslant
d^{1/q-1}\left\|\,\cdot\,\right\|_1$ in
$\mathbb{R}^d$. (\ref{item:euclidean-projection}) For all $y\in
\mathbb{R}^d$, using property (\ref{item:legendre-3}) from
Proposition~\ref{prop:lengendre},
\begin{align*}
\nabla h_2^*(y)&=\argmax_{x\in \mathcal{X}}\left\{ \left< y , x
                 \right> -\frac{1}{2}\left\| x \right\|_2^2
                 \right\}=\argmin_{x\in \mathcal{X}}\left\{
                 \frac{1}{2}\left\| x \right\|_2^2-\left< y , x
                 \right> +\frac{1}{2}\left\| y \right\|_2^2 \right\}\\
  &=\argmin_{x\in \mathcal{X}}\left\| y-x \right\|_2^2=\operatorname{proj}_{\mathcal{X}}(y).
\end{align*}
\end{proof}

\begin{lemma}
\label{lm:strong-convexity-restriction}
Let $\left\|\,\cdot\,\right\|_{}$ a norm on $\mathcal{V}$, $K>0$ and $h,F:\mathcal{V}\to \mathbb{R}\cup\{+\infty\}$ two convex functions such that for all $x\in \mathcal{V}$,
\[ h(x)=F(x)\quad \text{or}\quad h(x)=+\infty. \]
Then, if $F$ is $K$-strongly convex with respect to $\left\|\,\cdot\,\right\|_{}$, so is $h$.
\end{lemma}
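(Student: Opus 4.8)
The plan is to verify the defining inequality \eqref{eq:definition-strong-convexity} directly for $h$, distinguishing cases according to whether the relevant points lie in the effective domain of $h$. Fix $x,x'\in\mathcal{V}$ and $\lambda\in[0,1]$; the goal is to show that $h(\lambda x+(1-\lambda)x')\leqslant \lambda h(x)+(1-\lambda)h(x')-\frac{K\lambda(1-\lambda)}{2}\left\| x'-x \right\|^2$. When $\lambda\in\{0,1\}$ both sides are equal, so I may assume $\lambda\in(0,1)$, and since $K>0$ the coefficient $K\lambda(1-\lambda)/2$ is then positive.

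If $h(x)=+\infty$ or $h(x')=+\infty$, the right-hand side equals $+\infty$ and there is nothing to prove. So assume $h(x)$ and $h(x')$ are both finite; by the hypothesis relating $h$ and $F$ this forces $h(x)=F(x)$ and $h(x')=F(x')$, both finite. The only genuine subtlety is that $h(\lambda x+(1-\lambda)x')$ must also be finite, so that it equals $F(\lambda x+(1-\lambda)x')$ rather than $+\infty$: this follows from the convexity of $h$, which yields $h(\lambda x+(1-\lambda)x')\leqslant \lambda h(x)+(1-\lambda)h(x')<+\infty$. In other words, $\operatorname{dom}h$ is convex and contains the segment $[x,x']$, so the case relating $h$ to $F$ applies at the midpoint as well.

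With this in hand, $h$ and $F$ coincide at the three points $x$, $x'$ and $\lambda x+(1-\lambda)x'$, and applying the $K$-strong convexity of $F$ at these points gives precisely the inequality \eqref{eq:definition-strong-convexity} for $h$. The argument is otherwise routine; the one step that must not be overlooked is the appeal to convexity of $h$ to guarantee that the convex combination lies in $\operatorname{dom}h$, since the hypothesis relating $h$ and $F$ says nothing about points where $h=+\infty$.
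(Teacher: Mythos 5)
Your proof is correct and follows essentially the same route as the paper's: a case analysis on finiteness of the values of $h$, with the convexity of $h$ used to handle the degenerate case and the strong convexity of $F$ applied in the main case. The only cosmetic difference is that the paper splits on whether $h(\lambda x+(1-\lambda)x')$ is finite and uses the one-sided bound $F\leqslant h$ at the endpoints, whereas you split on the endpoints and deduce finiteness at the convex combination; the two arguments are equivalent.
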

\begin{proof}
Note that for all $x\in \mathcal{V}$, $F(x)\leqslant h(x)$.
Let us prove that $h$ satisfies the condition from Definition~\ref{def:strong-convexity}. Let $x,x'\in \mathcal{V}$, $\lambda\in [0,1]$ and denote $x''=\lambda x+(1-\lambda)x'$.
Let us first assume that $h(x'')=+\infty$. By convexity of $h$, either $h(x)$ or $h(x')$ is equal to $+\infty$, and the right-hand side of (\ref{eq:definition-strong-convexity}) is equal to $+\infty$. Inequality (\ref{eq:definition-strong-convexity}) therefore holds.
If $h(x'')$ is finite,
\begin{align*}
h(x'')&=F(x'')\leqslant \lambda F(x)+(1-\lambda)F(x')-\frac{K\lambda(1-\lambda)}{2}\left\| x'-x \right\|_{}^2\\ 
&\leqslant \lambda h(x)+(1-\lambda)h(x')-\frac{K\lambda(1-\lambda)}{2}\left\| x'-x \right\|_{}^2,
\end{align*}
and (\ref{eq:definition-strong-convexity}) is proved.
\end{proof}

\section{Various postponed proofs}
\label{sec:postponed-proofs}

\subsection{Proof of Proposition~\ref{prop:distance-support}}
\label{sec:proof-support}
Let $y\in \mathcal{V}^*$. Using the definition of the dual norm and Sion's minimax theorem,
\[ \inf_{y'\in \mathcal{C}}\left\| y'-y \right\|_{*}=\inf_{y'\in \mathcal{C}}\sup_{x\in \mathcal{B}}\left< y-y' , x \right> =\sup_{x\in \mathcal{B}}\inf_{y'\in \mathcal{C}}\left\{ \left< y , x \right> -\left< y' , x \right>  \right\}.  \]
Suppose $x$ does not belong to $\mathcal{C}^{\circ }$. Then, there
exists $y_0'\in \mathcal{C}$ such that
$\left< y'_0 , x \right>>0$. $\mathcal{C}$ being closed by
multiplication by $\mathbb{R}_+$, the quantity
$\left< y' , x \right>$ (with $y'\in \mathcal{C}$) can be made
arbitrarily large by selecting $y'=\lambda y_0'$ and letting $\lambda\to +\infty$, and thus the above infimum is equal to $ -\infty$.
Therefore, we can restrict the above supremum to $\mathcal{B}\cap \mathcal{C}^{\circ }$. We thus have
\[ \inf_{y'\in \mathcal{C}}\left\| y'-y \right\|_{*}=\sup_{x\in \mathcal{B}\cap \mathcal{C}^\circ }\left\{ \left< y , x \right> -\sup_{y'\in \mathcal{C}}\left< y' , x \right>  \right\}.  \]
The above embedded supremum is zero because for $x\in \mathcal{B}\cap \mathcal{C}^\circ$ and $y'\in \mathcal{C}$ we obviously have $\left< y' , x \right> \leqslant 0$, and $0$ is attained with $y'=0$. Finally,
\[ \inf_{y'\in \mathcal{C}}\left\| y'-y \right\|_{*}=\sup_{x\in \mathcal{B}\cap \mathcal{C}^\circ }\left< y , x \right> =I_{\mathcal{B}\cap \mathcal{C}^\circ }^*(y). \]

\subsection{Proof of Proposition~\ref{prop:dual-condition}}
\label{sec:proof-dual-condition}
Blackwell's condition can be written
\[ \max_{x\in \mathcal{C}^\circ }\min_{a\in \mathcal{A}} \max_{b\in \mathcal{B}}\left< r(a,b) ,x \right> \leqslant 0. \]
Since the above dot product is affine in each of the variables $a$, $b$
and $x$, by applying Sion's minimax theorem twice, the above is
equivalent to
\[ \max_{b\in \mathcal{B}}\min_{a\in \mathcal{A}} \max_{x\in \mathcal{C}^\circ } \left< r(a,b) ,x \right> \leqslant 0, \]
which is exactly the dual condition.

\subsection{Proof of Lemma~\ref{lm:regret-bound}}
\label{sec:proof-regret-bound}
Assume that the sequence of parameters $(\eta_t)_{t\geqslant 1}$ is nonincreasing.
Denote $Y_t=\sum_{s=1}^tr_t$ for $t\geqslant 1$ and $\eta_0=\eta_1$. Let $x\in \mathcal{X}$. Using Fenchel's inequality, we write
\begin{align}
\label{eq:3}
\begin{split}
\left< Y_T , x \right> &=\frac{\left< \eta_T Y_T, x \right> }{\eta_T}\leqslant \frac{h^*(\eta_TY_T)}{\eta_T}+\frac{h(x)}{\eta_T}\\
&\leqslant \frac{h^*(0)}{\eta_0}+\sum_{t=1}^T\left( \frac{h^*(\eta_tY_t)}{\eta_t}-\frac{h^*(\eta_{t-1}Y_{t-1})}{\eta_{t-1}} \right)+\frac{\max_{x\in \mathcal{X}}h(x)}{\eta_T}.
\end{split}
\end{align}
Let us bound $h^*(\eta_tY_t)/\eta_t$ from above. For all $x\in \mathcal{X}$ we have
\[ \frac{\left< \eta_tY_t , x \right> -h(x)}{\eta_t}=\frac{\left< \eta_{t-1}Y_t , x \right> -h(x)}{\eta_{t-1}}-h(x)\left( \frac{1}{\eta_t}-\frac{1}{\eta_{t-1}} \right).  \]
The maximum over $x\in \mathcal{X}$ of the above left-hand side gives $h^*(\eta_tY_t)/\eta_t$.
As for the right-hand side, let us take the maximum over $x\in
\mathcal{X}$ for each of the two terms separately. This gives
\begin{align*}
\frac{h^*(\eta_tY_t)}{\eta_t}&\leqslant \max_{x\in \mathcal{X}}\left\{  \frac{\left< \eta_{t-1}Y_t , x \right> -h(x)}{\eta_{t-1}}\right\}+\max_{x\in \mathcal{X}}\left\{ -h(x)\left( \frac{1}{\eta_t}-\frac{1}{\eta_{t-1}} \right)  \right\}\\
&=\frac{h^*(\eta_{t-1}Y_t)}{\eta_{t-1}}+\left( \min_{x\in \mathcal{X}}h(x) \right)\left( \frac{1}{\eta_{t-1}}-\frac{1}{\eta_t} \right),  
\end{align*}
where we used the fact that the sequence $(\eta_t)_{t\geqslant 0}$ is nonincreasing. Injecting this inequality in (\ref{eq:3}), we get
\begin{multline*}
 \left< Y_T , x \right> \leqslant \frac{h^*(0)}{\eta_0}+\sum_{t=1}^T\frac{h^*(\eta_{t-1}Y_t)-h^*(\eta_{t-1}Y_{t-1})}{\eta_{t-1}}\\+\left( \min_{x\in \mathcal{X}} h(x)\right)\sum_{t=1}^T\left( \frac{1}{\eta_{t-1}}-\frac{1}{\eta_t} \right)+\frac{\max_{x\in \mathcal{X}}h(x)}{\eta_T}. 
\end{multline*}
We now make the quantity 
\[  D_{h^*}(\eta_{t-1}Y_t,\eta_{t-1}Y_{t-1}):=h^*(\eta_{t-1}Y_t)-h^*(\eta_{t-1}Y_{t-1})-\left<\nabla h^*(\eta_{t-1}Y_{t-1}) , \eta_{t-1}Y_t-\eta_{t-1}Y_{t-1} \right>\]
(called a Bregman divergence) appear in the first above sum by 
by subtracting
\[ \frac{\left< \eta_{t-1}Y_t-\eta_{t-1}Y_{t-1} , \nabla h^*(\eta_{t-1}Y_{t-1}) \right>}{\eta_{t-1}}=\left< r_t , x_t \right>. \]
Therefore,
\begin{multline*}
  \left< Y_T , x \right> \leqslant \frac{h^*(0)}{\eta_0}+\sum_{t=1}^T\frac{D_{h^*}(\eta_{t-1}Y_t,\eta_{t-1}Y_{t-1})}{\eta_{t-1}}+\sum_{t=1}^T\left< r_t , x_t \right>\\-\frac{\min_{x\in \mathcal{X}}h(x)}{\eta_T}+\frac{\min_{x\in \mathcal{X}}h(x)}{\eta_0}+\frac{\max_{x\in \mathcal{X}}h(x)}{\eta_T}. 
\end{multline*}
Since $h^*(0)=-\min_{x\in \mathcal{X}}h(x)$, we get
\begin{align*}
\operatorname{Reg}_T&=\max_{x\in \mathcal{X}}\left< Y_T , x \right> -\sum_{t=1}^T\left< r_t , x_t \right>\\ 
&\leqslant \frac{\max_{\mathcal{X}}h-\min_{x\in
                                                                                                                             \mathcal{X}}h(x)}{\eta_T}+\sum_{t=1}^T\frac{D_{h^*}(\eta_{t-1}Y_t,\eta_{t-1}Y_{t-1})}{\eta_{t-1}}\\
  &\leqslant \frac{\Delta}{\eta_T}+\sum_{t=1}^T\frac{D_{h^*}(\eta_{t-1}Y_t,\eta_{t-1}Y_{t-1})}{\eta_{t-1}}.
\end{align*}

The strong convexity of the regularizer $h$ let us bound the above
Bregman divergences as follows---see e.g.~\cite[Lemma~13]{shalev2007online}:
\[ D_{h^*}(\eta_{t-1}Y_t,\eta_{t-1}Y_{t-1})\leqslant \frac{1}{2K}\left\| \eta_{t-1}Y_t-\eta_{t-1}Y_{t-1} \right\|_{*}^2=\frac{\eta_{t-1}^2}{2K}\left\| r_t \right\|_{*}^2,\quad t\geqslant 1. \]

Then, set $\eta=\sqrt{\Delta/M^2}$ so that $\eta_t=\eta\, t^{-1/2}$
for $t\geqslant 1$, which is indeed a nonincreasing sequence. The regret bound then becomes
\[ \frac{\Delta\sqrt{T}}{\eta}+\frac{M^2}{2K}\sum_{t=1}^T\eta_{t-1}. \]
We bound the above sum as follows. Since $\eta_0=\eta_1=\eta$,
\begin{align*}
\sum_{t=1}^T\eta_{t-1}&=\eta\left( 2+\sum_{t=2}^{T-1}\frac{1}{\sqrt{t}} \right)\leqslant \eta\left( \int_0^1\frac{1}{\sqrt{s}}\,\mathrm{d}s+\int_1^{T-1}\frac{1}{\sqrt{s}}\,\mathrm{d}s \right)\\  
&=\eta\int_0^{T-1}\frac{1}{\sqrt{s}}\,\mathrm{d}s=2\eta\sqrt{T-1}\leqslant 2\eta\sqrt{T}.
\end{align*}
Injecting the expression of $\eta$ and simplifying gives the result:
\[ \operatorname{Reg}_T\leqslant 2M\sqrt{\frac{T\Delta}{K}}. \]

\section{Blackwell's algorithm}
\label{sec:repr-blackw-algor}
We recall the definition of Blackwell's algorithm~\citep{blackwell1956analog} and show that it
belongs to the family of FTRL algorithms defined in Section~\ref{sec:defin-analys-algor}.
In the related work by~\cite{shimkin2016online}, it is demonstrated that
Blackwell's algorithm can also be interpreted as a Follow the Leader
algorithm, as well as a FTRL algorithm, in
the context of online convex optimization algorithms converted into
algorithms for the approachability of bounded convex target sets.

We consider \(\mathcal{V}=\mathcal{V}^*=\mathbb{R}^d\) equipped with its
Euclidean structure.
Let \(\mathcal{C}\subset \mathbb{R}^d\) be a closed convex cone which we assume
to be a B-set for the game \((\mathcal{A},\mathcal{B},r)\) and \(a:\mathcal{C}^\circ \to \mathcal{X}\)
a \((\mathcal{A},\mathcal{B},r,\mathcal{C})\)-oracle.
It follows from Definition~\ref{def:B-set} that it is always possible
to choose an oracle \(a\) that satisfies
\begin{equation}
\label{eq:2}
x=\lambda x' \text{ for some $\lambda>0$}\quad \Longrightarrow \quad a(x)=a(x'),\quad x,x'\in \mathcal{C}^\circ.
\end{equation}
We assume in this section that oracle \(a\) satisfies this property.

Blackwell's algorithm \citep{blackwell1954controlled} is defined by
\[ a_t=a\left( \bar{r}_{t-1}-\proj_{\mathcal{C}}\bar{r}_{t-1} \right),\quad t\geqslant 1,  \]
where \(\proj_{\mathcal{C}}\) denotes the Euclidean projection
onto \(\mathcal{C}\). It can be rewritten, using
Proposition~\ref{prop:polar-euclidean}, as
\[ a_t=a\left( \proj_{\mathcal{C}^\circ }\bar{r}_{t-1} \right),\quad t\geqslant 1.  \]

\begin{theorem}
\label{thm:blackwell-mirror-descent}
Let $\mathcal{X}=\mathcal{C}^\circ \cap \mathcal{B}$ where $\mathcal{B}$
denotes the closed Euclidean ball, 
and $h_2$ the Euclidean regularizer on $\mathcal{X}$.
Blackwell's algorithm and the FTRL algorithm associated
with $h_2$ and any
sequence of positive parameters $(\eta_t)_{t\geqslant 1}$ coincide. In other words,
\[ a\left( \bar{r}_{t-1}-\proj_{\mathcal{C}}\bar{r}_{t-1} \right)=a\left( \nabla h_2^*\left( \eta_{t-1}\sum_{s=1}^{t-1}r_s \right) \right),\quad t\geqslant 1.   \]
\end{theorem}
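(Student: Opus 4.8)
The plan is to reduce both algorithms to the form where $a$ is applied to a single vector, and then to argue that the two vectors fed to the oracle are positive multiples of one another, so that property~\eqref{eq:2} forces the two actions to agree. By Proposition~\ref{prop:lengendre}, the FTRL iterate is $x_t=\nabla h_2^*\bigl(\eta_{t-1}\sum_{s=1}^{t-1}r_s\bigr)$, and by Proposition~\ref{prop:lp-regularizer} this equals $\proj_{\mathcal{X}}\bigl(\eta_{t-1}\sum_{s=1}^{t-1}r_s\bigr)$; on the other side, Proposition~\ref{prop:polar-euclidean} rewrites Blackwell's action as $a\bigl(\proj_{\mathcal{C}^\circ}\bar r_{t-1}\bigr)$. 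Since $\eta_{t-1}\sum_{s=1}^{t-1}r_s=\eta_{t-1}(t-1)\,\bar r_{t-1}$, it suffices to prove, for every $t\geqslant 2$, that $\proj_{\mathcal{X}}\bigl(\eta_{t-1}(t-1)\,\bar r_{t-1}\bigr)$ is a positive multiple of $\proj_{\mathcal{C}^\circ}\bar r_{t-1}$; the case $t=1$ is immediate since $\nabla h_2^*(0)=\proj_{\mathcal{X}}(0)=0$ and, under the convention $\bar r_0=0$, both sides of the claimed identity equal $a(0)$.

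The geometric core of the argument is the following identity: for any closed convex cone $\mathcal{K}\subset\mathbb{R}^d$ and the closed unit Euclidean ball $\mathcal{B}$,
\[ \proj_{\mathcal{K}\cap\mathcal{B}}(z)=\proj_{\mathcal{B}}\bigl(\proj_{\mathcal{K}}(z)\bigr)=\frac{\proj_{\mathcal{K}}(z)}{\max\{1,\;\|\proj_{\mathcal{K}}(z)\|_2\}},\qquad z\in\mathbb{R}^d. \]
To prove it, set $p=\proj_{\mathcal{K}}(z)$. If $\|p\|_2\leqslant 1$, then $p\in\mathcal{K}\cap\mathcal{B}$; since every $x\in\mathcal{K}\cap\mathcal{B}$ lies in $\mathcal{K}$ and hence satisfies $\|z-x\|_2\geqslant\|z-p\|_2$, the point $p$ is also the projection of $z$ onto $\mathcal{K}\cap\mathcal{B}$, and both sides equal $p$. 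If $\|p\|_2>1$, I would check that $\hat p:=p/\|p\|_2$ obeys the variational characterization $\langle z-\hat p,\,x-\hat p\rangle\leqslant 0$ for all $x\in\mathcal{K}\cap\mathcal{B}$. Two observations about the cone-projection $p$ feed into this: from $\langle z-p,\,y-p\rangle\leqslant 0$ for all $y\in\mathcal{K}$, evaluated at $y=0$ and $y=2p$, one gets $\langle z-p,p\rangle=0$ and therefore $z-p\in\mathcal{K}^\circ$, so $\langle z-p,x\rangle\leqslant 0$ for $x\in\mathcal{K}$; and after writing $z-\hat p=(z-p)+\tfrac{\|p\|_2-1}{\|p\|_2}\,p$ and expanding, the leftover terms collapse via $\langle z-p,p\rangle=0$ and Cauchy--Schwarz $\langle p,x\rangle\leqslant\|p\|_2\|x\|_2\leqslant\|p\|_2$, to a quantity that is nonpositive precisely because $\|p\|_2>1$.

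Finally I would assemble the pieces. As $\mathcal{C}^\circ$ is a cone and $\eta_{t-1}(t-1)>0$, the Euclidean projection onto $\mathcal{C}^\circ$ commutes with this positive scalar, so $\proj_{\mathcal{C}^\circ}\bigl(\eta_{t-1}(t-1)\,\bar r_{t-1}\bigr)=\eta_{t-1}(t-1)\,\proj_{\mathcal{C}^\circ}\bar r_{t-1}$. Substituting this into the identity above with $\mathcal{K}=\mathcal{C}^\circ$ gives
\[ \proj_{\mathcal{X}}\bigl(\eta_{t-1}(t-1)\,\bar r_{t-1}\bigr)=\lambda_t\,\proj_{\mathcal{C}^\circ}\bar r_{t-1},\qquad \lambda_t=\frac{\eta_{t-1}(t-1)}{\max\{1,\;\eta_{t-1}(t-1)\,\|\proj_{\mathcal{C}^\circ}\bar r_{t-1}\|_2\}}>0, \]
and property~\eqref{eq:2} of the oracle then yields $a\bigl(\proj_{\mathcal{X}}(\eta_{t-1}(t-1)\bar r_{t-1})\bigr)=a\bigl(\proj_{\mathcal{C}^\circ}\bar r_{t-1}\bigr)$, which, by the rewritings of the first paragraph, is exactly the stated identity. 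I expect the only genuinely nontrivial step to be the verification of the variational inequality in the case $\|p\|_2>1$; the remainder is routine bookkeeping with the conical structure and the scaling property of the oracle.
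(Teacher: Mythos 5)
Your proposal is correct and follows essentially the same route as the paper: both reduce the claim, via the oracle's scale-invariance and the positive homogeneity of $\proj_{\mathcal{C}^\circ}$, to showing that $\proj_{\mathcal{C}^\circ\cap\mathcal{B}}w$ is a positive multiple of $\proj_{\mathcal{C}^\circ}w$, and both prove this by the same two-case analysis, checking in the case $\|\proj_{\mathcal{C}^\circ}w\|_2>1$ that the normalized cone-projection satisfies the variational characterization of the projection onto $\mathcal{C}^\circ\cap\mathcal{B}$. The only (immaterial) difference is how the second term of the decomposition is bounded: you use $\left<z-p,p\right>=0$ together with Cauchy--Schwarz directly, whereas the paper first identifies $p/\|p\|_2$ as $\proj_{\mathcal{C}^\circ\cap\mathcal{B}}(p)$ and then invokes the projection characterization.
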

\begin{proof}
Recall that the Euclidean projection $\proj_{\mathcal{E}}w$ of a point $w$ on a closed convex set $\mathcal{E}$ is
the only point in $\mathcal{E}$ satisfying
\begin{equation}
\label{eq:charactzation-projection}
\forall w'\in \mathcal{E},\quad \left< w-\proj_{\mathcal{E}}w , w'-\proj_{\mathcal{E}}w \right> \leqslant 0.
\end{equation}
This characterization will be needed later.

Remember from Proposition~\ref{prop:lp-regularizer} that $\nabla h_2^*=\proj_{\mathcal{C}^\circ \cap \mathcal{B}}$.
Since oracle $a$ satisfies property (\ref{eq:2}), it is enough to prove that for all $u\in \mathbb{R}^d$ and  $\mu>0$, 
\[ \proj_{\mathcal{C}^\circ }u\in \mathbb{R}_+^*\proj_{\mathcal{C}^\circ \cap \mathcal{B}}(\mu u ). \]
Besides, $\mathcal{C}^\circ$ being a closed convex cone, $\proj_{\mathcal{C}^\circ }(\mu u)=\mu \proj_{\mathcal{C}^\circ}u$.
It is therefore equivalent to prove that for all $w\in \mathbb{R}^d$,
\begin{equation}
\label{eq:second-step}
\proj_{\mathcal{C}^\circ}w\in \mathbb{R}_+^* \proj_{\mathcal{C}^\circ\cap  \mathcal{B}}w.
\end{equation}
Let $w\in \mathbb{R}^d$.
If $\left\| \proj_{\mathcal{C}^\circ}w \right\|_2\leqslant 1$, then
obviously $\proj_{\mathcal{C}^\circ}w=\proj_{\mathcal{C}^\circ\cap
  \mathcal{B}}w$ as shown in Figure~\ref{fig:blackwell-mirror-descent-easy} and (\ref{eq:second-step}) is true.
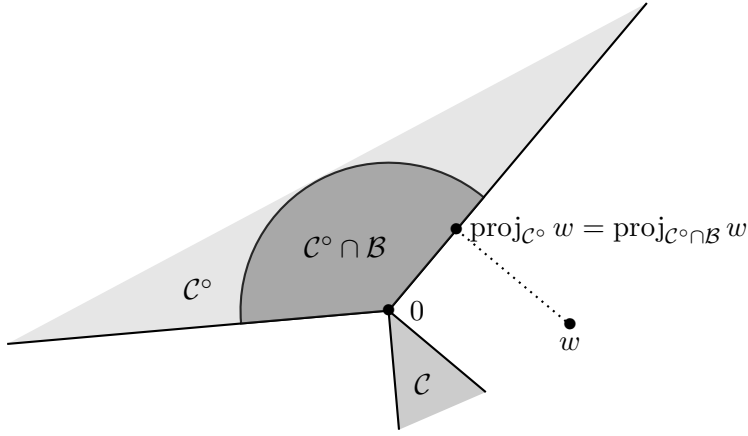
\begin{figure}
  \centering
\begin{tikzpicture}[y=0.80pt, x=0.80pt, yscale=-1.000000, xscale=1.000000, inner sep=0pt, outer sep=0pt,scale=2,rotate=130]
  \path[draw=black,fill=ccccccc,line join=miter,line cap=butt,even odd rule,line
    width=0.800pt] (0.0000,1022.3622) -- (0.0000,1052.3622) --
    (20.0000,1032.3622);
  \path[draw=black,fill=ce6e6e6,line join=miter,line cap=butt,even odd rule,line
    width=0.800pt] (-95.0000,1052.3622) -- (0.0000,1052.3622) --
    (64.0000,1116.3622);
  \path[draw=black,fill=c999999,opacity=0.800,line width=.8pt]
    (24.7487,1077.1109)arc(45.000:112.500:35.000)arc(112.500:180.000:35.000) --
    (0.0000,1052.3622) -- cycle;
  \path[draw=black,line join=miter,line cap=butt,even odd rule,line width=0.800pt,dotted]
    (-25.0000,1052.3622) node {$\bullet$} node[right=5pt]
    {$\proj_{\mathcal{C}^\circ}w=\proj_{\mathcal{C}^\circ\cap
        \mathcal{B}}w$} -- (-25.0000,1017.3622) node {$\bullet$}
    node[below=5pt] {$w$};
    \draw (-5,1070) node {$\mathcal{C}^\circ\cap \mathcal{B}$};
    \draw (25,1090) node {$\mathcal{C}^\circ$};
    \draw (8,1035) node {$\mathcal{C}$};
    \draw (0.0000,1052.3622) node {$\bullet$} node[right=8pt] {$0$};

\end{tikzpicture}
\caption{In the case where $\left\| \proj_{\mathcal{C}^\circ }w \right\|_2\leqslant  1$, we have $\proj_{\mathcal{C}^\circ}w=\proj_{\mathcal{C}^\circ \cap \mathcal{B}}w$.}
\label{fig:blackwell-mirror-descent-easy}
\end{figure}
We now assume that $\left\| \proj_{\mathcal{C}^\circ}w \right\|_2>
1$. 
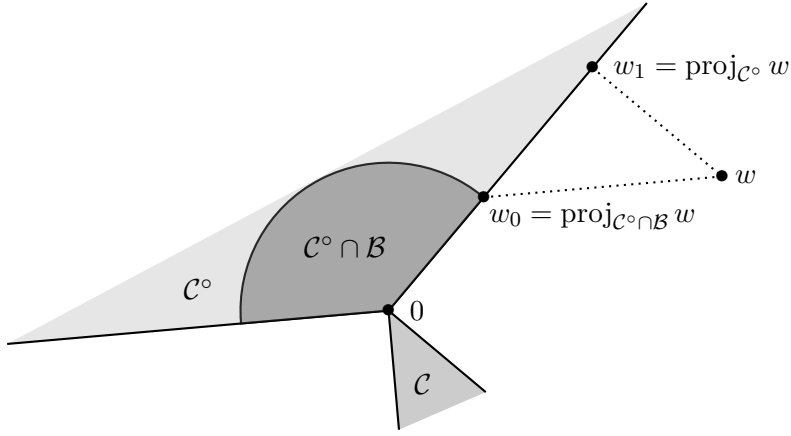
\begin{figure}
  \centering
\begin{tikzpicture}[y=0.80pt, x=0.80pt, yscale=-1.000000, xscale=1.000000, inner sep=0pt, outer sep=0pt,rotate=130,scale=2]
  \path[draw=black,fill=ccccccc,line join=miter,line cap=butt,even odd rule,line
    width=0.800pt] (0.0000,1022.3622) -- (0.0000,1052.3622) --
    (20.0000,1032.3622);
  \path[draw=black,fill=ce6e6e6,line join=miter,line cap=butt,even odd rule,line
    width=0.800pt] (-95.0000,1052.3622) -- (0.0000,1052.3622) --
    (64.0000,1116.3622);
  \path[draw=black,fill=c999999,opacity=0.800,line width=.8pt]
    (24.7487,1077.1109)arc(45.000:112.500:35.000)arc(112.500:180.000:35.000) --
    (0.0000,1052.3622) -- cycle;

  \draw[draw=black,line join=miter,line cap=butt,even odd rule,line width=0.800pt,dotted]
    (-75.0000,1052.3622) node {$\bullet$} node[right=8pt]
    {$w_1=\proj_{\mathcal{C}^\circ }w$} -- (-75.0000,1012.3622) node
    {$\bullet$} node[right=5pt] {$w$};
  \draw[draw=black,line join=miter,line cap=butt,even odd rule,line width=0.800pt,dotted]
    (-35.0000,1052.3622) node {$\bullet$} node[below right=2pt] {$w_0=\proj_{\mathcal{C}^\circ \cap \mathcal{B}}w$} -- (-75.0000,1012.3622);

    \draw (-5,1070) node {$\mathcal{C}^\circ\cap \mathcal{B}$};
    \draw (25,1090) node {$\mathcal{C}^\circ$};
    \draw (8,1035) node {$\mathcal{C}$};
    \draw (0.0000,1052.3622) node {$\bullet$} node[right=8pt] {$0$};
\end{tikzpicture}
    \caption{In the case where $\left\| \proj_{\mathcal{C}^\circ }w
      \right\|_2> 1$, we have $w_0=\proj_{\mathcal{C}^\circ \cap \mathcal{B}}w$.}
  \label{fig:blackwell-mirror-descent}
\end{figure}
We define
\[ w_0:=\frac{\proj_{\mathcal{C}^\circ}w}{\left\|  \proj_{\mathcal{C}^\circ}w \right\|_2}. \]
Using characterization (\ref{eq:charactzation-projection}), we aim at
proving that $w_0=\proj_{\mathcal{C}^\circ\cap  \mathcal{B}}w$ (see Figure~\ref{fig:blackwell-mirror-descent}), which would prove (\ref{eq:second-step}). 
First, $w_0$ belongs to $\mathcal{C}^{\circ }\cap \mathcal{B}$ by definition.
Let $w'\in \mathcal{C}^\circ \cap \mathcal{B}$. For short, denote $w_1=\proj_{\mathcal{C}^\circ}w$.
\begin{align*}
\left< w-w_0 , w'-w_0 \right> &=\left< w-w_1+w_1-w_0 , w'-w_0 \right>\\ 
&=\left< w-w_1 , w'-w_0 \right> +\left< w_1-w_0 , w'-w_0 \right>\\ 
&=\frac{1}{\left\| w_1 \right\| }\left< w-w_1 , \left\| w_1 \right\|_{}w'-w_1 \right>+\left< w_1-w_0 , w'-w_0 \right>. 
\end{align*}
The first dot product above is nonpositive by characterization of $w_1=\proj_{\mathcal{C}^\circ}w$, because $\left\| w_1 \right\|_{}w'\in \mathcal{C}^\circ$.
Let us prove that the second dot product is also nonpositive. For all $w''\in \mathcal{C}^\circ \cap \mathcal{B}$,
\[ \left\| w_1-w'' \right\|_{}\geqslant \left| \left\| w_1 \right\| -\left\| w'' \right\|  \right| \geqslant \left\| w_1 \right\| -1=\left\| w_1-w_0 \right\|, \]
which means that $w_0=\proj_{\mathcal{C}^\circ\cap  \mathcal{B}}w_1$. Thus, $\left< w_1-w_0 , w'-w_0 \right> \leqslant 0$.
Therefore,
\[ \left< w-w_0 , w'-w_0 \right>\leqslant 0 \] and (\ref{eq:second-step}) is proved.
\end{proof}

We can now recover via
Theorem~\ref{thm:approachability-mirror-descent} the classic guarantee
for Blackwell's algorithm in the case where the vector payoffs are
bounded with respect to the Euclidean norm.

\begin{theorem}
\label{thm:blackwell}
Let $M>0$. Assume that $\left\| r(a,b) \right\|_2\leqslant M$ for all $a\in \mathcal{A}$ and $b\in \mathcal{B}$.
Then, against any sequence of actions $(b_t)_{t\geqslant 1}$ chosen by the Environment, Blackwell's algorithm guarantees:
\[ \forall T\geqslant 1,\quad d_2\left( \bar{r}_T,\ \mathcal{C} \right)\leqslant \frac{2\sqrt{2}M}{\sqrt{T}},  \]
where $d_2$ denotes the Euclidean distance.
\end{theorem}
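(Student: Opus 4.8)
The plan is to assemble three results already in hand: the identification of Blackwell's algorithm with a specific FTRL algorithm (Theorem~\ref{thm:blackwell-mirror-descent}), the general approachability guarantee for the FTRL class (Theorem~\ref{thm:approachability-mirror-descent}), and the representation of the Euclidean distance to a cone as a support function (Proposition~\ref{prop:distance-support}). The whole proof is then a matter of specializing to the Euclidean setting and reading off the relevant constants.

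First I would fix $\mathcal{X}=\mathcal{C}^\circ\cap\mathcal{B}$, where $\mathcal{B}$ is the closed Euclidean unit ball. By Proposition~\ref{prop:generator-examples}~(ii) this is a generator of $\mathcal{C}^\circ$, and $h_2$ is a regularizer on it, so the FTRL algorithm of Section~\ref{sec:defin-analys-algor} built from $\mathcal{X}$, $h_2$ and a positive step-size sequence is well defined. By Theorem~\ref{thm:blackwell-mirror-descent}, Blackwell's algorithm coincides with this FTRL algorithm for \emph{any} positive sequence $(\eta_t)_{t\geqslant 1}$, hence in particular for the choice $\eta_t=\sqrt{\Delta K/M^2t}$ prescribed in Theorem~\ref{thm:approachability-mirror-descent}.

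Next I would verify the hypotheses of Theorem~\ref{thm:approachability-mirror-descent} with the Euclidean norm on $\mathcal{V}=\mathbb{R}^d$, whose dual norm on $\mathcal{V}^*=\mathbb{R}^d$ is again the Euclidean norm. Since $0\in\mathcal{C}^\circ\cap\mathcal{B}$, we have $\min_{\mathcal{X}}h_2=0$, while $h_2(x)=\tfrac12\left\|x\right\|_2^2\leqslant\tfrac12$ for $x\in\mathcal{B}$, so one may take $\Delta=1/2$. By Proposition~\ref{prop:lp-regularizer}~(iii), $h_2$ is $1$-strongly convex with respect to $\left\|\,\cdot\,\right\|_2$, so $K=1$; and $\left\|r(a,b)\right\|_2\leqslant M$ is precisely the standing assumption. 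Theorem~\ref{thm:approachability-mirror-descent} then yields
\[ I_{\mathcal{C}^\circ\cap\mathcal{B}}^*(\bar r_T)\leqslant 2M\sqrt{\frac{\Delta}{KT}}=\frac{\sqrt{2}\,M}{\sqrt{T}}\leqslant\frac{2\sqrt{2}\,M}{\sqrt{T}}. \]

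Finally, applying Proposition~\ref{prop:distance-support} with the (self-dual) Euclidean norm identifies $I_{\mathcal{C}^\circ\cap\mathcal{B}}^*(\bar r_T)=\inf_{y'\in\mathcal{C}}\left\|y'-\bar r_T\right\|_2=d_2(\bar r_T,\mathcal{C})$, which gives the claimed bound. I do not expect a genuine obstacle here: the argument is pure bookkeeping once the earlier theorems are in place, and the only point needing a moment's care is that $0$ lies in the generator $\mathcal{C}^\circ\cap\mathcal{B}$, since that is what pins down $\Delta=1/2$.
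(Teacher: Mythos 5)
Your proof is correct and follows exactly the paper's route: identify Blackwell's algorithm with the FTRL algorithm for $h_2$ on $\mathcal{C}^\circ\cap\mathcal{B}$ via Theorem~\ref{thm:blackwell-mirror-descent}, apply Theorem~\ref{thm:approachability-mirror-descent} with $\Delta=1/2$, $K=1$, and conclude with Proposition~\ref{prop:distance-support}. Your bookkeeping even shows the sharper constant $\sqrt{2}M/\sqrt{T}$, which of course implies the stated $2\sqrt{2}M/\sqrt{T}$.
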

\begin{proof}
With notation from Theorem~\ref{thm:blackwell-mirror-descent}, we have $\max_{x\in \mathcal{X}}h_2(x)-\min_{x\in \mathcal{X}}h_2(x)=1/2$, and $h_2$ is $1$-strongly convex with respect to $\left\|\,\cdot\,\right\|_2$ by Proposition~\ref{prop:lp-regularizer}.
According to Theorem~\ref{thm:blackwell-mirror-descent}, Blackwell's algorithm corresponds to the FTRL algorithm associated with $h_2$ and any sequence of parameters $(\eta_t)_{t\geqslant 1}$.
We can therefore apply
Theorem~\ref{thm:approachability-mirror-descent} with $\Delta=1/2$ and
$K=1$, together with Proposition~\ref{prop:distance-support} and the result follows.
\end{proof}
%%% Local Variables:
%%% mode: latex
%%% TeX-master: "refined-approachability"
%%% End:
\section{Model with mixed actions}
\label{sec:model-with-mixed}
We here present a variant of the model from Section~\ref{sec:model}, in which the decision
maker has a finite set of \emph{pure actions} \(\mathcal{I}=\left\{ 1,\dots,d
\right\}\) from which he is allowed to choose at random. We define the
corresponding FTRL algorithms, and state
guarantees in expectation, with high probability, and almost-surely.
Let the simplex \(\Delta_d=\left\{ x\in \mathbb{R}_+^d,\
  \sum_{i=1}^dx_i=1  \right\}\) be the set of \emph{mixed actions}
(which we identify to the set
of probability distributions over $\mathcal{I}$), \(\mathcal{B}\) a set of
actions for the Environment, and \(r\colon \mathcal{I}\times \mathcal{B}\to \mathbb{R}^d\) a payoff function.
We linearly extend the payoff function \(r\) in its first variable:
\[ r(a,b):=\mathbb{E}_{i\sim a}\left[ r(a,b) \right]=\sum_{i=1}^{d}a_ir(i,b),\quad a\in \Delta_d,\ b\in \mathcal{B}.  \]
The game is played as follows. At time \(t\geqslant 1\), 
\begin{itemize}
\item the Decision Maker chooses  mixed action $a_t\in \Delta_d$;
\item the Environment chooses action $b_t\in \mathcal{B}$;
\item the Decision Maker draws pure action $i_t\sim a_t$;
\item the Decision Maker observes vector payoff $r_t:=r(i_t,b_t)$.
\end{itemize}
Denote \((\mathcal{F}_t)_{t\geqslant 1}\) the filtration where
\(\mathcal{F}_t\) is generated by
\[ (a_1,b_1,i_1,\dots,a_{t-1},b_{t-1},i_{t-1},a_t,b_t). \]
An algorithm for the Decision Maker is a sequence of maps
\(\sigma=(\sigma_t)_{t\geqslant 1}\) where \(\sigma_t:(\Delta_d\times
\mathcal{I}\times \mathcal{V}^*)^{t-1}\to \Delta_d\) so that action $a_t$ is given by
\[ a_t=\sigma_t(a_1,i_1,r_1,\dots,a_{t-1},i_{t-1},r_{t-1}),\quad t\geqslant 1. \]
Regarding the Environment, we assume that its choice of action
\(b_t\) does not depend on \(i_t\), so that \(\mathbb{E}\left[ r(i_t,b_t)\,\middle|\,\mathcal{F}_t \right]=\mathbb{E}_{i\sim a_t}\left[ r(i,b_t) \right]=r(a_t,b_t)\).
In this model, Blackwell's condition writes as follows.
\begin{definition}[Blackwell's condition for games with mixed actions]
A closed convex cone $\mathcal{C}$ of the payoff space $\mathcal{V}^*$
is a \emph{B-set for the game with mixed actions $(\mathcal{I},\mathcal{B},r)$} if
\[ \forall x\in \mathcal{C}^\circ ,\ \exists \, a(x)\in \Delta_d,\ \forall b\in \mathcal{B},\quad \left< r(a(x),b) , x \right> \leqslant 0. \]
Such an application $a:\mathcal{C}^\circ \to \Delta_d$ is called a \emph{$(\mathcal{I},\mathcal{B},r,\mathcal{C})$-oracle}.
\end{definition}

We can now define the FTRL algorithms similarly as in
Section~\ref{sec:defin-analys-algor}. Let \(\mathcal{C}\) be a closed
convex cone of the payoff space \(\mathcal{V}^*\) which is assumed to be
a B-set for the game with mixed actions \((\mathcal{I},\mathcal{B},r)\), \(a:\mathcal{C}^\circ \to \Delta_d\) a
\((\mathcal{I},\mathcal{B},r,\mathcal{C})\)-oracle, \(\mathcal{X}\) a generator of
\(\mathcal{C}^\circ\), \(h\) a regularizer on \(\mathcal{X}\), and 
\((\eta_t)_{t\geqslant 1}\) a positive sequence. Then, the
corresponding algorithm writes, for \(t\geqslant 1\),
\begin{align*}
\text{compute}\quad x_t&=\nabla h^*\left( \eta_{t-1}\sum_{s=1}^{t-1}r_s \right)\\
\text{compute}\quad a_t&=a(x_t)\\
\text{draw}\quad i_t&\sim a_t\\
\text{observe}\quad r_t&=r(i_t,b_t).
\end{align*}
\begin{theorem}
\label{thm:mirror-descent-approachability-mixed-actions}
Let $\Delta,M,K>0$, $\left\|\,\cdot\,\right\|$ be a norm on
$\mathcal{V}$, and $\left\|\,\cdot\,\right\|_*$ its dual norm on $\mathcal{V}^*$. We assume:
\begin{enumerate}[(i)]
\item $\max_{x\in \mathcal{X}}h(x)-\min_{x\in \mathcal{X}}h(x)\leqslant \Delta$,
\item $h$ is $K$-strongly convex with respect to $\left\|\,\cdot\,\right\|$,
\item $\left\| r(a,b) \right\|_* \leqslant M$ for all $a\in \Delta_d$ and $b\in \mathcal{B}$.
\end{enumerate}
Then the above algorithm guarantees, with the choice $\eta_t=\sqrt{\Delta
  K/M^2t}$ (for $t\geqslant 1$), against any sequence of actions $(b_t)_{t\geqslant 1}$
chosen by the Environment:
\[ \forall T\geqslant 1,\quad \mathbb{E}\left[ I_{\mathcal{X}}^*\left( \bar{r}_T \right)  \right]\leqslant 2M\sqrt{\frac{\Delta}{KT}}.  \]
Let $\delta\in (0,1)$. For all $T\geqslant 1$, we have with probability higher than $1-\delta$,
\[ I_{\mathcal{X}}^*\left( \bar{r}_T \right)\leqslant \frac{M}{\sqrt{T}}\left( 2\sqrt{\frac{\Delta}{K}}+\left\| \mathcal{X} \right\|_{}\sqrt{2\log (1/\delta)} \right).  \]
Almost-surely,
\[ \limsup_{T\to +\infty}I_{\mathcal{X}}^*\left( \bar{r}_T \right)\leqslant 0. \]
\end{theorem}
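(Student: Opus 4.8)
The plan is to re-run the deterministic argument behind Theorem~\ref{thm:approachability-mirror-descent}, the only new feature being that the term $\sum_{t=1}^T\langle r_t,x_t\rangle$ is nonpositive not term by term but merely in conditional expectation; the three conclusions will then follow, respectively, from a single expectation, the Azuma--H\oe ffding inequality, and a Borel--Cantelli argument. First I would apply Lemma~\ref{lm:regret-bound} verbatim to the observed payoff sequence $(r_t)_{t\ge1}=(r(i_t,b_t))_{t\ge1}$: this is legitimate since each Dirac mass lies in $\Delta_d$, so assumption~(iii) gives $\|r_t\|_*\le M$, and the $x_t=\nabla h^*\bigl(\eta_{t-1}\sum_{s=1}^{t-1}r_s\bigr)$ are exactly the iterates of an FTRL algorithm fed with $(r_t)$. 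Rewriting the first term of the regret as $T\,I_{\mathcal{X}}^*(\bar r_T)$ as in the proof of Theorem~\ref{thm:approachability-mirror-descent}, and splitting $\langle r_t,x_t\rangle=\xi_t+\langle r(a_t,b_t),x_t\rangle$ with $\xi_t:=\langle r_t-r(a_t,b_t),x_t\rangle$, I obtain
\[
  T\,I_{\mathcal{X}}^*(\bar r_T)\ \le\ 2M\sqrt{\frac{\Delta T}{K}}\ +\ \sum_{t=1}^T\xi_t ,
\]
because $\langle r(a_t,b_t),x_t\rangle=\langle r(a(x_t),b_t),x_t\rangle\le0$ by the oracle property. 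Here $(\xi_t)$ is a martingale difference sequence: $x_t$ and $a_t$ are $\mathcal{F}_t$-measurable, the Environment's $b_t$ does not depend on $i_t$, and the linear extension of $r$ gives $\mathbb{E}[r_t\mid\mathcal{F}_t]=r(a_t,b_t)$, so $\mathbb{E}[\xi_t\mid\mathcal{F}_t]=0$.

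The expectation bound then follows by taking expectations in the displayed inequality (which annihilates $\sum_t\xi_t$) and dividing by $T$; note that $I_{\mathcal{X}}^*$ is real-valued and continuous since $\mathcal{X}$ generates $\mathcal{C}^\circ$, hence a bona fide random variable. For the high-probability bound I would observe that, conditionally on $\mathcal{F}_t$, the quantity $\langle r_t,x_t\rangle$ ranges over an interval of length at most $2M\|\mathcal{X}\|$, where $\|\mathcal{X}\|:=\max_{x\in\mathcal{X}}\|x\|$, since $\|r(i,b_t)-r(j,b_t)\|_*\le2M$ and $\|x_t\|\le\|\mathcal{X}\|$; by Hoeffding's lemma each $\xi_t$ is then conditionally sub-Gaussian with variance proxy $M^2\|\mathcal{X}\|^2$, and the Azuma--H\oe ffding inequality yields $\sum_{t=1}^T\xi_t\le M\|\mathcal{X}\|\sqrt{2T\log(1/\delta)}$ with probability at least $1-\delta$. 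Substituting this into the displayed inequality and dividing by $T$ produces exactly the stated bound.

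For the almost-sure statement I would fix $\varepsilon>0$ and apply the same concentration estimate to get $\mathbb{P}\bigl(\sum_{t=1}^T\xi_t\ge\varepsilon T\bigr)\le\exp\bigl(-\varepsilon^2 T/(2M^2\|\mathcal{X}\|^2)\bigr)$, which is summable in $T$; Borel--Cantelli then shows $\frac1T\sum_{t=1}^T\xi_t\to0$ almost surely, after intersecting the corresponding almost-sure events over $\varepsilon=1/k$, $k\ge1$. Since moreover $\frac1T\cdot 2M\sqrt{\Delta T/K}\to0$ and the dropped terms $\langle r(a_t,b_t),x_t\rangle$ are nonpositive, dividing the displayed inequality by $T$ and letting $T\to\infty$ gives $\limsup_{T\to\infty}I_{\mathcal{X}}^*(\bar r_T)\le0$ almost surely.

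I expect the only delicate points to be bookkeeping ones: checking carefully that $x_t$ and $a_t$ are $\mathcal{F}_t$-measurable whereas $r_t$ is not, so that $(\xi_t)$ is genuinely a martingale difference sequence adapted to the stated filtration; and invoking Azuma--H\oe ffding in its sharp conditional-range form (through Hoeffding's lemma) rather than through the cruder bound $|\xi_t|\le2M\|\mathcal{X}\|$, which is what makes the constant in the high-probability bound come out as $M\|\mathcal{X}\|\sqrt{2\log(1/\delta)}$ rather than twice that.
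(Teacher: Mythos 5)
Your proposal is correct and follows essentially the same route as the paper: apply Lemma~\ref{lm:regret-bound} to the realized payoffs, observe that $\mathbb{E}[\langle r_t,x_t\rangle\mid\mathcal{F}_t]=\langle r(a_t,b_t),x_t\rangle\leq 0$, then conclude by taking expectations, by Azuma--H\oe ffding, and by Borel--Cantelli. The only (cosmetic) difference is that you center $\langle r_t,x_t\rangle$ into a genuine martingale difference $\xi_t$ and recover the constant via the conditional-range form of Hoeffding's lemma, whereas the paper applies Azuma--H\oe ffding directly to the supermartingale differences $X_t=\langle r_t,x_t\rangle$ with $|X_t|\leq M\left\|\mathcal{X}\right\|$, which gives the same constant without that extra care.
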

\begin{proof}
Like in the proof of Theorem~\ref{thm:approachability-mirror-descent}, Lemma~\ref{lm:regret-bound} gives:
\begin{equation}
\label{eq:1}
I_{\mathcal{X}}^*(\bar{r}_T)\leqslant \frac{1}{T}\left( \sum_{t=1}^T\left< r_t
    , x_t \right> +2M\sqrt{\frac{\Delta T}{K}} \right). 
\end{equation}
Consider $X_t=\left< r_t , x_t \right>$. Then, $(X_t)_{t\geqslant 1}$ is a sequence of super-martingale differences with respect to filtration $(\mathcal{F}_t)_{t\geqslant 0}$:
\[ \mathbb{E}\left[ \left< r_t , x_t \right> \,\middle|\,\mathcal{F}_t \right]=\mathbb{E}\left[ \left< r(i_t,b_t) , x_t \right> \,\middle|\,\mathcal{F}_t \right]=\left< \mathbb{E}\left[ r(i_t,b_t)\,\middle|\,\mathcal{F}_t \right]  , x_t \right> =\left< r(a_t,b_t) , x_t \right> \leqslant 0,    \]
because $x$ is a $(\mathcal{I},\mathcal{B},r,\mathcal{C})$-oracle. Therefore,
\[ \mathbb{E}\left[ \sum_{t=1}^T\left< r_t , x_t \right>  \right]=\mathbb{E}\left[ \sum_{t=1}^T\mathbb{E}\left[ \left< r_t , x_t \right> \,\middle|\,\mathcal{F}_{t} \right]  \right]\leqslant 0.   \]
Injecting this in Equation (\ref{eq:1}) gives the guarantee in expectation:
\[ \mathbb{E}\left[ I_{\mathcal{X}}^*(\bar{r}_T)\right]\leqslant 2M\sqrt{\frac{\Delta}{KT}}.  \]
We now turn to the high probability bound. Let $\delta\in (0,1)$. From Equation (\ref{eq:1}), we deduce that
\[ I_{\mathcal{X}}^*(\bar{r}_T)\leqslant 2M\sqrt{\frac{\Delta}{KT}}+\frac{1}{T}\sum_{t=1}^TX_t. \]
Since we have $\left| X_t \right| =\left| \left< r(i_t,b_t) , x_t \right>  \right| \leqslant \left\| r(i_t,b_t) \right\|_{*}\left\| x_t \right\|_{}\leqslant M\left\| \mathcal{X} \right\|_{}$ for all $t\geqslant 1$,
the Azuma--Hoeffding inequality assures that with probability higher than $1-\delta$,
\[ \frac{1}{T}\sum_{t=1}^TX_t\leqslant M\left\| \mathcal{X} \right\|_{} \sqrt{\frac{2\log (1/\delta)}{T}} \]
and thus
\[ I_{\mathcal{X}}^*(\bar{r}_T)\leqslant \frac{M}{\sqrt{T}}\left( 2\sqrt{\frac{\Delta}{K}}+\left\| \mathcal{X} \right\|_{}\sqrt{2\log (1/\delta)} \right).   \]
The almost-sure guarantee follows from a standard Borel--Cantelli argument.
\end{proof}
%%% Local Variables:
%%% mode: latex
%%% TeX-master: "refined-approachability"
%%% End:

\section{An algorithm for arbitrary norm global cost}
\label{sec:algo-global-cost}
Let \(q'\in (1,2]\). We consider on \(\mathcal{X}=\mathcal{B}\cap \mathcal{C}^\circ\) the
\(\ell_{q'}\) regularizer introduced in Section~\ref{sec:regularizers}:
\[ h_{q'}(x)=
  \begin{cases}
    \frac{1}{2}\left\| x \right\|_{q'}^2&\text{if $x\in \mathcal{X}$} \\
    +\infty&\text{otherwise},
  \end{cases},\quad x\in \mathcal{V}. \]
Let $(\eta_t)_{t\geqslant 1}$ a positive
sequence, and $a$ the oracle from Remark~\ref{rk:oracle}. The
algorithm then writes, for $t\geqslant 1$,
\begin{align*}
\text{compute}\quad x_t&=\nabla h_{q'}^*\left(  \eta_{t-1}\sum_{s=1}^{t-1}r_s\right)\\
\text{compute}\quad a_t&=\argmin_{a\in
                         \Delta_d}\sum_{i=1}^d\max_{}(0,z_{ti}a_i+z'_{ti}),\quad
                         \text{where $(z_t,z_t')=x_t$,}\\
\text{observe}\quad r_t&:=r(a_t,b_t).
\end{align*}

\begin{theorem}[Regret bound for an arbitrary norm cost function]
\label{thm:regret-norm}
Let $q'\in (1,2]$ and $\Delta>0$ such that $\max_{x\in \mathcal{X}}\frac{1}{2}\left\| x \right\|_{q'}^2\leqslant \Delta$. 
Then, the above algorithm with coefficients
\[ \eta_t=d^{1/q'-1}\sqrt{\frac{\Delta(q'-1)}{t}},\quad t\geqslant 1, \]
guarantees, against any sequence $(\ell_t)_{t\geqslant 1}$ in $[0,1]^d$ chosen by the Environment,
\[ \forall T\geqslant 1,\quad \bar{\operatorname{Reg}}_T\leqslant 2\, d^{1-1/q'}\sqrt{\frac{\Delta }{(q'-1) T}}. \]
\end{theorem}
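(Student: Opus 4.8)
The plan is to chain Proposition~\ref{prop:global-cost-support-function} with Theorem~\ref{thm:approachability-mirror-descent}. By Proposition~\ref{prop:global-cost-support-function}, against any sequence $(\ell_t)_{t\geq1}$ one has $\bar{\operatorname{Reg}}_T\leq I_{\mathcal B\cap\mathcal C^\circ}^*(\bar r_T)=I_{\mathcal X}^*(\bar r_T)$ for every $T\geq1$, so it suffices to bound $I_{\mathcal X}^*(\bar r_T)$, which is precisely the quantity controlled by Theorem~\ref{thm:approachability-mirror-descent} applied to the FTRL algorithm driven by $h_{q'}$. It therefore remains to exhibit admissible constants $\Delta$, $K$, $M$ and a norm for which the three hypotheses of that theorem hold, and then to read off the prescribed $\eta_t$ and the resulting bound.

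First I would note that $0\in\mathcal X$, since $0$ lies in every polar cone and in the closed unit ball $\mathcal B$; hence $\min_{x\in\mathcal X}h_{q'}(x)=0$, so $\max_{x\in\mathcal X}h_{q'}(x)-\min_{x\in\mathcal X}h_{q'}(x)=\max_{x\in\mathcal X}\tfrac12\|x\|_{q'}^2\leq\Delta$ by hypothesis, which gives hypothesis~(i) with the given $\Delta$. Next, the map $x\mapsto\tfrac12\|x\|_{q'}^2$ on $\mathcal V=(\mathbb R^d)^2$ is $(q'-1)$-strongly convex with respect to $\|\cdot\|_{q'}$ --- the classical fact underlying Proposition~\ref{prop:lp-regularizer} --- so by Lemma~\ref{lm:strong-convexity-restriction} its restriction $h_{q'}$ is too; this is hypothesis~(ii) with $K=q'-1$ and the norm $\|\cdot\|=\|\cdot\|_{q'}$ on $\mathcal V$, whose dual on $\mathcal V^*$ is the $\ell_q$ norm with $q=q'/(q'-1)$. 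Finally, writing $r(a,\ell)=(a\odot\ell,\ell)$ for $a\in\Delta_d$ and $\ell\in[0,1]^d$, all $2d$ coordinates of $r(a,\ell)$ lie in $[0,1]$, so estimating $\|r(a,\ell)\|_q^q=\sum_{i=1}^d(a_i\ell_i)^q+\sum_{i=1}^d\ell_i^q$ gives $\|r(a,\ell)\|_q\leq M$ with $M=d^{1-1/q'}$, which is hypothesis~(iii).

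With $\Delta$ as above, $K=q'-1$ and $M=d^{1-1/q'}$, Theorem~\ref{thm:approachability-mirror-descent} prescribes the step size $\eta_t=\sqrt{\Delta K/M^2t}=d^{1/q'-1}\sqrt{\Delta(q'-1)/t}$, which is exactly the sequence in the statement, and guarantees $I_{\mathcal X}^*(\bar r_T)\leq 2M\sqrt{\Delta/(KT)}=2\,d^{1-1/q'}\sqrt{\Delta/((q'-1)T)}$; combining this with $\bar{\operatorname{Reg}}_T\leq I_{\mathcal X}^*(\bar r_T)$ finishes the proof. The one genuinely delicate point I anticipate is the payoff estimate of the previous paragraph: pinning down the $\ell_q$ norm of the concatenated vector $(a\odot\ell,\ell)$ with the correct power of $d$ and the stated constant (the crude bound only yields $\|r(a,\ell)\|_q\leq(2d)^{1/q}$, so some additional care --- or an alternative choice of norm on $\mathcal V$ adapted to the two-block structure, as in the proof of Theorem~\ref{thm:global_costs_lp_norms} --- seems to be needed). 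The oscillation bound, the transfer of strong convexity via Lemma~\ref{lm:strong-convexity-restriction}, and the arithmetic of plugging $\Delta,K,M$ into Theorem~\ref{thm:approachability-mirror-descent} are routine.
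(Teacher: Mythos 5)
Your skeleton is exactly the paper's: chain Proposition~\ref{prop:global-cost-support-function} with Theorem~\ref{thm:approachability-mirror-descent}, check the oscillation bound via $0\in\mathcal X$, and get strong convexity from Proposition~\ref{prop:lp-regularizer} through Lemma~\ref{lm:strong-convexity-restriction}. Where you diverge is the choice of norm on $\mathcal V$, and that divergence is precisely where your argument has a gap — one you correctly flag yourself. You take $\|\cdot\|=\|\cdot\|_{q'}$, which gives the clean constant $K=q'-1$ but forces you to bound $\|r(a,\ell)\|_q$ with $q=q'/(q'-1)$. The claimed bound $M=d^{1-1/q'}=d^{1/q}$ is strictly false: for $a=e_1$ and $\ell=\mathbf 1$ one has $\|r(a,\ell)\|_q^q=\|a\odot\ell\|_q^q+\|\ell\|_q^q=1+d$, so the best constant is $(d+1)^{1/q}$. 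Since Theorem~\ref{thm:approachability-mirror-descent} ties the step size to $\sqrt{\Delta K/M^2t}$, using the honest $M=(d+1)^{1-1/q'}$ changes $\eta_t$ away from the one prescribed in the statement and degrades the bound by a (harmless but nonzero) factor $(1+1/d)^{1-1/q'}$; so your route, as written, proves a slightly weaker theorem with a slightly different algorithm.

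The paper avoids this entirely by flipping where the dimension dependence lives: it works with $\|\cdot\|_1$ on $\mathcal V$ and its dual $\|\cdot\|_\infty$ on $\mathcal V^*$, so that $\|r(a,\ell)\|_\infty=\max(\|a\odot\ell\|_\infty,\|\ell\|_\infty)\leq 1$ gives $M=1$ with no slack, while the factor $d^{1-1/q'}$ enters through the degraded strong convexity constant $K=(q'-1)d^{2(1/q'-1)}$ of $h_{q'}$ with respect to $\|\cdot\|_1$ (Proposition~\ref{prop:lp-regularizer}\,(ii)). Plugging $M=1$ and this $K$ into Theorem~\ref{thm:approachability-mirror-descent} reproduces the stated $\eta_t$ and the stated bound exactly. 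So the fix for your proof is the "alternative choice of norm adapted to the two-block structure" you already suspected was needed; with the $\ell_1$/$\ell_\infty$ pairing the delicate payoff estimate becomes trivial and everything else in your write-up goes through unchanged.
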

\begin{proof}
We aim at applying Theorem~\ref{thm:approachability-mirror-descent}.
According to Proposition~\ref{prop:lp-regularizer}, because $q'\in (1,2]$, regularizer $h_{q'}$ is $(q'-1)/d^{2(1-1/q')}$ strongly-convex with respect to $\left\|\,\cdot\,\right\|_1$.
Besides, the payoff function $r$ is bounded by $1$ with respect to
$\left\|\,\cdot\,\right\|_\infty$. Indeed, for all $a\in \Delta_d$ and $\ell\in [0,1]^d$,
\[ \left\|   r(a,\ell)\right\|_{\infty}=\left\| (a\odot \ell,\ell) \right\|_{\infty}=\max_{}\left( \left\| a\odot \ell \right\|_{\infty},\left\| \ell \right\|_{\infty} \right)\leqslant 1. \]
And because $0\in \mathcal{X}$, we have the difference between the highest and the lowest values of $h_{q'}$ on its domain bounded from above as
\[ \max_{x\in \mathcal{X}}h_{q'}(x)-\min_{x\in \mathcal{X}}h_{q'}(x)=\max_{x\in \mathcal{X}}\frac{1}{2}\left\| x \right\|_{q'}^2-\min_{x\in \mathcal{X}}\frac{1}{2}\left\| x \right\|_{q'}^2=\max_{x\in \mathcal{X}}\left\| x \right\|_{q'}^2\leqslant \Delta. \]
Therefore, applying Theorem~\ref{thm:approachability-mirror-descent} with $K=(q'-1)/d^{2(1-1/q')}$, $M=1$ and norm $\left\|\,\cdot\,\right\|_1$, together with Proposition~\ref{prop:global-cost-support-function}, gives the result.
\end{proof}

\section{Online combinatorial optimization}
\label{sec:online-comb-optim}
We illustrate the flexibility of our general framework by giving an
alternative construction of an optimal algorithm in the the online
combinatorial optimization problem with full information feedback. It
is a regret minimization problem in which the actions and the payoffs
have a particular structure.  Numerous papers were written on the
topic,
including~\cite{gentile1998linear,kivinen2001relative,grove2001general,takimoto2003path,kalai2005efficient,warmuth2008randomized,helmbold2009learning,hazan2010learning}.
A minimax optimal algorithm was given in~\cite{koolen2010hedging}.  We
give below an alternative construction of such an algorithm.

Let \(d,m\geqslant 1\) be integers. Let \(\mathcal{I}=\left\{ 1,\dots,d
  \right\}\) be a finite set. The set of pure actions 
  of the Decision Maker is a set \(P\) which contains subsets of
  \(\mathcal{I}\) of cardinality \(m\). Denote \(\Delta(P)\) the unit
  simplex in $\mathbb{R}^P$ and let it be the set of
  mixed actions by identifying it to the set of probability
  distributions over $P$.
The game is played as follows. At time \(t\geqslant 1\), the Decision Maker
\begin{itemize}
\item chooses mixed action $a_t\in \Delta(P)$;
\item draws pure action $p_t\sim  a_t$;
\item observes payoff vector $v_t\in \mathbb{R}^d$;
\item gets payoff $\sum_{i\in p_t}^{}v_{ti}$.
\end{itemize}
We assume that the choice by the Environment of payoff vector \(v_t\in
\mathbb{R}^d\) does not depend on pure action \(p_t\).
The quantity to minimize is the following regret:
  \[ \operatorname{Reg}_T=\max_{p\in P}\sum_{t=1}^T \sum_{i\in p}^{}v_{ti}-\sum_{t=1}^T \sum_{i\in p_t}^{}v_{ti}. \]

This problem can be seen as a basic regret minimization problem with pure action set
\(P\), and payoff vectors \((\sum_{i\in p}^{}v_i)_{p\in P}\) which belong
to \([-m,m]^P\) as soon as we assume \(v\in [-1,1]^d\). The classical Exponential
Weights Algorithm~\citep{cesa1997analysis} would then guarantee a regret
bound of order \(m\sqrt{T\log\left| P \right|}\).  However, our goal is
to take advantage of the structure of the problem and to construct a
algorithm which guarantees a significantly tighter regret bound, of
order \(m\sqrt{T\log (d/m)}\), which is known to be minimax
optimal~\citep{koolen2010hedging}. To do so, we reduce this problem to
a well-chosen approachability game (with mixed actions, as in Section~\ref{sec:model-with-mixed}), which we now present.

Let \(A\) be the \(d\times \left| P \right|\) matrix defined by
\(A=(\mathbbm{1}_{\left\{ i\in p \right\} })_{\substack{i\in
\mathcal{I}\\p\in P}}\), and for each \(p\in P\), let
\(e_p=(\mathbbm{1}_{\left\{ i\in p\right\}})_{i\in \mathcal{I}}\in
\mathbb{R}^d\).  Let \(P\) (resp. \(\Delta(P)\)) be the set of pure
(resp. mixed) actions for the Decision Maker,
\(\mathcal{B}=[-1,1]^d\) the set of actions for the Environment,
and consider the following payoff function: 
\[ r(p,v)=v-\frac{\left< v , e_p \right> }{m}\mathbf{1}\in \mathbb{R}^d,\quad p\in P,\ v\in [-1,1]^d, \] 
 where \(\mathbf{1}=(1,\dots,1)\in
\mathbb{R}^d\). The payoff space is therefore
\(\mathcal{V}^*=\mathbb{R}^d\). 
The linear extension of the payoff function in its first variable writes
\[ r(a,v)= v-\frac{\left< v
, Aa \right> }{m}\mathbf{1},\quad a\in \Delta(P),\ v\in [-1,1]^d.\]
We now choose the generator: let \(\mathcal{X}=A(\Delta(P))\) be the image of
the simplex \(\Delta(P)\) via \(A\) seen as a linear map from
\(\mathbb{R}^P\) to \(\mathbb{R}^d\). Its properties are gathered in the
following proposition. In particular, property (\ref{item:15})
demonstrates that this choice of \(\mathcal{X}\) makes
\(I_{\mathcal{X}}^*(\bar{r}_T)\) equal to the above regret.
\begin{proposition}
\label{prop:combinatorial-Z}
\begin{enumerate}[(i)]
\item\label{item:12} $\mathcal{X}$ is the convex hull of the points $e_p$ ($p\in P$).
\item\label{item:13} $\mathcal{X}\subset m\Delta_d$.
\item\label{item:14} $\left\| \mathcal{X} \right\|_1= m$.
\item\label{item:22} $\mathcal{X}$ is a generator of $\mathcal{X}^{\circ \circ} =A(\Delta(P))^{\circ \circ}$.
\item\label{item:15} Let $(p_t)_{t\geqslant 1}$ be a sequence of pure actions
  chosen by the Decision Maker and  $(v_t)_{t\geqslant 1}$ a sequence of
  actions chosen by the Environment,
and denote $r_t=r(p_t,v_t)$ for all $t\geqslant 1$ the corresponding
payoffs. Then, for all $T\geqslant 1$,
\[ I_{\mathcal{X}}^*\left( \bar{r}_T \right)=\frac{1}{T}\operatorname{Reg}_T=\frac{1}{T}\left( \max_{p\in P}\sum_{t=1}^T \sum_{i\in p}^{}v_{ti}-\sum_{t=1}^T \sum_{i\in p_t}^{}v_{ti}\right). \]
\end{enumerate}
\end{proposition}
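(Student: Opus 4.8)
The plan is to dispatch parts~(\ref{item:12})--(\ref{item:22}) as elementary observations about the polytope $\mathcal{X}$ and to concentrate the effort on the regret identity~(\ref{item:15}). For~(\ref{item:12}), note that $e_p$ is precisely the $p$-th column of $A$, so for $a=(a_p)_{p\in P}\in\Delta(P)$ one has $Aa=\sum_{p\in P}a_pe_p$; as $a$ ranges over $\Delta(P)$ this expression ranges over all convex combinations of the $e_p$, i.e.\ over their convex hull. For~(\ref{item:13}), each $e_p$ has exactly $m$ coordinates equal to $1$ and the rest equal to $0$ (since $|p|=m$), hence $e_p\in m\Delta_d$; as $m\Delta_d$ is convex and $\mathcal{X}$ is the convex hull of the $e_p$ by~(\ref{item:12}), we get $\mathcal{X}\subset m\Delta_d$. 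Part~(\ref{item:14}) then follows immediately: every $x\in\mathcal{X}$ has nonnegative coordinates summing to $m$, so $\left\|x\right\|_1=m$, whence $\left\|\mathcal{X}\right\|_1=m$. Finally,~(\ref{item:22}) is a direct instance of Proposition~\ref{prop:generator-examples}~(\ref{item:Z-generator-of-what}), since $\mathcal{X}$ is nonempty, convex (a convex hull), and compact (the convex hull of finitely many points of $\mathbb{R}^d$), hence a generator of $\mathcal{X}^{\circ\circ}=\mathbb{R}_+\mathcal{X}$.

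For~(\ref{item:15}) I would multiply through by $T$ and use the definition of the support function to write $T\,I_{\mathcal{X}}^*(\bar{r}_T)=\max_{x\in\mathcal{X}}\left<\sum_{t=1}^Tr_t,x\right>$. Substituting $r_t=v_t-\frac{\left<v_t,e_{p_t}\right>}{m}\mathbf{1}$ and summing gives $\sum_{t=1}^Tr_t=V-\frac{S}{m}\mathbf{1}$, where $V=\sum_{t=1}^Tv_t$ and $S=\sum_{t=1}^T\left<v_t,e_{p_t}\right>=\sum_{t=1}^T\sum_{i\in p_t}v_{ti}$. Since a linear functional on a polytope attains its maximum at a vertex and $\mathcal{X}$ is the convex hull of the $e_p$ by~(\ref{item:12}), the maximum over $\mathcal{X}$ reduces to $\max_{p\in P}\left(\left<V,e_p\right>-\frac{S}{m}\left<\mathbf{1},e_p\right>\right)$. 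The key point is that $\left<\mathbf{1},e_p\right>=|p|=m$ for every $p\in P$, so the subtracted term equals $S$ for all $p$ and factors out of the maximum, giving $T\,I_{\mathcal{X}}^*(\bar{r}_T)=\max_{p\in P}\left<V,e_p\right>-S=\max_{p\in P}\sum_{t=1}^T\sum_{i\in p}v_{ti}-\sum_{t=1}^T\sum_{i\in p_t}v_{ti}=\operatorname{Reg}_T$, which after dividing by $T$ is exactly the claim.

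I do not expect a genuine obstacle. The only substantive point is the cancellation in~(\ref{item:15}), which works precisely because $|p|$ is constant over $P$; this is exactly what the centering term $-\frac{\left<v,e_p\right>}{m}\mathbf{1}$ in the payoff function was designed to achieve. The reduction of a maximum over $\mathcal{X}$ to a maximum over its vertices is standard, so the whole proposition amounts to bookkeeping organized around the judicious choices of the generator $\mathcal{X}$ and the payoff $r$.
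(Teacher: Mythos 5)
Your proof is correct and follows essentially the same route as the paper: parts (i)--(iv) are handled by the same elementary observations, and for (v) you perform the same computation (reduction of the support function to a maximum over the vertices $e_p$, followed by the cancellation coming from $\left<\mathbf{1},e_p\right>=m$), merely written in the reverse direction, starting from $I_{\mathcal{X}}^*(\bar{r}_T)$ instead of from $\operatorname{Reg}_T$.
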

\begin{proof}
By definition, $\mathcal{X}$ is the image of simplex $\Delta(P)$ via linear map $A$.
It is therefore the convex hull of the image by $A$ of the extreme points of $\Delta(P)$. And 
for $p_0\in P$, $A(\mathbbm{1}_{\left\{ p=p_0\right\} })_{p\in P}=e_p$. Hence (\ref{item:12}).
Each point $e_p$ clearly belongs to $m\Delta_d$, and (\ref{item:13}) is true by convexity of $m\Delta_d$.
For each element $x\in m\Delta_d$, we have $\left\| x \right\|_1=m$, which implies (\ref{item:14}). 
$\mathcal{X}$ is a nonempty convex compact set thanks to (\ref{item:12}); Proposition~\ref{prop:generator-examples} gives (\ref{item:22}).
As for
the relation (\ref{item:15}), we denote $A^*$ the transpose of $A$ and write
\begin{align*}
\max_{p\in P}\sum_{t=1}^T \sum_{i\in p}^{}v_{ti}-\sum_{t=1}^T \sum_{i\in p_t}^{}v_{ti} &= \max_{p\in P}\sum_{t=1}^T\left(  (A^*v_t)_p- (A^*v_t)_{p_t}\right)\\
&=\max_{a\in \Delta(P)}\sum_{t=1}^T \left(  \left< A^*v_t , a \right> -\left< A^*v_t , \left( \mathbbm{1}_{\left\{ p=p_t \right\} } \right)_{p\in P}  \right>\right)\\ 
&=\max_{a\in \Delta(P)}\sum_{t=1}^T \left(  \left< v_t , Aa \right> -\left< v_t , A\left(  \mathbbm{1}_{\left\{ p=p_t \right\} }\right)_{p\in P} \right>\right)\\
&=\max_{x\in A(\Delta(P))}\sum_{t=1}^T \left(  \left< v_t , x \right> -\left< v_t , e_{p_t} \right> \right)\\
&=\max_{x\in \mathcal{X}}\sum_{t=1}^T \left< v_t-\frac{\left< v_t , e_{p_t} \right> }{m}\mathbf{1} , x \right>\\ 
&=\max_{x\in \mathcal{X}}\sum_{t=1}^T \left< r(p_t,v_t) , x \right>\\ 
&=T\cdot I_{\mathcal{X}}^*(\bar{r}_T),
\end{align*}
where in the fifth line, we used the fact that for all $x\in
\mathcal{X}$, $\left< \mathbf{1} , x \right> =m$, which is a consequence of (\ref{item:13}).
\end{proof}

\begin{proposition}
$A(\Delta(P))^\circ$ is a B-set for the game with mixed actions $(P,[-1,1]^d,r)$.
\end{proposition}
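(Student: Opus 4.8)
The plan is to verify directly the defining property of a B-set for games with mixed actions: for every $x$ in the polar cone $\mathcal{C}^\circ$ of the target $\mathcal{C}=A(\Delta(P))^\circ$, one must exhibit an action $a(x)\in\Delta(P)$ such that $\langle r(a(x),v),x\rangle\leq 0$ for all $v\in[-1,1]^d$.

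First I would identify $\mathcal{C}^\circ$. By Proposition~\ref{prop:combinatorial-Z}, the generator $\mathcal{X}=A(\Delta(P))$ is nonempty, convex and compact, so Proposition~\ref{prop:generator-examples} (item~\ref{item:Z-generator-of-what}) gives $\mathcal{C}^\circ=A(\Delta(P))^{\circ\circ}=\mathbb{R}_+\mathcal{X}$. Fix $x\in\mathbb{R}_+\mathcal{X}$ and write $x=\lambda x_0$ with $\lambda\geq 0$ and $x_0\in\mathcal{X}$ (when $x=0$ take $\lambda=0$ and any $x_0\in\mathcal{X}$). Since $x_0\in\mathcal{X}=A(\Delta(P))$, choose $a_0\in\Delta(P)$ with $Aa_0=x_0$ and set $a(x)=a_0$. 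I claim this oracle works, and in fact that $\langle r(a_0,v),x\rangle=0$ for every $v\in[-1,1]^d$.

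The verification is a one-line computation: since $r(a_0,v)=v-\frac{\langle v,Aa_0\rangle}{m}\mathbf{1}$,
\[
\langle r(a_0,v),x\rangle=\lambda\left(\langle v,x_0\rangle-\frac{\langle v,Aa_0\rangle}{m}\,\langle\mathbf{1},x_0\rangle\right),
\]
and the bracket vanishes because $Aa_0=x_0$ and $\langle\mathbf{1},x_0\rangle=m$, the latter being exactly the statement $\mathcal{X}\subset m\Delta_d$ from Proposition~\ref{prop:combinatorial-Z}. Hence $\langle r(a(x),v),x\rangle=0\leq 0$ for all $v$, which establishes the B-set property.

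I do not anticipate a genuine obstacle. The only points requiring a little care are the reduction from $\mathcal{C}^\circ=\mathbb{R}_+\mathcal{X}$ to $\mathcal{X}$ by positive homogeneity of $x\mapsto\langle r(a,v),x\rangle$ (an arbitrary element of $\mathbb{R}_+\mathcal{X}$ need not be of the form $Aa$ with $a\in\Delta(P)$, only after rescaling), and the observation that it is precisely the normalization $\langle\mathbf{1},x_0\rangle=m$ valid on $\mathcal{X}$ which makes the centering term $\frac{\langle v,Aa_0\rangle}{m}\mathbf{1}$ in the payoff cancel against $v$. As an alternative, one could exploit the bi-affine structure of $r$ together with a minimax argument in the spirit of Proposition~\ref{prop:dual-condition}, in which case the oracle would be a vertex $\mathbbm{1}_{\{p=p^\star\}}$ of $\Delta(P)$ with $p^\star$ a maximizer of $p\mapsto\sum_{i\in p}v_i$; but the direct argument above is shorter and yields the oracle actually used by the FTRL algorithm, which simply reads off $a_t\in\Delta(P)$ from $x_t\in\mathcal{X}=A(\Delta(P))$.
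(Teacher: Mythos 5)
Your proof is correct and follows essentially the same route as the paper: reduce from $\mathcal{C}^\circ=\mathbb{R}_+\mathcal{X}$ to $\mathcal{X}$ by positive homogeneity, take $a_0$ with $Aa_0=x_0$, and observe that the inner product vanishes identically because $\langle\mathbf{1},x_0\rangle=m$ on $\mathcal{X}$. You are in fact slightly more explicit than the paper on the two points it leaves to the reader (the rescaling step and the normalization $\langle\mathbf{1},Aa\rangle=m$), which is a virtue rather than a deviation.
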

\begin{proof}
Since $\mathcal{X}$ is a generator of $A(\Delta(P))^{\circ \circ}$, one can check that the condition
that defines a B-set only needs to be verified for $x\in \mathcal{X}$. Let $x\in \mathcal{X}$. By definition
of $\mathcal{X}$, there exists $a\in \Delta(P)$ such that $x=Aa$. Then for $v\in [-1,1]^d$,
\[ \left< r(a,v) , x \right> =\left< v-\frac{\left< v , Aa \right> }{m}\mathbf{1} , Aa \right>=\left< v , Aa \right> -\left< v , Aa \right> =0,  \]
which proves the result.
\end{proof}

As a consequence of Proposition~\ref{prop:combinatorial-Z}, a point \(x\in
 \mathcal{X}\) only has nonnegative components. We can therefore define 
  \[ h(x)= \begin{cases}
   \displaystyle \sum_{i=1}^d\frac{x_i}{m}\log \frac{x_i}{m}&\text{for $x\in
   \mathcal{X}$}\\
   +\infty&\text{otherwise}.
   \end{cases} \]

\begin{proposition}
\label{prop:combinatorial-regularizer}
\begin{enumerate}[(i)]
\item\label{item:23} $h$ is a regularizer on $\mathcal{X}$;
\item\label{item:24} $\max_{x\in \mathcal{X}}h-\min_{x\in \mathcal{X}}h(x)\leqslant \log (d/m)$;
\item\label{item:25} $h$ is $1/m^2$-strongly convex with respect to $\left\|\,\cdot\,\right\|_1$.
\end{enumerate}
\end{proposition}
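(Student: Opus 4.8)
The plan is to reduce all three assertions to the properties of the entropic regularizer $h_{\mathrm{ent}}$ collected in Proposition~\ref{prop:entropy}, by exploiting the fact that on $\mathcal{X}$ the function $h$ is merely a rescaled entropy. By Proposition~\ref{prop:combinatorial-Z}, the set $\mathcal{X}$ is nonempty, convex and compact, satisfies $\mathcal{X}\subset m\Delta_d$, and its points have nonnegative components; hence $x/m\in \Delta_d$ for every $x\in \mathcal{X}$ and $h(x)=h_{\mathrm{ent}}(x/m)$ for all $x\in \mathcal{X}$. I would introduce the auxiliary function $g\colon \mathcal{V}\to \mathbb{R}\cup\{+\infty\}$ defined by $g(x)=h_{\mathrm{ent}}(x/m)$, whose domain is $m\Delta_d$, so that $h=g$ on $\mathcal{X}$ and $h=+\infty$ outside $\mathcal{X}$, i.e.\ $h=g+I_{\mathcal{X}}$ with $\mathcal{X}\subset \operatorname{dom}g$.

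For (i), I would note that $g$ is convex and lower semicontinuous as the composition of the convex lower semicontinuous function $h_{\mathrm{ent}}$ with the affine map $x\mapsto x/m$, and that $g$ is strictly convex on $m\Delta_d$ since $h_{\mathrm{ent}}$ is strictly convex on $\Delta_d$ and $x\mapsto x/m$ is an affine bijection from $m\Delta_d$ onto $\Delta_d$. Adding $I_{\mathcal{X}}$, which is convex and lower semicontinuous because $\mathcal{X}$ is closed and convex, preserves strict convexity and lower semicontinuity while forcing the domain to be exactly $\mathcal{X}$; hence $h$ is a regularizer on $\mathcal{X}$. For (iii), I would substitute $x/m$ and $x'/m$ into the strong-convexity inequality of Definition~\ref{def:strong-convexity} applied to $h_{\mathrm{ent}}$ (which is $1$-strongly convex with respect to $\left\|\,\cdot\,\right\|_1$ by Proposition~\ref{prop:entropy}) and use $\left\| x'/m-x/m \right\|_1^2=\left\| x'-x \right\|_1^2/m^2$ to conclude that $g$ is $1/m^2$-strongly convex with respect to $\left\|\,\cdot\,\right\|_1$; Lemma~\ref{lm:strong-convexity-restriction}, applied to the pair $(h,g)$, then transfers this to $h$.

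For (ii), I would use that $h$ is convex and $\mathcal{X}=\operatorname{conv}\{e_p:p\in P\}$ (Proposition~\ref{prop:combinatorial-Z}), so that $\max_{\mathcal{X}}h$ is attained at one of the $e_p$; evaluating $h(e_p)=h_{\mathrm{ent}}(e_p/m)=\sum_{i\in p}\frac{1}{m}\log \frac{1}{m}=-\log m$, since $e_p/m$ has exactly $m$ components equal to $1/m$ and the remaining ones equal to $0$. Hence $\max_{\mathcal{X}}h=-\log m$. For the minimum, $\min_{\mathcal{X}}h\geqslant \min_{x\in m\Delta_d}g=\min_{y\in \Delta_d}h_{\mathrm{ent}}(y)=-\log d$ by Proposition~\ref{prop:entropy}. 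Subtracting gives $\max_{\mathcal{X}}h-\min_{\mathcal{X}}h\leqslant -\log m+\log d=\log(d/m)$.

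I expect no genuine obstacle: the proposition becomes routine once the rescaled-entropy identity $h(x)=h_{\mathrm{ent}}(x/m)$ is in place, and the rest is a straightforward bookkeeping exercise using Proposition~\ref{prop:entropy} and Lemma~\ref{lm:strong-convexity-restriction}. The only step requiring a little care is (ii), where one must actually compute $h$ at the vertices $e_p$ of $\mathcal{X}$ rather than merely bound $h$ from above by $\max_{\Delta_d}h_{\mathrm{ent}}=0$, which would only yield the weaker estimate $\log d$ in place of $\log(d/m)$.
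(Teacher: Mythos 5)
Your proof is correct and follows essentially the same route as the paper: the identity $h(x)=h_{\mathrm{ent}}(x/m)$ on $\mathcal{X}\subset m\Delta_d$, the rescaling of the $1$-strong convexity of $h_{\mathrm{ent}}$ to get the $1/m^2$ constant (with Lemma~\ref{lm:strong-convexity-restriction} handling the restriction to $\mathcal{X}$), the evaluation of $h$ at the vertices $e_p$ to get $\max_{\mathcal{X}}h=-\log m$, and the lower bound $\min_{\mathcal{X}}h\geqslant \min_{m\Delta_d}h_{\mathrm{ent}}(\cdot/m)=-\log d$. The only difference is presentational: you make the auxiliary function $g$ and the appeal to Lemma~\ref{lm:strong-convexity-restriction} explicit where the paper leaves them implicit.
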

\begin{proof}
For $x\in \mathcal{X}\subset m\Delta_d$, we can write $h(x)=h_{\mathrm{ent}}(x/m)<+\infty$.
The $1$-strong convexity of $h_{\mathrm{ent}}$ with respect to $\left\|\,\cdot\,\right\|_1$ implies the $1/m^2$-strong convexity of
$h$ with respect to $\left\|\,\cdot\,\right\|_1$ and (\ref{item:25}) is proved. In particular, 
 $h$ is strictly convex. Besides, the domain of $h$ is $\mathcal{X}$ by definition and (\ref{item:23}) is proved.
As for (\ref{item:24}), $h$ being convex, its maximum is attained at one of the extreme points $e_p$ ($p\in P$) of $\mathcal{X}$:
\[ \max_{x\in \mathcal{X}}h(x)=\max_{p\in P}h(e_p)=\max_{p\in P}\sum_{i\in p}^{}\frac{1}{m}\log \frac{1}{m}=-\log m. \]
As for the minimum,
\[ \min_{x\in \mathcal{X}}h(x)\geqslant \min_{x\in m\Delta_d}\sum_{i=1}^d\frac{x_i}{m}\log \frac{x_i}{m}=\min_{x\in \Delta_d}\sum_{i=1}^dx_i\log x_i=-\log d. \]
Therefore, $\max_{x\in \mathcal{X}}h-\min_{x\in \mathcal{X}}h(x)\leqslant -\log m+\log d=\log (d/m)$.
\end{proof}

We can now consider the FTRL algorithm associated with
regularizer \(h\), a \((P,[-1,1]^d,r,A(\Delta(P))^\circ)\)-oracle \(a\), and a positive sequence of
parameters \((\eta_t)_{t\geqslant 1}\), for \(t\geqslant 1\),
\begin{align*}
\text{compute}\quad x_t&=\argmax_{x\in \mathcal{X}}\left\{ \left< \eta_{t-1}\sum_{s=1}^{t-1}r_s , x \right>  -h(x)\right\}\\
\text{choose}\quad a_t&=a(x_t)\\
\text{draw}\quad p_t&\sim a_t\\
\text{observe}\quad r_t&=r(p_t,v_t)=v_t-\frac{\left< v_t , Ae_{p_t} \right>}{m}\mathbf{1}.
\end{align*}
\begin{theorem}
\label{thm:combinatorial}
Against any sequence $(v_t)_{t\geqslant 1}$ in $[-1,1]^d$ chosen by the Environment, the above algorithm with parameters $\eta_t=\sqrt{\log (d/m)/4m^2t}$ (for $t\geqslant 1$)
guarantees
\[ \mathbb{E}\left[ \operatorname{Reg}_T \right]\leqslant 4m\sqrt{T\log (d/m)}. \]
For $\delta\in (0,1)$, we have with probability higher than $1-\delta$,
\[ \operatorname{Reg}_T\leqslant 2m\sqrt{T}\left( 2\sqrt{\log (d/m)}+\sqrt{2\log (1/\delta)} \right).   \]
Almost-surely,
\[ \limsup_{T\to +\infty}\frac{1}{T}\operatorname{Reg}_T\leqslant 0. \]
\end{theorem}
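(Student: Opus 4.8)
The plan is to obtain Theorem~\ref{thm:combinatorial} as a direct application of Theorem~\ref{thm:mirror-descent-approachability-mixed-actions} to the approachability game with mixed actions $(P,[-1,1]^d,r)$, with target set $A(\Delta(P))^\circ$, generator $\mathcal{X}=A(\Delta(P))$, regularizer $h$, the norm $\left\|\,\cdot\,\right\|_1$ on $\mathcal{V}=\mathbb{R}^d$ and its dual $\left\|\,\cdot\,\right\|_\infty$ on $\mathcal{V}^*=\mathbb{R}^d$; the resulting bounds on $I_{\mathcal{X}}^*(\bar r_T)$ are then translated into bounds on $\operatorname{Reg}_T$ via the identity $I_{\mathcal{X}}^*(\bar r_T)=\frac1T\operatorname{Reg}_T$ of Proposition~\ref{prop:combinatorial-Z}\,(\ref{item:15}).

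First I would collect the three constants required by Theorem~\ref{thm:mirror-descent-approachability-mixed-actions}. Proposition~\ref{prop:combinatorial-regularizer} supplies $\Delta=\log(d/m)$ as a bound on $\max_{\mathcal{X}}h-\min_{\mathcal{X}}h$ (item~(\ref{item:24})) and $K=1/m^2$ as the strong-convexity constant of $h$ with respect to $\left\|\,\cdot\,\right\|_1$ (item~(\ref{item:25})). For the payoff bound I would check that $\left\| r(a,v) \right\|_\infty\leqslant 2$ for all $a\in\Delta(P)$ and $v\in[-1,1]^d$: since $r(a,v)=v-\frac{\left< v,Aa\right>}{m}\mathbf{1}$, $\left\| v\right\|_\infty\leqslant 1$, and $Aa\in\mathcal{X}\subset m\Delta_d$ by Proposition~\ref{prop:combinatorial-Z}\,(\ref{item:13}), one has $\left|\left< v,Aa\right>\right|\leqslant\left\| v\right\|_\infty\left\| Aa\right\|_1=m$, so the subtracted vector has sup-norm at most $1$; hence $M=2$. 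With these values $\sqrt{\Delta K/M^2 t}=\sqrt{\log(d/m)/4m^2 t}$, which is exactly the stated $\eta_t$, and the algorithm of that theorem coincides with the one defined above because $a$ is a $(P,[-1,1]^d,r,A(\Delta(P))^\circ)$-oracle and $A(\Delta(P))^\circ$ is a B-set by the preceding proposition.

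The remaining steps are arithmetic. The expectation part of Theorem~\ref{thm:mirror-descent-approachability-mixed-actions} gives $\mathbb{E}\big[I_{\mathcal{X}}^*(\bar r_T)\big]\leqslant 2M\sqrt{\Delta/KT}=4m\sqrt{\log(d/m)/T}$; multiplying by $T$ and using Proposition~\ref{prop:combinatorial-Z}\,(\ref{item:15}) yields $\mathbb{E}[\operatorname{Reg}_T]\leqslant 4m\sqrt{T\log(d/m)}$. For the high-probability statement I would use $\left\| \mathcal{X}\right\|_1=m$ from Proposition~\ref{prop:combinatorial-Z}\,(\ref{item:14}), so that the theorem's bound $\frac{M}{\sqrt T}\big(2\sqrt{\Delta/K}+\left\|\mathcal{X}\right\|\sqrt{2\log(1/\delta)}\big)$ becomes $\frac{2m}{\sqrt T}\big(2\sqrt{\log(d/m)}+\sqrt{2\log(1/\delta)}\big)$, and multiplying by $T$ gives the claimed bound on $\operatorname{Reg}_T$. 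Finally, the almost-sure conclusion $\limsup_{T\to\infty}I_{\mathcal{X}}^*(\bar r_T)\leqslant 0$ translates verbatim into $\limsup_{T\to\infty}\frac1T\operatorname{Reg}_T\leqslant 0$.

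The only point requiring attention — and the thing I would flag as the main, though minor, obstacle — is the bookkeeping of the dual pairing of norms: one must work with $\left\|\,\cdot\,\right\|_1$ on $\mathcal{V}$ (so that the strong convexity of $h$ in Proposition~\ref{prop:combinatorial-regularizer} is the relevant one) together with its dual $\left\|\,\cdot\,\right\|_\infty$ on $\mathcal{V}^*$ (so that $M=2$ is the relevant payoff bound), and keep $\left\|\mathcal{X}\right\|$ consistently measured in $\left\|\,\cdot\,\right\|_1$. Once the norms are pinned down, every hypothesis of Theorem~\ref{thm:mirror-descent-approachability-mixed-actions} is provided directly by Propositions~\ref{prop:combinatorial-Z} and~\ref{prop:combinatorial-regularizer}, and nothing further needs to be proved.
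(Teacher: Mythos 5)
Your proposal is correct and follows exactly the paper's route: apply Theorem~\ref{thm:mirror-descent-approachability-mixed-actions} with $M=2$, $K=1/m^2$, $\Delta=\log(d/m)$ and $\left\|\mathcal{X}\right\|_1=m$, then convert via Proposition~\ref{prop:combinatorial-Z}\,(\ref{item:15}). The only (immaterial) difference is that you verify the payoff bound directly for mixed actions $a\in\Delta(P)$ via $\left|\left< v,Aa\right>\right|\leqslant m$, whereas the paper checks it for pure actions $p\in P$; both give $M=2$.
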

\begin{proof}
For all $v\in [-1,1]^d$ and $p\in P$,
\[ \left\| r(p,v) \right\|_{\infty}=\left\| v-\frac{\left< v , Ae_p \right> }{m}\mathbf{1} \right\|_{\infty}\leqslant \left\| v \right\|_{\infty}+\frac{\left\| \mathbf{1} \right\|_{\infty}}{m}\sum_{i\in p}^{}\left| v_i \right| \leqslant 2. \]
The result then follows from Theorem~\ref{thm:mirror-descent-approachability-mixed-actions} applied with $M=2$, $K=1/m^2$, the properties of the regularizer $h$ given by Proposition~\ref{prop:combinatorial-regularizer}, 
and the relation (\ref{item:15}) from Proposition~\ref{prop:combinatorial-Z}.
\end{proof}
%%% Local Variables:
%%% mode: latex
%%% TeX-master: "refined-approachability"
%%% End:
\section{Internal and swap regret}
\label{sec:internal-swap-regret}
We further illustrate the generality of our framework by recovering
the best known algorithms for internet and swap regret minimization.
The notion of \emph{internal regret} was introduced
by~\cite{foster1997calibrated}. It is an alternative quantity to the
usual regret. \cite{foster1997calibrated} first established the
existence of algorithms which guarantees that the average internal
regret is asymptotically nonpositive (see
also~\cite{fudenberg1995consistency,fudenberg1999conditional,hart2000simple,hart2001general,stoltz2005internal}). \cite{blum2005external}
introduced the swap regret, which generalizes
both the internal and the basic regret. The optimal bound on the swap
regret is known since~\cite{blum2005external,stoltz2005internal}. Later, \cite{perchet2015exponential}
proposed an approachability-based optimal algorithm.
We present below the construction of an algorithm similar to \cite{stoltz2005internal,perchet2015exponential} using the tools
introduced in Sections~\ref{sec:appr-conv-cones} and \ref{sec:class-foll-regul}. The internal regret is mentioned
at the end of the section as a special case.

The set of pure actions of the Decision Maker is \(\mathcal{I}=\left\{
1,\dots,d \right\}\). At time \(t\geqslant 1\), the Decision Maker
\begin{itemize}
\item chooses mixed action $a_t\in \Delta_d$;
\item draws pure action $i_t\sim a_t$;
\item observes payoff vector $v_t\in \mathbb{R}^d$.
\end{itemize}
Let \(\Phi\) be a nonempty subset of \(\mathcal{I}^{\mathcal{I}}\). The
quantity to minimize is the \(\Phi\)-regret defined by
  \[ \operatorname{Reg}_T^{\Phi}=\max_{\varphi \in \Phi}\sum_{t=1}^Tv_{t\varphi (i_t)}-\sum_{t=1}^{T}v_{ti_t}, \]
and can be interpreted as follows.
For a given map \(\varphi \in \Phi\), \(\sum_{t=1}^{T}v_{t\varphi
(i_t)}\) is the cumulative payoff that the Decision Maker would have
obtained if he had played pure action \(\varphi (i)\) each time he has 
actually played \(i\) (for all \(i\in \mathcal{I}\)). The \(\Phi\)-regret
therefore compares the actual cumulative payoff of the Decision Maker
with the best such quantity (for \(\varphi \in \Phi\)) in hindsight.
The goal is to construct an algorithm which guarantees on the
\(\Phi\)-regret a bound of order \(\sqrt{T\log \left| \Phi \right|}\). To
do so, we reduce this problem to a well-chosen approachability game
(with mixed actions as in Section~\ref{sec:model-with-mixed}), which we now present.

Let \(\mathcal{I}\) (resp. \(\Delta_d\)) be the set of pure (resp. mixed)
actions for the Decision Maker and \([-1,1]^d\) the set of actions for
the Environment.
Let the payoff space be \(\mathcal{V}^*=\mathbb{R}^{\Phi}\) and the
target set be \(\mathbb{R}_-^{\Phi}\). We choose the
following payoff function:
\[ r(i,v)=\left(v_{\varphi (i)}-v_i\right)_{\varphi \in \Phi} \in \mathbb{R}^\Phi,\quad i\in \mathcal{I},\ v\in [-1,1]^d. \]
The linear extension of
the payoff function in its first variable is
\[ r(a,v)=\left(\sum_{i\in \mathcal{I}}a_i(v_{\varphi (i)}-v_i) \right)_{\varphi \in \Phi},\quad a\in \Delta_d,\ v\in \mathbb{R}^d. \]
\begin{proposition}
  \label{prop:swap-B-set}
$\mathbb{R}_-^{\Phi}$ is a B-set for the game with mixed actions $(\mathcal{I},[-1,1]^d,r)$.
\end{proposition}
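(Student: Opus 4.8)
The plan is to verify Blackwell's condition for games with mixed actions directly from its definition. First I would identify the polar of the target: since $\mathcal{C}=\mathbb{R}_-^{\Phi}$ in the payoff space $\mathbb{R}^{\Phi}$, Proposition~\ref{prop:generator-examples}\eqref{item:orthant} gives $\mathcal{C}^\circ=\mathbb{R}_+^{\Phi}$. Thus it suffices to show that for every $x\in\mathbb{R}_+^{\Phi}$ there exists $a\in\Delta_d$ with $\left< r(a,v),x\right>\leqslant 0$ for all $v\in[-1,1]^d$. The case $x=0$ is trivial, and since the condition is unchanged when $x$ is multiplied by a positive scalar, I may assume $x$ is a probability distribution $\mu\in\Delta_{\Phi}$ on $\Phi$; in fact I will exhibit an $a$ for which $\left< r(a,v),\mu\right>=0$ for \emph{all} $v\in\mathbb{R}^d$.

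The key construction is the $d\times d$ matrix $Q=Q_\mu$ with entries $Q_{ij}=\mu\big(\{\varphi\in\Phi:\varphi(i)=j\}\big)$. Since $\sum_{j=1}^d Q_{ij}=\mu(\Phi)=1$ for each $i$, the matrix $Q$ is row-stochastic, i.e.\ the transition matrix of a Markov chain on $\mathcal{I}=\{1,\dots,d\}$. A standard fact---Brouwer's fixed point theorem applied to the continuous self-map $a\mapsto Q^{\top}a$ of the compact convex simplex $\Delta_d$ (equivalently, Perron--Frobenius for stochastic matrices)---provides a stationary distribution $a\in\Delta_d$, that is, $\sum_{i=1}^d a_i Q_{ij}=a_j$ for all $j$.

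With this $a$, the verification is a short computation. Using $\sum_{\varphi\in\Phi}\mu_\varphi=1$ and $\sum_{\varphi:\varphi(i)=j}\mu_\varphi=Q_{ij}$,
\[
  \left< r(a,v),\mu\right>
  =\sum_{\varphi\in\Phi}\mu_\varphi\sum_{i\in\mathcal{I}}a_i\big(v_{\varphi(i)}-v_i\big)
  =\sum_{i\in\mathcal{I}}a_i\sum_{j=1}^d Q_{ij}v_j-\sum_{i\in\mathcal{I}}a_i v_i
  =\sum_{j=1}^d v_j\sum_{i\in\mathcal{I}}a_iQ_{ij}-\sum_{j=1}^d a_j v_j=0,
\]
where the last equality uses the stationarity $\sum_i a_iQ_{ij}=a_j$. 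Consequently $\left< r(a,v),x\right>=\big(\sum_{\varphi}x_\varphi\big)\left< r(a,v),\mu\right>=0\leqslant 0$ for every $v\in[-1,1]^d$, which is precisely Blackwell's condition for $\mathbb{R}_-^{\Phi}$ and the game with mixed actions $(\mathcal{I},[-1,1]^d,r)$.

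The only ingredient that is not a one-line manipulation is the existence of the stationary distribution $a$, so that is where I would be careful to invoke the fixed point theorem correctly; the rest is bookkeeping. (One could instead route through Blackwell's dual condition, Proposition~\ref{prop:dual-condition}, with $\mathcal{A}=\Delta_d$ and the bi-affine $r$, but checking ``$\forall v\ \exists a:\ r(a,v)\in\mathbb{R}_-^{\Phi}$'' needs the same fixed-point input together with a minimax argument, so the direct approach above is more economical.)
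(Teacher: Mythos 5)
Your proposal is correct and follows essentially the same route as the paper: the matrix $Q_{ij}=\mu(\{\varphi:\varphi(i)=j\})$ is exactly the paper's $\tilde{x}_{ij}/\left\| x \right\|_1$, and both proofs take $a$ to be an invariant measure of this stochastic matrix and then verify $\left< r(a,v) , x \right> =0$ by the same interchange of sums. The only (cosmetic) difference is that you normalize $x$ to a probability distribution up front rather than dividing by $\left\| x \right\|_1$ at the stochastic-matrix step.
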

\begin{proof}
Let $x=(x_{\varphi })_{\varphi \in \Phi}\in
(\mathbb{R}_-^{\Phi})^\circ =\mathbb{R}_+^{\Phi}$.
Let us prove that there exists $a\in \Delta(\mathcal{I})$ such that
for all $v\in [-1,1]^d$, $\left< r(a,v) , x \right> \leqslant 0$. First, the property is
trivially true if $x=0$. We assume from now on that $x\neq 0$.

Denote
\[ \tilde{x}_{ij}=\sum_{\substack{\varphi \in \Phi\\\varphi (i)=j}}x_{\varphi },\quad i,j\in \mathcal{I} \]
and let us first prove that there exists $a\in \Delta(\mathcal{I})$
such that:
\begin{equation}
\label{eq:invariant}
\sum_{i\in \mathcal{I}}a_i\tilde{x}_{ij}=a_j\sum_{i\in \mathcal{I}}\tilde{x}_{ji},\quad j\in \mathcal{I}.
\end{equation}
Notice that for all $i\in \mathcal{I}$ we have
\[ \sum_{j\in \mathcal{I}}\tilde{x}_{ij}=\sum_{j\in
    \mathcal{I}} \sum_{\substack{\varphi\in \Phi \\ \varphi(i)=j} }x_\varphi=\sum_{\varphi\in \Phi}x_\varphi =\left\| x \right\|_1. \]
$x$ being nonzero, the above quantity is also nonzero and the $d\times
d$ matrix $(\tilde{x}_{ij}/\left\| x \right\|_1)_{i,j\in \mathcal{I}}$ is 
stochastic and therefore has an invariant measure $a\in
\Delta(\mathcal{I})$:
\[ \sum_{i\in \mathcal{I}}a_i\frac{\tilde{x}_{ij}}{\left\| x \right\|_1}=a_j,\quad j\in \mathcal{I}. \]
Multiplying on both sides by $\left\| x \right\|_1$, we get Equation\eqref{eq:invariant}:
\[ \sum_{i\in \mathcal{I}}a_i\tilde{x}_{ij}=a_j\left\| x
  \right\|_1=a_j \sum_{i\in \mathcal{I}} \sum_{\substack{\varphi\in
      \Phi \\ \varphi(j)=i} }x_\varphi=a_j\sum_{i\in \mathcal{I}}\tilde{x}_{ji},\quad j\in \mathcal{J}.  \]

Let $v\in [-1,1]^d$ and compute $\left< r(a,v) , x \right>$:
\begin{align*}
\left< r(a,v) , x \right>&=\sum_{\varphi \in \Phi} x_{\varphi }\left(  \sum_{i\in \mathcal{I}}a_i(v_{\varphi (i)}-v_i) \right)=\sum_{i,j\in \mathcal{I}}a_i(v_j-v_i) \sum_{\substack{\varphi \in \Phi\\\varphi (i)=j}}x_{\varphi }\\
&=\sum_{i,j\in \mathcal{I}}a_i(v_j-v_i)\tilde{x}_{ij}=\sum_{j\in \mathcal{I}}v_j\sum_{i\in \mathcal{I}}a_i\tilde{x}_{ij}-\sum_{i,j\in \mathcal{I}}a_iv_i\tilde{x}_{ij}\\
&=\sum_{j\in \mathcal{I}}v_ja_j\sum_{i\in \mathcal{I}}\tilde{x}_{ji}-\sum_{i,j\in \mathcal{I}}a_iv_i\tilde{x}_{ij}=0,
\end{align*}
where we used Equation~\eqref{eq:invariant} for the fifth equality.
In particular, $\left< r(a,v) , x \right> \leqslant 0$ and
$\mathbb{R}_-^{\Phi}$ is indeed a B-set for
the game with mixed actions $(\mathcal{I},[-1,1]^d,r)$.
\end{proof}

As for the generator, we choose \(\mathcal{X}=\Delta(\Phi)\) which is a generator of
\((\mathbb{R}_-^{\Phi})^\circ\) thanks to Proposition~\ref{prop:generator-examples}.
Then the support function of \(\Delta(\Phi)\) evaluated at the average
payoff is equal to the average \(\Phi\)-regret:
\begin{align*}
I_{\Delta(\Phi)}^*(\bar{r}_T)&=\frac{1}{T}I_{\Delta(\Phi)}^*\left( \sum_{t=1}^{T}r(i_t,v_t) \right)=\frac{1}{T}\max_{x\in \Delta(\Phi)}\left< \sum_{t=1}^{T}\left( v_{t\varphi (i_t)}-v_{ti_t} \right)_{\varphi \in \Phi}  , x \right>\\ 
&=\frac{1}{T}\max_{\varphi \in \Phi}\sum_{t=1}^{T} \left( v_{t\varphi (i_t)}-v_{ti_t} \right)=\frac{1}{T}\left( \max_{\varphi \in \Phi}\sum_{t=1}^{T} v_{t\varphi (i_t)}-\sum_{t=1}^{T}v_{ti_t}  \right)=\frac{1}{T}\operatorname{Reg}_T^{\Phi}.
\end{align*}
On the simplex \(\Delta(\Phi)\), we choose the entropic regularizer
presented in Section~\ref{sec:regularizers}:
\[ h_{\mathrm{ent}}(x)= \begin{cases}
\displaystyle \sum_{\varphi \in \Phi}x_{\varphi }\log x_{\varphi }&\text{if $x\in
\Delta(\Phi)$}\\
+\infty&\text{otherwise}.
\end{cases}
 \]

Then, the algorithm associated with regularizer \(h_{\mathrm{ent}}\), a
\((\mathcal{I},[-1,1]^d,r,\mathbb{R}_-^{\Phi})\)-oracle \(a\) and
a sequence of positive parameters
\((\eta_t)_{t\geqslant 1}\) is the following. For \(t\geqslant 1\),
\begin{align*}
\text{compute}\quad x_{t\varphi}  &=\frac{\exp \left( \eta_{t-1}\sum_{s=1}^{t-1}r_{s\varphi}
  \right)}{\sum_{\varphi '\in \Phi}\exp \left( \eta_{t-1}\sum_{s=1}^{t-1}r_{s\varphi'} \right)},\quad \varphi \in \Phi\\
\text{choose}\quad a_t&=a(x_t)\\
\text{draw}\quad i_t&\sim a_t\\
\text{observe}\quad r_t&=r(i_t,v_t)=\left( v_{t\varphi (i_t)}-v_{ti_t} \right)_{\varphi \in \Phi}.
\end{align*}
The expression of \(x_t\) is explicit and straightforward and the
computation of mixed action \(a_t=a(x_t)\) via oracle \(a\) consists,
as shown in the proof of Proposition ~\ref{prop:swap-B-set}, in
finding an invariant measure of a \(d\times d\) stochastic matrix,
which can be done efficiently. However, the computation of $x_t$
requires to work with $\left| \Phi \right|$ components, which can be up to
$d^d$. The algorithm from \cite{blum2005external} is much more
efficient computationally as its computational cost is polynomial in $d$.
\begin{theorem}
\label{thm:swap-regret}
Against any sequence $(v_t)_{t\geqslant 1}$ in $[-1,1]^d$ chosen by the Environment, the above algorithm with parameters $\eta_t=\sqrt{\log \left| \Phi \right| /4t}$ (for $t\geqslant 1$) guarantees
\[ \mathbb{E}\left[ \operatorname{Reg}_T^{\Phi}\right] \leqslant 4\sqrt{T\log \left| \Phi \right| }. \]
Let $\delta\in (0,1)$. With probability higher than $1-\delta$, we have
\[ \frac{1}{T}\operatorname{Reg}_T^{\Phi}\leqslant \frac{1}{\sqrt{T}}\left( 4\sqrt{\log \left| \Phi \right| }+2\sqrt{2\log (1/\delta)} \right).  \]
Almost-surely,
\[ \limsup_{T\to +\infty}\frac{1}{T}\operatorname{Reg}_T^{\Phi}\leqslant 0. \]
\end{theorem}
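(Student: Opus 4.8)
The plan is to apply Theorem~\ref{thm:mirror-descent-approachability-mixed-actions} to the approachability game with mixed actions set up above, with $\mathcal{V}=\mathcal{V}^*=\mathbb{R}^{\Phi}$ equipped with the norm $\left\|\,\cdot\,\right\|_1$ on $\mathcal{V}$ and its dual norm $\left\|\,\cdot\,\right\|_\infty$ on $\mathcal{V}^*$, the B-set $\mathbb{R}_-^{\Phi}$ (which is indeed a B-set by Proposition~\ref{prop:swap-B-set}), the generator $\mathcal{X}=\Delta(\Phi)$ of $(\mathbb{R}_-^{\Phi})^\circ=\mathbb{R}_+^{\Phi}$, and the entropic regularizer $h_{\mathrm{ent}}$ on $\Delta(\Phi)$.

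First I would verify the three hypotheses. Proposition~\ref{prop:entropy} gives $\max_{x\in\Delta(\Phi)}h_{\mathrm{ent}}(x)-\min_{x\in\Delta(\Phi)}h_{\mathrm{ent}}(x)=\log|\Phi|$, so we take $\Delta=\log|\Phi|$, and the same proposition gives that $h_{\mathrm{ent}}$ is $1$-strongly convex with respect to $\left\|\,\cdot\,\right\|_1$, so $K=1$. For the payoff bound, for every $i\in\mathcal{I}$ and $v\in[-1,1]^d$ we have $\left\|r(i,v)\right\|_\infty=\max_{\varphi\in\Phi}|v_{\varphi(i)}-v_i|\leq 2$; since the linearly extended payoff satisfies $r(a,v)=\sum_{i\in\mathcal{I}}a_ir(i,v)$, the triangle inequality yields $\left\|r(a,v)\right\|_\infty\leq 2$ for all $a\in\Delta_d$ and $v\in[-1,1]^d$, so $M=2$ is admissible. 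One then checks that the prescribed coefficients $\eta_t=\sqrt{\log|\Phi|/4t}$ are exactly $\sqrt{\Delta K/M^2t}$.

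Next I would invoke Theorem~\ref{thm:mirror-descent-approachability-mixed-actions} with these parameters and translate each of its three conclusions using the identity $I_{\Delta(\Phi)}^*(\bar{r}_T)=\frac{1}{T}\operatorname{Reg}_T^{\Phi}$ derived above. The expectation bound becomes $\mathbb{E}\bigl[I_{\Delta(\Phi)}^*(\bar{r}_T)\bigr]\leq 2M\sqrt{\Delta/KT}=4\sqrt{(\log|\Phi|)/T}$, hence after multiplying by $T$, $\mathbb{E}[\operatorname{Reg}_T^{\Phi}]\leq 4\sqrt{T\log|\Phi|}$. For the high-probability bound one additionally uses $\left\|\mathcal{X}\right\|_1=\left\|\Delta(\Phi)\right\|_1=1$ (every point of a simplex has unit $\ell_1$-norm), so that with probability at least $1-\delta$, $\frac{1}{T}\operatorname{Reg}_T^{\Phi}=I_{\Delta(\Phi)}^*(\bar{r}_T)\leq\frac{M}{\sqrt{T}}\bigl(2\sqrt{\Delta/K}+\left\|\mathcal{X}\right\|_1\sqrt{2\log(1/\delta)}\bigr)=\frac{1}{\sqrt{T}}\bigl(4\sqrt{\log|\Phi|}+2\sqrt{2\log(1/\delta)}\bigr)$. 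Finally the almost-sure statement is exactly the last conclusion of the theorem, $\limsup_{T\to+\infty}\frac{1}{T}\operatorname{Reg}_T^{\Phi}=\limsup_{T\to+\infty}I_{\Delta(\Phi)}^*(\bar{r}_T)\leq 0$.

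The hard part is not in this proof: the substance was already placed in Proposition~\ref{prop:swap-B-set} (the invariant-measure construction showing $\mathbb{R}_-^{\Phi}$ is a B-set) and in the reduction identity $I_{\Delta(\Phi)}^*(\bar{r}_T)=\frac{1}{T}\operatorname{Reg}_T^{\Phi}$. What remains here is bookkeeping of constants; the only point that needs a little care is checking hypothesis (iii) for the \emph{linearly extended} payoff function rather than for $r(i,v)$, and making sure the $\left\|\,\cdot\,\right\|_1$/$\left\|\,\cdot\,\right\|_\infty$ duality pairing is the one compatible both with the strong-convexity constant of $h_{\mathrm{ent}}$ and with the payoff bound.
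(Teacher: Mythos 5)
Your proof is correct and follows exactly the paper's route: apply Theorem~\ref{thm:mirror-descent-approachability-mixed-actions} with $M=2$, $K=1$, $\Delta=\log\left|\Phi\right|$ (from Proposition~\ref{prop:entropy}), and translate via the identity $I_{\Delta(\Phi)}^*(\bar{r}_T)=\frac{1}{T}\operatorname{Reg}_T^{\Phi}$. The paper's proof is just a terser version of the same argument; your extra care about bounding the linearly extended payoff and computing $\left\| \Delta(\Phi) \right\|_1=1$ fills in details the paper leaves implicit.
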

\begin{proof}
For every payoff vector $v\in [-1,1]^d$ and pure action $i\in \mathcal{I}$,
\[ \left\| r(i,v) \right\|_{\infty}=\left\| (v_{\varphi (i)}-v_i)_{\varphi \in \Phi} \right\|_{\infty}\leqslant 2. \]
The result then follows from Theorem~\ref{thm:mirror-descent-approachability-mixed-actions} applied with $M=2$, $K=1$ and the properties of 
regularizer $h_{\mathrm{ent}}$ given by Proposition~\ref{prop:entropy}.
\end{proof}

An important special case is when
\(\Phi\) is the set of all transpositions of \(\mathcal{I}\), in other
words, the set of maps \(\varphi :\mathcal{I}\to \mathcal{I}\) such that
there exists \(i\neq j\) in \(\mathcal{I}\) such that
\[ \varphi (i)=j,\quad \varphi (j)=i,\quad \text{and}\quad \varphi (k)=k \text{ for
    all $k\not \in \left\{ i,j \right\} $}. \]
The
\(\Phi\)-regret is then called the \emph{internal regret} and can be written
\[ \max_{i,j\in \mathcal{I}}\sum_{t=1}^T\mathbbm{1}_{\left\{ i_t=i \right\} }(v_{tj}-v_{ti}). \]
Since \(\left| \Phi \right| =d(d-1)\) in this case, Theorem~\ref{thm:swap-regret} assures that the corresponding algorithm
guarantees a bound on the internal regret of order \(\sqrt{T\log d}\).

%%% Local Variables:
%%% mode: latex
%%% TeX-master: "refined-approachability"
%%% End:

%%% Local Variables:
%%% mode: latex
%%% TeX-master: "refined-approachability"
%%% End:

\end{document}